\documentclass[11pt]{article}
\usepackage[letterpaper,margin=1in]{geometry}
\usepackage{times}
\usepackage[T1]{fontenc}
\usepackage[round]{natbib}
\usepackage{authblk}
\usepackage{amsmath,amssymb,amsthm,graphicx,booktabs}
\usepackage{xcolor}
\usepackage{colortbl}
\definecolor{toySourceOne}{HTML}{376BA2}
\definecolor{toySourceTwo}{HTML}{008477}
\definecolor{toySourceThree}{HTML}{8064A2}
\newcommand{\toyone}[1]{\cellcolor{toySourceOne!16}#1}
\newcommand{\toytwo}[1]{\cellcolor{toySourceTwo!16}#1}
\newcommand{\toythree}[1]{\cellcolor{toySourceThree!16}#1}

\usepackage{algorithm,algpseudocode}
\usepackage{etoc}
\usepackage{hyperref,url}
\hypersetup{hidelinks,pdftitle={Identifying Neural Source Dynamics from Unknown Local Interventions},pdfauthor={Ayana Mussabayeva, Jiaqi Sun, Anuar Aimoldin, Olivier Oullier, Kun Zhang},pdfsubject={Anatomically calibrated identification of linear source dynamics}}
\newtheorem{theorem}{Theorem}
\newtheorem{proposition}[theorem]{Proposition}
\newtheorem{lemma}[theorem]{Lemma}
\newtheorem{corollary}[theorem]{Corollary}
\theoremstyle{definition}

\DeclareMathOperator{\rank}{rank}
\DeclareMathOperator{\diag}{diag}

\newcommand{\R}{\mathbb R}
\newcommand{\Frob}[1]{\lVert #1\rVert_{\mathrm F}}
\title{Identifying Neural Source Dynamics\\from Unknown Local Interventions}
\author[1]{Ayana Mussabayeva}
\author[2]{Jiaqi Sun}
\author[1]{Anuar Aimoldin}
\author[1]{Olivier Oullier}
\author[1,2]{Kun Zhang}
\affil[1]{Mohamed bin Zayed University of Artificial Intelligence (MBZUAI), Abu Dhabi, UAE}
\affil[2]{Carnegie Mellon University (CMU), Pittsburgh, PA, USA}
\affil[ ]{\small\texttt{\{ayana.mussabayeva,anuar.aimoldin,olivier.oullier,kun.zhang\}@mbzuai.ac.ae}\\\texttt{jiaqisun@andrew.cmu.edu}}
\date{}
\begin{document}
\etocdepthtag.toc{main}
\maketitle

\begin{abstract}
Electroencephalography (EEG) records mixtures of brain-source activity. Even with a known anatomical forward model, experiments that excite only part of the source-state space leave the dynamics unidentified, and repetition cannot resolve the ambiguity. We show that unknown local mechanism changes can supply the missing information. We consider linear dynamics among fixed anatomical sources with known source-state initialization patterns. Changing one source's update rule for one transition leaves a rank-one, source-specific signature in subsequent EEG: subtracting matched baseline responses isolates it, and the forward model identifies the source and calibrates its response history. Combining these histories with initialization responses recovers source interactions without baseline reachability and without first identifying the intervention coefficients. We establish sufficient recovery conditions, a direct estimator, and a noise-sensitivity bound conditional on correct source labels. Simulated EEG on anatomy derived from magnetic resonance imaging confirms the information gain: with baseline excitation confined to four of twelve source coordinates, eight unknown changes recover all dynamics in 32/32 systems, whereas baseline realization, baseline regression through an invertible forward model, and changes that leave the tested states unexposed all fail, and explicitly constructed alternative dynamics reproduce every baseline mean. Where baseline information suffices, direct reconstruction is also more reliable than a matched-information spectral estimator. Nonlocal changes and forward-model error limit accuracy even when source labels are correct.
\end{abstract}

\section{Introduction}
\label{sec:v6introduction}

Learning neural interactions requires more than knowing how sources appear at the sensors. The known anatomical forward model, or \emph{leadfield}, maps source activity to electroencephalography (EEG). Yet distinct dynamics can produce identical mean responses if initialized states and their baseline evolution explore only part of the source space. More repetitions improve precision but cannot resolve this ambiguity, and neither can a better readout: even exact recovery of mean source states leaves the dynamics on unexcited directions undetermined. Classical realization theory makes this limit explicit, since full-order recovery from input--output data requires reachability as well as observability \citep{ho1966}.

Perturbational neuroscience offers a way past it. A local intervention changes how one circuit element responds and can push activity into directions that baseline experiments never reach \citep{wagenmaker2024active,premoli2014paired}. In EEG, however, its effect arrives mixed across sensors, and the intervention itself is rarely calibrated: which anatomical source it altered, and by how much, are unknown. We ask: \emph{can local mechanism changes reveal dynamics absent from baseline mean responses and identify them in anatomical coordinates, even when their targets and coefficients are unknown?}

For example, initializing source 1 leaves source 2's mean activity at zero if baseline dynamics never transfer activity to it; no repetition budget or sensor coverage then reveals how source 2 acts on the rest. Strengthening that connection for a single transition exposes source 2's influence. We study linear source dynamics among fixed anatomical sources with known initialization patterns and an unknown change to one source's receiving rule for one transition. Unlike an additive pulse, this change acts on the current state; the original dynamics resume afterwards.

The key insight is that \emph{a source-response history can be identifiable even when the intervention itself is not}. Subtracting matched baseline from intervened means isolates a displacement at one source and its subsequent propagation. For each intervention mode, conditions change the displacement's magnitude but not the response shape. This rank-one structure reveals the history up to scale; its immediate sensor pattern identifies the source and fixes that scale through the leadfield. Combining these calibrated histories with initialization responses then recovers the dynamics under source-coverage and temporal-observability conditions, without full baseline reachability and without first recovering the intervention coefficients.

Our contribution is a constructive identification result and an estimator that realizes it. Sufficient conditions separate intervention exposure, source coverage, anatomical distinguishability and temporal observability; none requires full baseline reachability (Section~\ref{sec:v6identification}). A direct estimator extracts and combines calibrated histories, with a dynamics-error bound that separates coverage conditioning from weak temporal observability (Section~\ref{sec:v6estimation}). Simulated EEG on four MRI-derived Localize-MI anatomies \citep{mikulan2020} confirms the information gain: with baseline excitation confined to four of twelve source coordinates at every horizon, eight unknown local changes recover all twelve-source dynamics in 32/32 systems, whereas baseline realization, baseline regression through an invertible leadfield, and changes that leave the tested states unexposed all fail in 0/32, and explicitly constructed alternative dynamics reproduce every baseline mean (Section~\ref{sec:v6partial}). Where baseline information suffices, direct reconstruction remains more reliable than a matched-information spectral factorization (155/160 versus 141/160) and initializes likelihood fitting (Section~\ref{sec:v6spectral}). We close by delimiting what the guarantees do not cover: noise, nonlocal changes and forward-model error, under which correct source labels need not imply correct dynamics, and what recorded EEG can currently test (Section~\ref{sec:v6controls}). Our target is dynamics among predefined anatomical sources, not unrestricted spatial localization; the construction applies to any known linear observation map under the same conditions, although only EEG is examined here.

\section{Related work}
\label{sec:v6related}

Identifying hidden-source dynamics also requires fixing their coordinates. Non-Gaussian state-space models provide observational identification \citep{zhang2011}, while interventional causal representation learning includes unknown-target settings under assumptions on mixing and mechanism changes \citep{squires2023,zhang2024}; temporal context can resolve ambiguities under noninvertible observations \citep{chen2024}. We instead use a known forward model to fix anatomical source labels and units, so no statistical assumption on the mixing is needed.

Fixed coordinates, however, do not ensure sufficient excitation, and neither classical nor interventional identification covers our regime. Input--output realization requires reachability and observability for full-order recovery \citep{ho1966}; interventional identification through varying input distributions assumes controllability \citep{rajendran2024}. Our local receiving-mechanism contrasts remove the reachability requirement: the leadfield labels and scales each exposed direction, so unknown changes can complete a coverage condition that baseline inputs cannot. The matched-information spectral-local comparator instead factors baseline responses into a latent realization before imposing anatomical constraints, and so inherits the reachability requirement.

Known-leadfield state-space methods estimate source interactions by integrating observation physics with latent dynamics \citep{cheung2010statespace,soleimani2022nlgc}; our contrasts can initialize such iterative likelihood fitting (Appendix~\ref{app:v6em}). Dynamic causal modelling distinguishes driving inputs from coupling modulation \citep{friston2003}; our model restricts modulation to an unknown change in one source's receiving rule for one transition, followed by the same baseline dynamics. Appendix~\ref{app:v6related} provides further comparisons.

\section{Problem formulation}
\label{sec:v6setup}

Let $z_\tau\in\R^q$ contain amplitudes at $q$ fixed source locations and orientations, and $x_\tau\in\R^m$ contain $m$ independent EEG contrasts at episode time $\tau$.

We study
\begin{equation}
 z_{\tau+1}=(F+D_{e_\tau})z_\tau+\epsilon_\tau,
 \qquad x_\tau=Lz_\tau+\nu_\tau.
 \label{eq:v6model}
\end{equation}
The unknown $F\in\R^{q\times q}$ is the baseline transition matrix. The known leadfield $L\in\R^{m\times q}$ fixes source ordering and units; its column $L_{:,j}$ is the sensor pattern of a unit source $j$. A colon selects all entries on that axis. Our principal setting has $\rank L<q$, so one sensor observation need not determine the state.

$F$ describes effective interactions within a fixed operating regime, not globally linear brain dynamics. Linear models are used for EEG and magnetoencephalography (MEG) source dynamics \citep{zhang2011,cheung2010statespace} and optogenetically evoked population responses \citep{wagenmaker2024active}. Locally, $z_\tau$ may represent deviations from a common equilibrium, with $F$ the Jacobian of a smooth baseline map, provided trajectories remain where the Jacobian varies little. Our guarantees assume the stated linear model and intervention protocol; our simulations do not test nonlinear source dynamics.

\begin{figure}[t]
 \centering
 \includegraphics[width=\linewidth]{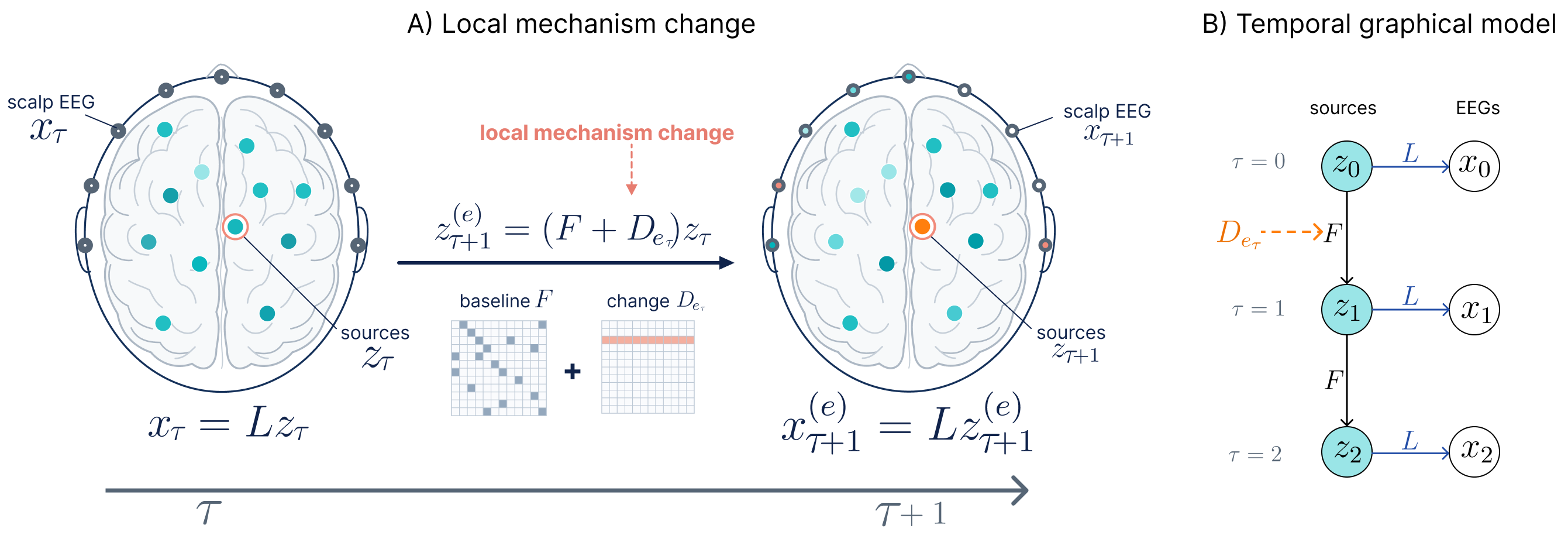}
 \caption{Source dynamics and EEG. (A) One receiving row of $F$ changes; $L$ stays fixed. (B) The first transition uses $F+D_{e_0}$; $F$ then resumes without resetting the state. Nodes in (B) are vectors of source amplitudes $z_\tau$ or independent EEG contrasts $x_\tau$. Intervention superscripts are omitted from labels; colors and entries are schematic; noise is omitted.}
 \label{fig:v6state-change}
\end{figure}

\paragraph{Initialization and intervention.}
Episodes start with $\mathbb E[z_0\mid u]=Ku$, where known $K\in\R^{q\times r}$ specifies source-state patterns and $u\in\R^r$ their controlled amplitudes. There is no subsequent baseline additive drive. The exogenous schedule selects baseline $D_0=0$ or one of $E$ local modes:
\begin{equation}
 D_e=e_{j_e}v_e^\top,\qquad v_e\in\R^q\setminus\{0\}.
 \label{eq:v6row}
\end{equation}
Here $e_j$ is the $j$th coordinate vector; both target $j_e$ and coefficients $v_e$ are unknown. The state is unchanged at onset, one transition uses $F+D_e$, and all other transitions use the same $F$. The resulting state is not reset; $L$ stays fixed. Innovations and sensor noise have zero conditional mean given initialization and schedule. Known $K$ means calibrated source states, not merely a known stimulation contact.

\paragraph{Observed response matrices.}
Choose an output horizon $T\ge2$ and insertion times $\tau_{\mathrm{int}}=0,\ldots,s-1$, with $s\ge1$. For each unit input $u^{(i)}\in\R^r$, stack the mean EEG at times $\tau_{\mathrm{int}}+1+t$, $t=0,\ldots,T-1$, after inserting mode $e$. This is one column of $H^{[e]}\in\R^{mT\times rs}$: rows index sensors and output lags, columns index initialization and insertion time. Trial averages estimate $H^{[e]}$ without observing $z$ or knowing $F$. The matched baseline $H_+$ uses the same inputs and observation times without intervention; $H_0$ uses baseline times one sample earlier (Appendix~\ref{app:v6setup}).

To express these measured histories through the model, define
\begin{equation}
 O_T=\begin{bmatrix}L\\LF\\\vdots\\LF^{T-1}\end{bmatrix}\in\R^{mT\times q},
 \qquad R_s=[K,FK,\ldots,F^{s-1}K]\in\R^{q\times rs}.
 \label{eq:v6histories}
\end{equation}
$O_T$ contains unit-source histories; $R_s$ contains mean states reached by baseline initializations. Then
\begin{equation}
 H_0=O_TR_s,\qquad H_+=O_TFR_s,\qquad
 H^{[e]}=O_T(F+D_e)R_s.
 \label{eq:v6blocks}
\end{equation}
For example, initializing with $u^{(i)}$ and changing the first transition gives the first post-transition sample $x_1$ with mean $L(F+D_e)K_{:,i}$. Its matched baseline mean is $LFK_{:,i}$; their difference is $LD_eK_{:,i}$. Later samples track the propagation of that difference by $F$. Thus the $H$ matrices store measured conditional means, whereas $O_T$ describes unit-source responses that must be recovered. Shared baseline lags reuse the same observations and are not independent entries.

\section{Identification from local interventions}
\label{sec:v6identification}

\subsection{Identifying source-response histories}

What can an unknown local change reveal? It creates a displacement in one source coordinate relative to baseline. Its amplitude depends on the tested state, but its subsequent propagation follows the same $F$. Consequently,
\begin{equation}
 \Delta H_e:=H^{[e]}-H_+
 =O_TD_eR_s
 =\underbrace{O_Te_{j_e}}_{o_{j_e}}\,
  \underbrace{v_e^\top R_s}_{\psi_e^\top}.
 \label{eq:v6contrast}
\end{equation}
Here $o_j\in\R^{mT}$ is the unit-source history and $\psi_e\in\R^{rs}$ is the intervention's exposure across conditions. If $\psi_e\ne0$ and $L_{:,j_e}\ne0$, the contrast has rank one and identifies the line spanned by $o_{j_e}$. Its first sensor block is proportional to $L_{:,j_e}$. Nonzero, pairwise nonproportional leadfield columns identify the target and calibrate the entire history. The recoverable object is therefore the response history, not necessarily the intervention row: distinct coefficients with the same action on $R_s$ give the same contrast.

\subsection{From histories to dynamics}

When do the recovered histories determine $F$? Let $J$ be the distinct exposed targets and $E_J=[e_j]_{j\in J}$. Collect the available directions and their responses:
\begin{equation}
 \Omega=[K,E_J],\qquad Y=[O_TK,O_TE_J]=O_T\Omega.
 \label{eq:v6coverage}
\end{equation}
The first $r$ columns of $H_0$ supply $O_TK$; calibrated contrasts supply $O_TE_J$. Thus $Y$ is known without the intervention coefficients. The anchoring matrix $\Omega\in\R^{q\times(r+|J|)}$ measures source-coordinate coverage. Removing the last or first sensor block from $O_T$ gives
\[
 O_-=[L^\top,(LF)^\top,\ldots,(LF^{T-2})^\top]^\top,\qquad
 O_+=[(LF)^\top,\ldots,(LF^{T-1})^\top]^\top.
\]
Both have size $m(T-1)\times q$, and $O_+=O_-F$: each temporal block in $O_+$ is the corresponding block in $O_-$ multiplied by $F$. For $T=2$, these reduce to $L$ and $LF$.

\begin{theorem}[Constructive anatomical identification]
\label{thm:v6identification}
Under the model and schedule of Section~\ref{sec:v6setup}, suppose the columns of $L$ are nonzero and pairwise nonproportional. Selected changes with $v_e^\top R_s\ne0$ identify their targets and columns $O_Te_{j_e}$. If
\begin{equation}
 \rank\Omega=q,\qquad \rank O_-=q,
 \label{eq:v6conditions}
\end{equation}
the measured responses uniquely determine
\begin{equation}
 O_T=Y\Omega^\dagger,\qquad F=O_-^\dagger O_+.
 \label{eq:v6reconstruction}
\end{equation}
Here $\dagger$ is the Moore--Penrose pseudoinverse. Full rank of $R_s$ is not required.
\end{theorem}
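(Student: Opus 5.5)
The proof has three steps: recover each history from its rank-one contrast, assemble $Y$ and solve for $O_T$, then read off $F$ from the shift structure. All ingredients are exact identities in the mean response matrices of \eqref{eq:v6blocks}.

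\textbf{Step 1: targets and histories.} Fix a selected mode $e$ with $\psi_e:=R_s^\top v_e\ne0$. By \eqref{eq:v6contrast}, $\Delta H_e=H^{[e]}-H_+=o_{j_e}\psi_e^\top$ is observable. Because $o_{j_e}$ has first block $L_{:,j_e}\ne0$, the factor $o_{j_e}$ is nonzero, so $\Delta H_e$ has rank exactly one. Its column space is therefore $\operatorname{span}(o_{j_e})$, obtained for instance as the span of any nonzero column. Take any nonzero column $c$ of $\Delta H_e$. Then $c=\alpha o_{j_e}$ for some $\alpha\ne0$, and its first sensor block is $c_{1}=\alpha L_{:,j_e}$. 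Since the columns of $L$ are nonzero and pairwise nonproportional, there is exactly one index $j$ with $c_1\in\operatorname{span}(L_{:,j})$, so $j_e$ is identified. The scalar $\alpha$ is then fixed by $c_1=\alpha L_{:,j_e}$ (e.g.\ $\alpha=L_{:,j_e}^\top c_1/\|L_{:,j_e}\|^2$), which yields $o_{j_e}=c/\alpha$. Nothing here uses $v_e$ beyond $\psi_e\ne0$. So the recovered object is independent of which coefficients produced it, and modes sharing a target give the same history.

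\textbf{Step 2: recovering $O_T$.} The first $r$ columns of $H_0=O_TR_s$ equal $O_TK$, because the first block of $R_s$ is $K$. Together with Step 1 this gives $Y=O_T\Omega$ entirely from measured means. If $\rank\Omega=q$, then $\Omega$ has full row rank, so $\Omega\Omega^\dagger=I_q$. Hence $Y\Omega^\dagger=O_T\Omega\Omega^\dagger=O_T$. For uniqueness, suppose a second admissible pair $(F',O_T')$ reproduces all measured means. It must also reproduce $O_TK$. The same source-labelling argument shows it produces the same targets and histories, so it has the same $Y$, and $O_T'\Omega=Y$ forces $O_T'=O_T$.

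\textbf{Step 3: recovering $F$.} With $O_T$ known, the blocks $O_-$ and $O_+$ are known, and $O_+=O_-F$ holds block by block. If $\rank O_-=q$, the matrix $O_-$ has full column rank, so $O_-^\dagger O_-=I_q$ and $F=O_-^\dagger O_+$. This $F$ is unique because any $F'$ consistent with the data has the same $O_T$ and must satisfy $O_-F'=O_+$, whose solution is unique under full column rank.

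Finally, neither step invoked $\rank R_s=q$. $R_s$ entered only through $H_0$'s leading block $K$ and through the nonvanishing of $\psi_e$, so full rank of $R_s$ is not required. The step needing the most care is uniqueness in Step 1: I must check that an alternative model consistent with the data cannot assign a different target or history. This follows because the measured contrast itself is model-free. Its rank-one column span and first-block direction are data, and they pin down $j_e$ and the scale regardless of the model.
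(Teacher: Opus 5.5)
Your proposal is correct and follows the paper's own route. The rank-one contrast factorization (Lemma~\ref{app:v6lem:contrast}) labels and scales each exposed history through its first sensor block. Full row rank of $\Omega$ then gives $\Omega\Omega^\dagger=I_q$, hence $Y\Omega^\dagger=O_T$, and full column rank of $O_-$ inverts the shift identity $O_+=O_-F$. Your explicit argument that any alternative model matching the measured means must yield the same labels, histories, $O_T$ and $F$ spells out the uniqueness the paper leaves implicit, but it adds no new ingredient.
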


\begin{proof}[Proof sketch]
Equation~\eqref{eq:v6contrast} supplies labelled, scaled columns. Full row rank of $\Omega$ gives $\Omega\Omega^\dagger=I_q$, recovering $O_T$. Full column rank of $O_-$ then solves the shift identity for $F$. Appendix~\ref{app:v6theory} gives full proofs.
\end{proof}

The conditions separate exposure ($v_e^\top R_s\ne0$), anatomical distinguishability through $L$, source coverage ($\rank\Omega=q$), and temporal observability ($\rank O_-=q$). Exposure produces a contrast; distinguishability identifies and calibrates its source. Coverage combines targets with possibly mixed initialization patterns, while observability lets temporal histories distinguish states that a single observation cannot. These conditions presuppose calibrated $L,K$, matched episodes and single-transition receiving-row changes; estimated ranks are diagnostics, not certificates.

These conditions do not require full baseline reachability. Baseline realization factors $H_0=O_TR_s$ and needs both factors to have rank $q$; our construction needs only each selected change to act on a tested state, $v_e^\top R_s\ne0$, and the revealed targets to complete $\Omega$. A change can open a path into an unreachable coordinate; suppressing an entirely unreachable row cannot. Appendix~\ref{app:v6gauge} characterizes the residual coordinate family when anchoring is incomplete.

Full column rank of $L$ makes instantaneous states distinguishable and ensures $\rank O_-=q$, but leaves dynamics on unexcited directions undetermined: an invertible readout recovers states, not the rows of $F$ that no tested state ever exercises. Section~\ref{sec:v6partial} tests this distinction empirically; Appendix~\ref{app:v6injective} treats square systems.

\paragraph{Illustration: recovery without baseline reachability.}
Consider $q=3$, $m=2$, $K=e_1$, $T=3$ and $s=2$. Choose dynamics with $F^\tau K=2^{-\tau}e_1$, so baseline means leave sources 2 and 3 unexcited. Two changes with unknown targets and coefficients reveal $o_2$ and $o_3$ after anatomical calibration. One contrast and the completed histories are:
\begin{equation}
 \renewcommand{\toyone}[1]{\cellcolor[HTML]{80B3E6}#1}
 \renewcommand{\toytwo}[1]{\cellcolor[HTML]{D6EEC8}#1}
 \renewcommand{\toythree}[1]{\cellcolor[HTML]{F6B6AA}#1}
\begingroup
\setlength{\arraycolsep}{3pt}
\renewcommand{\arraystretch}{1.08}
R_2=\left[\begin{array}{cc}
 \toyone{1}&\toyone{0.5}\\
 0&0\\
 0&0
\end{array}\right],\quad
\Delta H_1=\left[\begin{array}{cc}
 \toytwo{0}&\toytwo{0}\\
 \toytwo{2}&\toytwo{1}\\\hline
 \toytwo{2}&\toytwo{1}\\
 \toytwo{0}&\toytwo{0}\\\hline
 \toytwo{1}&\toytwo{0.5}\\
 \toytwo{0}&\toytwo{0}
\end{array}\right],\quad
L=\left[\begin{array}{ccc}
 \toyone{1}&\toytwo{0}&\toythree{1}\\
 \toyone{0}&\toytwo{1}&\toythree{1}
\end{array}\right],\quad
O_3=\left[\begin{array}{ccc}
 \toyone{1}&\toytwo{0}&\toythree{1}\\
 \toyone{0}&\toytwo{1}&\toythree{1}\\\hline
 \toyone{0.5}&\toytwo{1}&\toythree{0}\\
 \toyone{0}&\toytwo{0}&\toythree{1}\\\hline
 \toyone{0.25}&\toytwo{0.5}&\toythree{1}\\
 \toyone{0}&\toytwo{0}&\toythree{0}
\end{array}\right].
\endgroup

 \label{eq:v6worked-matrices}
\end{equation}
Blue, green and red identify sources 1, 2 and 3, respectively. Columns of $R_2,\Delta H_1$ index insertion times 0 and 1; columns of $L,O_3$ index sources. Horizontal rules separate the two-sensor output-lag blocks. The displayed contrast reveals $o_2$; the second mode supplies $o_3$.

Here $\rank R_s=1$ at every horizon and $\rank L=2$, yet $\rank O_-=3$. The known initialization and identified targets give $\Omega=I_3$, so Theorem~\ref{thm:v6identification} uniquely recovers $F$ while some intervention coefficients remain undetermined. The theorem permits mixed initialization patterns and any dynamics satisfying its rank conditions. This illustration uses exact means; Appendix~\ref{app:v6worked-example} gives the generating system and complete calculation. Section~\ref{sec:v6partial} repeats the same construction on twelve-source anatomical systems with noise.

\section{Estimation and stability}
\label{sec:v6estimation}

\subsection{Direct contrast reconstruction}
\label{sec:v6direct-estimation}

Equation~\eqref{eq:v6contrast} makes every nonzero population contrast rank one, with all columns proportional to one source history. Noise breaks this exact structure. We estimate the common direction using the leading unit left singular vector $\widehat u_e$ of the measured contrast, obtained by singular value decomposition (SVD). Its first sensor block $\widehat u_{e,\mathrm{top}}\in\R^m$ is used to estimate the source label and calibrate the history:
\begin{equation}
 \widehat j_e=\arg\max_j
 \frac{|\widehat u_{e,\mathrm{top}}^\top L_{:,j}|}
 {\|\widehat u_{e,\mathrm{top}}\|_2\|L_{:,j}\|_2},
 \quad
 \gamma_e=\frac{\widehat u_{e,\mathrm{top}}^\top L_{:,\widehat j_e}}
 {\|\widehat u_{e,\mathrm{top}}\|_2^2},
 \quad \widehat o^{(e)}_{\widehat j_e}=\gamma_e\widehat u_e.
 \label{eq:v6calibration}
\end{equation}
Absolute alignment handles the SVD sign ambiguity; the signed scale restores source units. Algorithm~\ref{alg:v6direct} combines the calibrated histories with initialization responses, completes $O_T$, and recovers $F$ through the temporal shift. It retains one paired column per intervention, including repeated targets. Here $\operatorname{SVD}_1$ returns the leading unit left singular vector and singular value; $\epsilon_\Delta,\epsilon_u$ guard against a degenerate contrast or first sensor block.

\begin{algorithm}[tbp]
\caption{Direct contrast reconstruction of anatomical source dynamics}
\label{alg:v6direct}
\begin{algorithmic}[1]
\Require Known $L,K$, measured $\widehat H_0,\widehat H_+,\{\widehat H^{[e]}\}_{e=1}^E$, thresholds $\epsilon_\Delta,\epsilon_u$.
\State $\Omega\gets K$,\quad $\widehat Y\gets$ first $r$ columns of $\widehat H_0$
\For{$e=1,\ldots,E$}
 \State $(\widehat u_e,\widehat\sigma_e)\gets\operatorname{SVD}_1(\widehat H^{[e]}-\widehat H_+)$
 \State \textbf{if} $\widehat\sigma_e\le\epsilon_\Delta$ \textbf{or} $\|\widehat u_{e,\rm top}\|_2\le\epsilon_u$: \Return \textsc{Invalid}
 \State Compute $\widehat j_e,\gamma_e$ using Eq.~\eqref{eq:v6calibration}
 \State $\Omega\gets[\Omega,e_{\widehat j_e}]$,\quad $\widehat Y\gets[\widehat Y,\gamma_e\widehat u_e]$
\EndFor
\State \textbf{if} $\rank\Omega<q$: \Return \textsc{Invalid}
\State $\widehat O_T\gets\widehat Y\Omega^\dagger$, set its first block to $L$ and form $\widehat O_-,\widehat O_+$
\State \textbf{if} $\rank\widehat O_-<q$: \Return \textsc{Invalid} \textbf{else}: \Return $\widehat F=\widehat O_-^\dagger\widehat O_+$
\end{algorithmic}
\end{algorithm}

Theorem~\ref{thm:v6identification} needs only a sufficient selected set of changes; the tested implementation uses all supplied modes, performs no mode selection, and returns an invalid fit when any mode fails its numerical guard. These guards are not statistical detection thresholds (Appendix~\ref{app:v6algorithm}); invalid fits count as failures.

\subsection{Conditional stability}
\label{sec:v6conditional-stability}

Correctly labelled histories can still yield inaccurate dynamics when noise is amplified during reconstruction: when the available directions cover some source combinations only weakly, or when distinct source states produce similar sensor histories. The following bound separates these two effects. Let $\|\cdot\|_2$ denote spectral norm for matrices and Euclidean norm for vectors, and $\sigma_{\min}$ the smallest singular value.
\begin{proposition}[Conditional reconstruction stability]
\label{prop:v6stability}
Under Eq.~\eqref{eq:v6conditions}, conditional on correct target labels, define
\begin{equation}
 \delta=\frac{\|\widehat Y-Y\|_2}{\sigma_{\min}(\Omega)},
 \qquad \beta=\sigma_{\min}(O_-).
 \label{eq:v6errorlevel}
\end{equation}
If $\delta<\beta$, the full-rank least-squares reconstruction satisfies
\begin{equation}
 \|\widehat F-F\|_2\le
 \frac{(1+\|F\|_2)\delta}{\beta-\delta}.
 \label{eq:v6stability}
\end{equation}
Enforcing the exact first block $L$ preserves the bound.
\end{proposition}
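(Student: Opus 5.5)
The plan is a standard two-stage perturbation argument: first bound the error in the completed history matrix $\widehat O_T$, then propagate it through the shift least-squares problem. Conditional on correct labels, $\Omega$ is the exact anchoring matrix with full row rank, so $\Omega\Omega^\dagger=I_q$ and $O_T=Y\Omega^\dagger$. Setting $\widehat O_T=\widehat Y\Omega^\dagger$ gives
\[
\widehat O_T-O_T=(\widehat Y-Y)\Omega^\dagger,
\qquad
\|\widehat O_T-O_T\|_2\le \frac{\|\widehat Y-Y\|_2}{\sigma_{\min}(\Omega)}=\delta .
\]
Write $E_-=\widehat O_--O_-$ and $E_+=\widehat O_+-O_+$. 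Each is obtained from $\widehat O_T-O_T$ by deleting one block of rows, and deleting rows does not increase the spectral norm, so $\|E_\pm\|_2\le\delta$.

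Next I will control $\widehat O_-$. By Weyl's inequality for singular values, $\sigma_{\min}(\widehat O_-)\ge\beta-\delta>0$ when $\delta<\beta$. Hence $\widehat O_-$ has full column rank, $\widehat O_-^\dagger\widehat O_-=I_q$, and $\|\widehat O_-^\dagger\|_2\le 1/(\beta-\delta)$.

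The key algebraic step uses the exact shift identity $O_+=O_-F$:
\[
\widehat F-F=\widehat O_-^\dagger\widehat O_+-\widehat O_-^\dagger\widehat O_-F
=\widehat O_-^\dagger\bigl(\widehat O_+-\widehat O_-F\bigr)
=\widehat O_-^\dagger\bigl(E_+-E_-F\bigr),
\]
because $O_+-O_-F=0$. This gives directly
\[
\|\widehat F-F\|_2\le\frac{\|E_+\|_2+\|E_-\|_2\|F\|_2}{\beta-\delta}\le\frac{(1+\|F\|_2)\delta}{\beta-\delta}.
\]
I expect this to be the main point needing care. The naive route, perturbing the pseudoinverse $\widehat O_-^\dagger-O_-^\dagger$, would produce an extra $\|O_+\|_2$ factor and an extra conditioning term. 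Rewriting the error through the shift identity before bounding it avoids both.

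Finally, consider enforcing the exact first block $L$. This replaces the first $m$ rows of $\widehat O_T$ by the true values $L$, so those rows of the error become zero. Zeroing a row block of an error matrix cannot increase its spectral norm: the new error is $P(\widehat O_T-O_T)$ for a coordinate projection $P$ with $\|P\|_2\le1$. Therefore the enforced versions of $E_-$ and $E_+$ still satisfy $\|E_\pm\|_2\le\delta$. The Weyl step and the shift-identity step go through unchanged, so the same bound holds.
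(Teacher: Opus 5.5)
Your proposal is correct and follows essentially the same route as the paper: bound $\|\widehat O_T-O_T\|_2\le\delta$ through $(\widehat Y-Y)\Omega^\dagger$, observe that deleting a row block and overwriting the top block by $L$ are both contractive row projections, and then use the shift identity to write $\widehat F-F=\widehat O_-^\dagger(E_+-E_-F)$ with $\|\widehat O_-^\dagger\|_2\le(\beta-\delta)^{-1}$. The only cosmetic difference is that the paper first states the shift bound with denominator $\beta-\|E_-\|_2$ and only then substitutes $\|E_\pm\|_2\le\delta$.
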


Here $\delta$ bounds history reconstruction error, while $\beta=\min_{\|h\|_2=1}\|O_-h\|_2$ is the weakest sensor-history response to a unit source-state displacement over the shift lags; what matters is a weakest response large relative to reconstruction error. Weak exposure or poorly separated leadfield columns can already destabilize the preceding labeling step (Appendix~\ref{app:v6stability}). The bound assumes correct $L,K$ and labels; it does not cover model mismatch or give an unconditional success probability.

\section{Experiments}
\label{sec:v6experiments}

The experiments follow the argument of the paper: whether unknown local changes supply information that baseline means lack and whether it is usable at finite budgets (Section~\ref{sec:v6partial}); how reliably direct reconstruction uses that information when baseline responses would already suffice (Section~\ref{sec:v6spectral}); and what the guarantees do not cover (Section~\ref{sec:v6controls}). Known simulated $F,z$ permit direct evaluation of dynamics and source recovery that sensor prediction alone cannot certify.

\begin{figure}[t]
 \centering
 \includegraphics[width=0.74\linewidth]{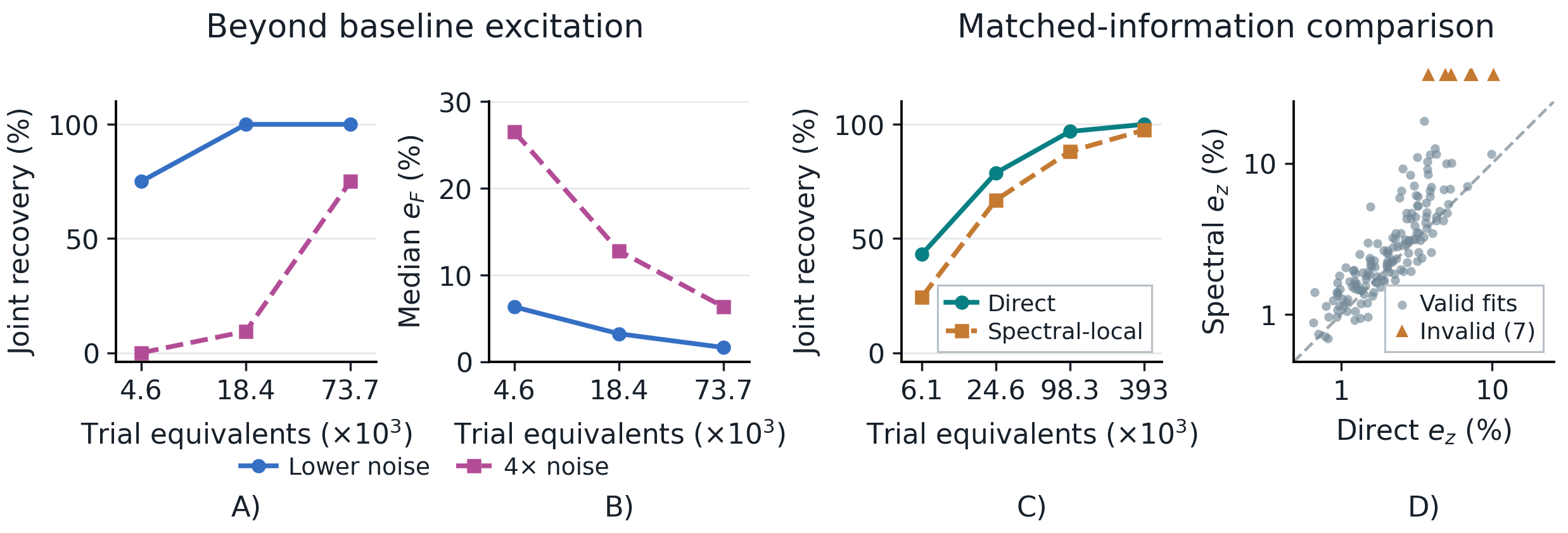}
 \caption{Recovery from simulated EEG. (A--B)~Direct with 4/12 baseline-reachable sources (32 anatomy--seed cases): joint recovery and median $e_F$ at sensor/process noise 0.01/0.002 or fourfold larger; paired Gaussian draws make the higher-noise 73,728 and lower-noise 4,608 cells coincide ($4/\sqrt{192}=1/\sqrt{12}$). (C--D)~Separate full-reachability comparison (160 cases): Direct versus Spectral-local under matched information. Panel D pairs $e_z$ at 98,304 trial equivalents; points above the diagonal favor Direct; seven invalid Spectral-local fits occupy a nonmetric upper strip. Joint recovery requires $e_F,e_z\leq10\%$.}
 \label{fig:v6recovery-overview}
\end{figure}

\subsection{Data and evaluation}

Four Localize-MI anatomies derived from magnetic resonance imaging (MRI) supply leadfields \citep{mikulan2020}. Principal studies use $q=12$ fixed cortical-normal sources, nine electrodes and $m=8$ independent sensor contrasts; columns are normalized once to define source units. Dynamics, interventions, noise and EEG are simulated with four coordinate initialization patterns, eight unknown targets and $T=s=6$; recovery is evaluated in anatomical coordinates without alignment to truth. A \emph{trial equivalent} is one initialized recording episode; we sample Gaussian episode averages with the covariance of the represented repeats, and half the budget is baseline (Appendix~\ref{app:v6acquisition}). Estimators receive $L,K,q$ and measured responses only, never true targets or coefficients. Evaluation reconstructs 32 new distributed initial states from their entire eight-step noisy sensor histories. With $Z,\widehat Z$ stacking true and reconstructed trajectories,
\begin{equation}
 e_F=\frac{\Frob{\widehat F-F}}{\Frob F},
 \qquad e_z=\frac{\Frob{\widehat Z-Z}}{\Frob Z},
 \label{eq:v6metrics}
\end{equation}
with $\|\cdot\|_{\mathrm F}$ the Frobenius norm. Joint success requires $e_F,e_z\le0.10$. Invalid estimates have infinite errors and count as failures; all cases remain in reported success rates. Full protocols, trial accounting and complete results are in Appendices~\ref{app:v6experiments}--\ref{app:v6injective}.

\subsection{Recovery beyond baseline reachability}
\label{sec:v6partial}

We test whether unknown local changes enable recovery when baseline excitation remains rank-deficient at every horizon. In twelve-source systems, baseline initializations explore only four coordinates, while local changes can expose the remaining eight. To preserve this missing information at every baseline horizon, we construct
\begin{equation}
 F=\begin{bmatrix}A&C\\0&B\end{bmatrix},\quad
 K=\begin{bmatrix}I_4\\0\end{bmatrix},\quad
 F^\tau K=\begin{bmatrix}A^\tau\\0\end{bmatrix}\quad(\tau\ge0),
 \label{eq:v6partial-block}
\end{equation}
where $A\in\R^{4\times4}$, $B\in\R^{8\times8}$ and $C\in\R^{4\times8}$. Baseline means reach four coordinates at every horizon, so $\rank R_s=\rank H_0=4$. Eight unknown general row changes target the remaining coordinates with $v_e^\top K\ne0$, completing $\Omega$; all systems have $\rank O_-=12$. The estimator receives neither the block structure nor the targets. Eight fresh systems per anatomy give 32 cases (Appendix~\ref{app:v6partial}).

\begin{table}[t]
\centering
\small
\setlength{\tabcolsep}{4pt}
\caption{Recovery beyond baseline reachability. Twelve-source systems whose baseline means reach four coordinates at every horizon; 32 anatomy--seed cases per row; errors in percent; $N$ counts trial equivalents. Top: Direct with all eight modes exposed (noise rows are paired acquisitions). Bottom: controls at 18,432 trial equivalents, lower noise; ``coverage'' is $\rank[K,E_J]$ over targets with nonzero population contrasts. Square-leadfield rows use fresh systems from the same generator with $m=q=12$ (Appendix~\ref{app:v6injective}).}
\label{tab:v6partial-main}
\begin{tabular}{@{}lrrrrr@{}}
\toprule
\multicolumn{6}{@{}l}{\textit{Direct, all eight modes exposed}}\\
Sensor/process SD & $N$ & Success & med.\ $e_F$ & med.\ $e_z$ & p90 $e_z$\\
\midrule
0.01/0.002 & 4,608  & 24/32 & 6.33 & 8.08 & 11.85\\
0.01/0.002 & 18,432 & \textbf{32/32} & 3.24 & 3.97 & 6.02\\
0.01/0.002 & 73,728 & \textbf{32/32} & 1.66 & 1.99 & 3.04\\
0.04/0.008 & 4,608  & 0/32 & 26.50 & 29.52 & 40.88\\
0.04/0.008 & 18,432 & 3/32 & 12.79 & 15.34 & 22.50\\
0.04/0.008 & 73,728 & 24/32 & 6.33 & 8.08 & 11.85\\
\midrule
\multicolumn{6}{@{}l}{\textit{Controls}}\\
Estimator / condition & Coverage & Valid & Success & med.\ $e_F$ & med.\ $e_z$\\
\midrule
Direct, one mode unexposed & 11 & 1/32 & 0/32 & $\infty$ & $\infty$\\
Direct, all modes unexposed & 4 & 0/32 & 0/32 & $\infty$ & $\infty$\\
Spectral-local (baseline realization) & 12 & 0/32 & 0/32 & $\infty$ & $\infty$\\
Baseline-only OLS, square $L$, entire budget & -- & 32/32 & 0/32 & 704.41 & 363.70\\
Baseline-only ridge--GCV, square $L$, entire budget & -- & 32/32 & 0/32 & 82.08 & 199.86\\
Direct, square $L$, same systems & 12 & 32/32 & \textbf{32/32} & 1.62 & 1.78\\
\bottomrule
\end{tabular}
\end{table}

\paragraph{Local changes recover what baseline cannot.}
At lower noise, direct successes are 24/32, 32/32 and 32/32 across increasing budgets (Table~\ref{tab:v6partial-main}; Figure~\ref{fig:v6recovery-overview}A--B). At 18,432 trial equivalents, median $e_F,e_z$ are 3.24\% and 3.97\%: all twelve-source dynamics are recovered despite rank-four baseline excitation, with every target labelled correctly. At fourfold noise, success is 0/32, 3/32 and 24/32: identification is exact, but finite-budget recovery requires precision, which repetition supplies without changing baseline reachability.

\paragraph{The gain is informational, not numerical.}
Four controls on the same regime show that no estimator can replace the exposed changes. First, exposure is necessary: keeping every change nonzero but removing its action on the tested states ($v_e^\top R_s=0$) for one mode reduces coverage to 11 and yields 0/32 successes at every budget and noise level; removing it for all modes yields 0/32 with no valid fit. Increasing repeats cannot make a zero population contrast nonzero. Second, baseline realization fails: Spectral-local, which factors baseline responses into an order-$q$ latent model before using contrasts, returns no valid reconstruction in any cell, because it must infer twelve coordinates from a rank-four baseline Hankel matrix. Third, an invertible readout does not help: with a square leadfield, inverting each sensor block and regressing the dynamics from baseline responses gives 0/32 with ordinary least squares (OLS) or ridge regression tuned by generalized cross-validation (GCV), even with the entire budget acquired as baseline, whereas Direct succeeds in 32/32 on the same systems. Fourth, the ambiguity is real: for every unexposed control system, an independent validator constructs alternative dynamics $F_S=SFS^{-1}$ preserving $L$, $K$ and all included mean response blocks; across 64 such witnesses, mean responses agree to $1.12\times10^{-16}$ while the dynamics differ by at least 2.13\%. The failures of baseline-only estimation therefore reflect missing information, not failed estimation (Appendices~\ref{app:v6partial} and~\ref{app:v6injective}). Coverage is sufficient, not minimal (Appendix~\ref{app:v6injective-fewer}).

\subsection{Reliability when baseline information suffices}
\label{sec:v6spectral}

With full baseline reachability, both routes have sufficient population information, and the comparison isolates estimator reliability. Spectral-local fits an order-$q$ latent model to baseline responses by singular value decomposition, then uses intervention contrasts to map it to anatomical coordinates (``spectral'' means matrix factorization, not EEG frequency analysis). The primary confirmation crosses 40 fresh seed blocks with four anatomies, using unknown row suppression in $[0.2,0.6]$: 160 cases per budget (Figure~\ref{fig:v6recovery-overview}C--D; Table~\ref{tab:v6matched-reliability}).

\begin{table}[t]
\centering
\setlength{\tabcolsep}{4pt}
\caption{Matched-information recovery under full baseline reachability. Methods share simulated responses within rows; 160 anatomy--seed cases per row at 98,304 trial equivalents. p90: empirical 90th percentile of $e_z$, including invalid fits as $\infty$.}
\label{tab:v6matched-reliability}
\begin{tabular}{@{}lrrrr@{}}
\toprule
 & \multicolumn{2}{c}{Joint success $\uparrow$}
     & \multicolumn{2}{c}{p90 $e_z$ (\%) $\downarrow$}\\
\cmidrule(lr){2-3}\cmidrule(l){4-5}
Intervention law & Direct & Spectral-local & Direct & Spectral-local\\
\midrule
Row suppression (primary) &\textbf{155/160}&141/160&\textbf{4.28}&9.20\\
General row change (control) &\textbf{157/160}&139/160&\textbf{4.16}&9.99\\
\bottomrule
\end{tabular}
\end{table}

Direct improves joint success by a paired 8.75 percentage points (approximate 95\% seed-cluster bootstrap interval $[3.125,15.0]$). Median source errors remain close (2.20\% versus 2.36\%); the gain is fewer large errors (lower p90) and no invalid fits versus seven for Spectral-local (Figure~\ref{fig:v6recovery-overview}D). Recovery does not require proportional suppression: norm-matched arbitrary receiving-row changes give the same picture (Table~\ref{tab:v6matched-reliability}; Appendix~\ref{app:v6arbitrary}). The advantage persists against inverse-then-regress estimation using exactly the same interventional responses with a square leadfield: OLS and ridge--GCV give 0/32 at 18,432 trial equivalents where Direct gives 32/32, because they invert each sensor block before exploiting the rank-one contrast structure; with sixteen measurements and a fourfold budget both routes succeed (Appendix~\ref{app:v6injective}). Direct reconstruction also makes iterative fitting usable under partial excitation: in an exploratory known-leadfield Gaussian state-space comparison using expectation-maximization (EM), initializing at the direct estimate yields higher training likelihood than every non-direct start and 32/32 convergences, versus 3/32 for likelihood-selected non-direct fits at a 500-iteration cap (Appendix~\ref{app:v6em}). Fixed-budget recovery degrades with system size for both methods (Appendix~\ref{app:v6fresh}); these results support reliability over the comparator, not scalability at fixed cost.

\subsection{What the guarantees do not cover}
\label{sec:v6controls}
\label{sec:v6injective}
\label{sec:v6physical-readout}
\label{sec:v6recorded-main}

The recovery guarantee is conditional on locality, calibrated $L,K$ and correct labels, and the stability bound on correct $L$. Three separate prespecified stress tests delimit these conditions.

\paragraph{Locality.}
Adding a neighboring-row change at 0.1 of the primary coefficient reduces Direct to 65/80 successes at 98,304 trial equivalents and 69/80 at four times that budget: averaging does not remove the bias from nonlocal changes (Appendix~\ref{app:v6experiments}).

\paragraph{Forward model.}
Sixteen simulated systems are paired across five assumed skull-conductivity settings in physical head models, retaining source locations, orientations and units. With the nominal $L$ supplied to the estimator and exact responses, every target label is correct, yet nonnominal settings produce median $e_F$ of 11.17--27.52\%; supplying the generating leadfield recovers all 80 paired cases to numerical precision. Correct labels therefore do not imply correct dynamics, and this bias persists without sampling noise, outside the correct-$L$ stability bound (Appendix~\ref{app:v6physical-readout}).

\paragraph{Recorded EEG.}
Real data can currently test only the spatial half of the calibration. In Localize-MI EEG from the same four participants, within a $-2$ to $+2$ ms window dominated by injected-current artifact, a spatial direction learned at one intensity predicts responses at another on disjoint test trials across eleven sites (median score 0.949 versus 0.179 for a prespecified other-site direction), and trial averaging makes dipole fits more repeatable (agreement with the full-pool fit 66.88\% to 92.50\%) without moving them closer to the contact midpoint (13.68 mm). Spatial transfer and repeatability do not establish accurate anatomical calibration, and these analyses recover no neural $F$, validate no neural $K$, and implement no mechanism change (Appendix~\ref{app:v6recorded}).

\section{Discussion}
\label{sec:v6discussion}
\label{sec:v6applicability}

We show how unknown local mechanism changes can reveal source dynamics that controlled baseline mean responses leave undetermined. The central insight is to identify an anatomically calibrated source-response history rather than every coefficient of the intervention; combined with known initialization responses, these histories recover directed interactions between anatomically defined sources beyond baseline reachability. The experiments separate this information gain from the estimator: partially excited systems are recovered where baseline realization, invertible readout and unexposed changes all fail, and explicit alternative dynamics reproduce every baseline mean; where baseline information suffices, direct reconstruction remains the more reliable route and initializes likelihood fitting. The stress tests state the price of the guarantee: noise, nonlocal changes and forward-model error each degrade recovery even when source labels are correct, so labels and histories must be validated separately.

These findings motivate mapping interactions in controlled neural-circuit experiments, including dynamic-clamp and optogenetic settings \citep{sharp1993dynamic,wilson2012division}, and potentially EEG-based perturbation studies; biological translation requires calibrated observation and initialization models and validation of the single-row, single-transition protocol (Appendix~\ref{app:v6applicability}).
\label{maintextend}

\subsection*{Ethics and data use}
The study combines existing anatomical operators from Localize-MI \citep{mikulan2020} with computational simulations and secondary analyses of the dataset's recorded scalp EEG during intracerebral current injection. No new human recordings or stimulation procedures were acquired. Data reuse and redistribution must respect the dataset's CC BY-NC-SA 4.0 licence and applicable privacy requirements. The mathematical intervention protocol is not a clinically validated stimulation or diagnostic procedure.

\subsection*{Reproducibility}
Appendices~\ref{app:v6theory} and~\ref{app:v6injective} provide the main mathematical arguments. Appendices~\ref{app:v6experiments}, \ref{app:v6partial} and~\ref{app:v6injective} specify acquisition, estimators, evaluation and complete results for the central simulation studies. Appendix~\ref{app:v6em} documents the exploratory known-leadfield EM comparison, including initializations, stopping rules and numerical checks. Appendices~\ref{app:v6signed} and~\ref{app:v6gain-boundary} give additional-assumption controls, including their derivations and failure cases; Appendix~\ref{app:v6recorded} describes recorded-current EEG diagnostics and their distinct scope. Appendix~\ref{app:v6physical-readout} documents the physical forward-model sensitivity study. Appendix~\ref{app:v6applicability} expands the experimental interpretation and related work. Experimental protocols, numerical verification records and checksums accompany the research package.

Code for this work is available at:\\
\url{https://github.com/AyanaMussabayeva/source_localization}.

\bibliography{references}
\bibliographystyle{plainnat}
\clearpage

\appendix
\etocdepthtag.toc{appendix}
\begingroup
\hypersetup{linktoc=all}
\etocsettagdepth{main}{none}
\etocsettagdepth{appendix}{subsection}
\etocsetnexttocdepth{subsection}
\etocsettocstyle{%
  \pdfbookmark[0]{Appendix contents}{appendix-contents}%
  \section*{Appendix contents}%
  \noindent Section titles and page numbers link to the corresponding
  appendix material.\par\medskip
}{}
\tableofcontents
\endgroup

\section{Identification assumptions and proofs}
\label{app:v6theory}

\paragraph{Notation conventions.}
Source indices are one-based. A colon selects all entries on that axis:
$L_{:,j}$ is a column, $F_{j,:}$ a row, and $(O_T)_{:,U}$ selects columns
indexed by a source set $U$. A hat denotes an estimate. The norm $\|\cdot\|_2$
is Euclidean for vectors and spectral for matrices; $\|\cdot\|_{\rm F}$ is
the Frobenius norm. The transpose is $\top$, whereas $\dagger$ is the
Moore--Penrose pseudoinverse, not a transpose or a new model parameter.
It gives the minimum-norm least-squares solution; full row rank of $\Omega$
gives $\Omega\Omega^\dagger=I_q$, and full column rank of $O_-$ gives
$O_-^\dagger O_-=I_q$.

\begin{table}[ht]
\centering\small
\caption{Core notation. $|J|$ is the number of distinct exposed targets.}
\label{tab:v6notation}
\begin{tabular}{@{}p{0.26\linewidth}p{0.22\linewidth}p{0.44\linewidth}@{}}
\toprule
Symbol & Dimension & Meaning\\
\midrule
$z_{\tau},x_{\tau},u$ & $q,m,r$ & Source state, sensor observation, controlled input\\
$T,s$ & Positive integers & Output-history and insertion-time horizons\\
$t,\tau_{\mathrm{int}},\tau$ & Nonnegative integers & Output lag $t=0,\ldots,T-1$; insertion time $\tau_{\mathrm{int}}=0,\ldots,s-1$; absolute episode time $\tau$\\
$F$ & $q\times q$ & Baseline transition matrix\\
$L,K$ & $m\times q,q\times r$ & Leadfield and initialization map\\
$D_e=e_{j_e}v_e^\top$ & $q\times q$ & One-transition change to receiving row $j_e$\\
$O_T,R_s$ & $mT\times q,q\times rs$ & Temporal observation and input histories\\
$H_0,H_+,H^{[e]},\Delta H_e$ & $mT\times rs$ & Baseline, shifted, inserted, contrast blocks\\
$o_j,\widehat u_e,\psi_e$ & $mT,mT,rs$ & Physical column, unit direction, right profile\\
$E_J$ & $q\times|J|$ & Target coordinate vectors\\
$\Omega=[K,E_J]$ & $q\times(r+|J|)$ & Anchoring matrix\\
$Y=O_T\Omega$ & $mT\times(r+|J|)$ & Observed and calibrated columns\\
$O_-,O_+$ & $m(T-1)\times q$ & Observation stacks before and after a time shift\\
$\Phi,\Phi_0$ & $q\times q$ & Candidate and true maps from latent to anatomical coordinates\\
$S,\Delta S,\Delta\Phi$ & $q\times q$ & Coordinate gauge, correction $\Delta S=S-I_q$, map difference $\Phi-\Phi_0$\\
$\sigma_e,\alpha_j,\mu_j,\beta$ & Scalars & Exposure, visibility, label angle, shift conditioning\\
$\delta$ & Scalar & Column error after coverage amplification\\
$e_F,e_z$ & Scalars & Relative dynamics and source-trajectory reconstruction errors\\
\bottomrule
\end{tabular}
\end{table}

\subsection{Population responses and anatomical identification}
\label{app:v6setup}

Let the anatomically indexed state obey
\begin{equation}
 z_{\tau+1}=Fz_{\tau}+\epsilon_{\tau},
 \qquad x_{\tau}=Lz_{\tau}+\nu_{\tau},
 \label{app:v6eq:model}
\end{equation}
where \(F\in\R^{q\times q}\), the effective observation map
\(L\in\R^{m\times q}\), and the known initialization map \(K\in\R^{q\times r}\)
specifies the initial conditional mean $\mathbb E[z_0\mid u]=Ku$.
There is no subsequent baseline additive input. Innovations and sensor noise
have zero conditional mean under the exogenous input and insertion schedule.
Write \(\rho=\rank L\) and \(\kappa=\rank K\). The source locations,
orientations, ordering, and units are fixed by \(L\); changing a column
normalization changes the physical coordinate convention.

For positive integers \(T,s\), define
\begin{equation}
 O_T=
 \begin{bmatrix}L\\LF\\ \vdots\\LF^{T-1}\end{bmatrix},
 \qquad
 R_s=[K,FK,\ldots,F^{s-1}K].
 \label{app:v6eq:or}
\end{equation}
An intervention \(e\) acts for one transition and changes one incoming row,
\begin{equation}
 D_e=e_{j_e}v_e^\top,
 \label{app:v6eq:rowchange}
\end{equation}
where both \(j_e\) and \(v_e\in\R^q\) are unknown.

\paragraph{Matched response histories.}
Choose an output horizon $T\ge2$ and a pre-insertion horizon $s\ge1$.
Insert mode $e$ at transition
$\tau_{\mathrm{int}}\to\tau_{\mathrm{int}}+1$, with
$\tau_{\mathrm{int}}=0,\ldots,s-1$. Relative output time
$t=0,\ldots,T-1$ corresponds to episode time
$\tau=\tau_{\mathrm{int}}+1+t$. Thus
\begin{equation}
 \bar x^{(e)}_{t,\tau_{\mathrm{int}}}(u)
 :=\mathbb E[x_{\tau_{\mathrm{int}}+1+t}\mid u,e\text{ inserted at }\tau_{\mathrm{int}}]
 =LF^t(F+D_e)F^{\tau_{\mathrm{int}}} Ku.
 \label{eq:v6episode}
\end{equation}
In particular, $t=0$ is the first post-transition observation, not the
observation at intervention onset.

\paragraph{Building response matrices from EEG.}
Each experimental condition yields a multichannel EEG time course. For each
intervention mode, columns of its response matrix index initialization and
intervention-time conditions, while rows index sensors and times after the
changed transition. Let $u^{(i)}\in\R^r$ be the $i$th unit input, so
$Ku^{(i)}=K_{:,i}$, for $i=1,\ldots,r$. Fix one input $i$ and insertion
time $\tau_{\mathrm{int}}$. They select column $r\tau_{\mathrm{int}}+i$
in both matrices: $R_s$ contains the pre-intervention state
$F^{\tau_{\mathrm{int}}}K_{:,i}$, and $H^{[e]}$ contains its stacked mean
EEG history. Left multiplication by $O_T(F+D_e)$ maps one to the other:
\begingroup
\definecolor{responseCondition}{HTML}{F7DFC4}
\setlength{\arraycolsep}{3pt}
\renewcommand{\arraystretch}{1.12}
\begin{equation}
 R_s=\left[\begin{array}{ccc}
 \cdots & \cellcolor{responseCondition}F^{\tau_{\mathrm{int}}}K_{:,i} & \cdots
 \end{array}\right]
 \xrightarrow[\text{column }r\tau_{\mathrm{int}}+i]{\ O_T(F+D_e)\ }
 H^{[e]}=\left[\begin{array}{ccc}
 \cdots & \cellcolor{responseCondition}\bar x^{(e)}_{0,\tau_{\mathrm{int}}}(u^{(i)}) & \cdots\\
 \hline
 \vdots & \cellcolor{responseCondition}\vdots & \vdots\\
 \hline
 \cdots & \cellcolor{responseCondition}\bar x^{(e)}_{T-1,\tau_{\mathrm{int}}}(u^{(i)}) & \cdots
 \end{array}\right].
 \label{eq:v6observed-column}
\end{equation}
\endgroup
Shading follows one experimental condition, not one source. Each displayed
$\bar x$ is an $m$-entry sensor vector; horizontal rules separate output-lag
blocks. Within the selected column, $i$ and $\tau_{\mathrm{int}}$ stay fixed
while $t$ varies from $0$ to $T-1$. Thus the column has $mT$ entries, and
the $rs$ conditions form $H^{[e]}\in\R^{mT\times rs}$, with $i$ varying
fastest. For example, with two initialization inputs ($r=2$), $i=2$ and
$\tau_{\mathrm{int}}=0$ select column 2; this is a column number, not an
intervention time. Trial averages give $\widehat H^{[e]}$ from EEG without
knowing $F$ or hidden $z_\tau$. The matched baseline $H_+$ uses the same
inputs and times $\tau_{\mathrm{int}}+1+t$ without intervention; $H_0$ uses
times $\tau_{\mathrm{int}}+t$, one sample earlier.

For fixed $t$ and $\tau_{\mathrm{int}}$, the block
$H^{[e]}[t,\tau_{\mathrm{int}}]$ contains all $m$ sensors and all $r$ inputs:
it selects rows $mt+1,\ldots,m(t+1)$ and columns
$r\tau_{\mathrm{int}}+1,\ldots,r(\tau_{\mathrm{int}}+1)$.
For $t=0,\ldots,T-1$ and $\tau_{\mathrm{int}}=0,\ldots,s-1$, the matched
population blocks are
\begin{align}
 H_+[t,\tau_{\mathrm{int}}]&=LF^tF F^{\tau_{\mathrm{int}}} K,\nonumber\\
 H^{[e]}[t,\tau_{\mathrm{int}}]&=LF^t(F+D_e)F^{\tau_{\mathrm{int}}} K.
 \label{app:v6eq:blocks}
\end{align}
Equivalently, after stacking the block indices,
\(H_+=O_TFR_s\). Each block is an $m\times r$ linear map from the
controlled input to a conditional mean, not to a noisy single trial:
$\bar x^{(e)}_{t,\tau_{\mathrm{int}}}(u)=H^{[e]}[t,\tau_{\mathrm{int}}]u$.
The complete matrices $H_0,H_+,H^{[e]}$ have size $mT\times rs$ and obey
Eq.~\eqref{eq:v6blocks}. Unlike $O_T$, they contain responses for tested
experimental conditions rather than unit-source histories. Baseline lags
reused across blocks share the same estimated response. The matched schedule
makes $\Delta H_e=H^{[e]}-H_+$ observable; arbitrary stationary recording
conditions need not supply this contrast.

The model assumes known \(q,L,K\), identifiable population responses in
\eqref{app:v6eq:blocks}, and common state coordinates and baseline \(F\)
across conditions. The intervention preserves the state at insertion,
changes one receiving row for one transition, and leaves the first
post-transition output available. The results identify response means;
trajectory distributions and innovation covariances require an additional
noise model.

There are two distinct identification routes. The direct route uses
\eqref{app:v6eq:blocks} and does not assume
\(\rank R_s=q\). The shared-realization route instead assumes that a
joint identification procedure has returned
\begin{equation}
 A=\Phi_0^{-1}F\Phi_0,\qquad C=L\Phi_0,\qquad B=\Phi_0^{-1}K,
 \qquad \Delta_e=\Phi_0^{-1}D_e\Phi_0
 \label{app:v6eq:shared}
\end{equation}
for one invertible \(\Phi_0\) common to every mode. Independently fitted regime
realizations do not provide \eqref{app:v6eq:shared}. A sufficient classical
construction is a rank-\(q\) factorization of
\(H_0=O_TR_s\) when both factors have rank \(q\)
\citep{ho1966}; this sufficient construction is not an assumption of the
direct route.

\subsection{Anatomical column identification}
\label{app:v6contrast}

\begin{lemma}[Observable contrast factorization]
\label{app:v6lem:contrast}
For the row change \eqref{app:v6eq:rowchange},
\begin{equation}
 \Delta H_e:=H^{[e]}-H_+
 =(O_Te_{j_e})(v_e^\top R_s).
 \label{app:v6eq:contrast}
\end{equation}
If \(v_e^\top R_s\neq0\) and \(L_{:,j_e}\neq0\), then \(\Delta H_e\) has rank one and its
left singular subspace is
\(\operatorname{span}(O_Te_{j_e})\). Its first \(m\) entries are
proportional to \(L_{:,j_e}\).
\end{lemma}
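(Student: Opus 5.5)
The plan is to compute the contrast blockwise from the matched population blocks in \eqref{app:v6eq:blocks} and then read off its rank and its singular subspaces from the outer-product form. Fix an output lag $t$ and an insertion time $\tau_{\mathrm{int}}$. Subtracting the two lines of \eqref{app:v6eq:blocks} gives
\[
H^{[e]}[t,\tau_{\mathrm{int}}]-H_+[t,\tau_{\mathrm{int}}]=LF^tD_eF^{\tau_{\mathrm{int}}}K.
\]
Because the change acts for a single transition and $F$ resumes afterwards, the same factors $LF^t$ and $F^{\tau_{\mathrm{int}}}K$ appear on both sides of the changed transition. Substituting $D_e=e_{j_e}v_e^\top$ from \eqref{app:v6eq:rowchange}, the block becomes $(LF^te_{j_e})(v_e^\top F^{\tau_{\mathrm{int}}}K)$.

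Next I would stack these blocks. Varying $t=0,\ldots,T-1$ stacks the left factor into $O_Te_{j_e}$, by the definition of $O_T$ in \eqref{app:v6eq:or}. Varying $\tau_{\mathrm{int}}=0,\ldots,s-1$ concatenates the right factor into $v_e^\top R_s$, since $R_s=[K,FK,\ldots,F^{s-1}K]$ with the input index varying fastest. This is consistent with the column ordering $r\tau_{\mathrm{int}}+i$ of \eqref{eq:v6observed-column}. Together these give \eqref{app:v6eq:contrast}, which also equals $O_T(F+D_e)R_s-O_TFR_s$ from \eqref{eq:v6blocks}.

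For the rank claim, I would note that the first $m$ entries of $o_{j_e}=O_Te_{j_e}$ are exactly $LF^0e_{j_e}=L_{:,j_e}$. This block is nonzero by hypothesis, so $o_{j_e}\neq0$. Combined with $\psi_e^\top=v_e^\top R_s\neq0$, the outer product $o_{j_e}\psi_e^\top$ is nonzero and has rank one. Its column space is $\operatorname{span}(o_{j_e})$, because $\Delta H_e w=(\psi_e^\top w)\,o_{j_e}$ for any $w$, and choosing $w=\psi_e$ gives a nonzero multiple of $o_{j_e}$. Hence the unique nonzero singular value is $\|o_{j_e}\|_2\|\psi_e\|_2$, with left singular vector $\pm o_{j_e}/\|o_{j_e}\|_2$. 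Its first $m$ entries are therefore proportional to $L_{:,j_e}$, with a nonzero constant, which is the last claim.

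I do not expect a real obstacle. The only delicate point is the bookkeeping: I need to confirm that the same factor $F^{\tau_{\mathrm{int}}}K$ and the same lag factor $LF^t$ appear in both $H^{[e]}$ and $H_+$, which is where the matched-time convention for $H_+$ is used. I also need to check that the stacking order of the blocks matches the order of the columns of $O_T$ and of $R_s$.
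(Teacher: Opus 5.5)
Your proposal is correct and follows the paper's own proof: you form the blockwise difference $LF^te_{j_e}v_e^\top F^{\tau_{\mathrm{int}}}K$, stack over $t$ and $\tau_{\mathrm{int}}$, and read the rank, the column space and the first block $Le_{j_e}=L_{:,j_e}$ off the nonzero outer product. You also say explicitly that $L_{:,j_e}\neq0$ is what forces $O_Te_{j_e}\neq0$, a point the paper leaves implicit in ``a nonzero outer product.''
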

\begin{proof}
For each block pair,
\[
 H^{[e]}[t,\tau_{\mathrm{int}}]-H_+[t,\tau_{\mathrm{int}}]
 =LF^te_{j_e}v_e^\top F^{\tau_{\mathrm{int}}} K.
\]
Stacking \(t\) and \(\tau_{\mathrm{int}}\) gives \eqref{app:v6eq:contrast}. A nonzero outer
product has the asserted rank and column space, and the first block of
\(O_Te_{j_e}\) is \(Le_{j_e}=L_{:,j_e}\).
\end{proof}

Suppose every candidate \(L_{:,j}\) is nonzero and the projective map
\(j\mapsto\operatorname{span}(L_{:,j})\) is injective. The first block then labels
\(j_e\). For the leading unit left singular vector $\widehat u_e$ of
$\widehat{\Delta H}_e$, let $\widehat u_{e,\mathrm{top}}$ be its first $m$ entries.
A zero first block is invalid. Otherwise, at inferred target $j$, the
least-squares scale and calibrated physical column are
\[
 \gamma_e=\frac{\widehat u_{e,\mathrm{top}}^\top L_{:,j}}
 {\|\widehat u_{e,\mathrm{top}}\|_2^2},\qquad
 \widehat o_j=\gamma_e\widehat u_e.
\]
With exact exposed responses and correct labels, this equals $o_j=O_Te_j$.
The arbitrary singular-vector sign is absorbed by $\gamma_e$.
Proportional leadfield columns require a joint
assignment test; if more than one assignment satisfies the completion
constraints, the target is not identified. Repeated interventions at one
target improve neither the number of distinct physical columns nor the exact
coverage rank, although they may improve precision.

The exposure condition \(v_e^\top R_s\neq0\) does not require
\(\rank R_s=q\): an arbitrary row change can map a reachable direction
into a source that baseline inputs never excite, revealing its observability
column.

\subsection{Direct completion without baseline controllability}
\label{app:v6completion}

Let \(J\) be the set of distinct targets obtained from exposed contrasts, let
\(E_J=[e_j]_{j\in J}\), and define
\begin{equation}
 \Omega=[K,E_J],\qquad
 Y=[O_TK,O_TE_J]=O_T\Omega.
 \label{app:v6eq:completion-data}
\end{equation}
The first term in \(Y\) is a baseline response block; the second consists of
the scaled contrast columns from Lemma~\ref{app:v6lem:contrast}.

\begin{theorem}[Direct physical completion]
\label{app:v6thm:completion}
Assume \(T\geq2\), all columns in \eqref{app:v6eq:completion-data} are correctly
labeled and scaled, and
\begin{equation}
 \rank\Omega=q.
 \label{app:v6eq:coverage}
\end{equation}
Then
\begin{equation}
 O_T=Y\Omega^\dagger.
 \label{app:v6eq:ocomplete}
\end{equation}
Writing
\(O_-=[L^\top,(LF)^\top,\ldots,(LF^{T-2})^\top]^\top\)
and letting \(O_+\) be the same stack shifted by one power of \(F\),
if \(\rank O_-=q\), then
\begin{equation}
 F=O_-^\dagger O_+.
 \label{app:v6eq:fshift}
\end{equation}
No full-row-rank assumption on \(R_s\) is needed.
\end{theorem}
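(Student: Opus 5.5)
The argument is two linear-algebra steps, each a one-sided inverse identity applied to an equation supplied by the model. I would first record that $Y=O_T\Omega$ holds exactly under the hypotheses. The first $r$ columns of $Y$ are $O_TK$, and these equal the first $r$ columns of $H_0=O_TR_s$, because the $\tau_{\mathrm{int}}=0$ block of $R_s$ is $K$. The remaining columns are $O_Te_j$ for $j\in J$. By Lemma~\ref{app:v6lem:contrast} and the calibration $\widehat o_j=\gamma_e\widehat u_e$, which is exact for exact, correctly labelled responses, these agree with the assumed labelled and scaled columns. Hence $Y=O_T\Omega$ as matrices, without any reference to $R_s$ beyond its first block.

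Next I would invoke $\rank\Omega=q$. Since $\Omega\in\R^{q\times(r+|J|)}$ has full row rank, $\Omega\Omega^\top$ is invertible and $\Omega^\dagger=\Omega^\top(\Omega\Omega^\top)^{-1}$, so $\Omega\Omega^\dagger=I_q$. Right-multiplying $Y=O_T\Omega$ by $\Omega^\dagger$ then gives $Y\Omega^\dagger=O_T$, which is \eqref{app:v6eq:ocomplete}. Uniqueness follows in the same way: any $\tilde O$ with $\tilde O\Omega=Y$ satisfies $\tilde O=\tilde O\Omega\Omega^\dagger=Y\Omega^\dagger$. So $O_T$ is a function of the measured data alone.

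For the dynamics, I would extract $O_-$ and $O_+$ as the row-block sub-stacks of $O_T$; this needs $T\ge2$ so that both are nonempty. The shift identity $O_+=O_-F$ holds blockwise, since $LF^{t+1}=(LF^t)F$ for $t=0,\ldots,T-2$. If $\rank O_-=q$, then $O_-^\dagger=(O_-^\top O_-)^{-1}O_-^\top$ and $O_-^\dagger O_-=I_q$. Left-multiplying the shift identity by $O_-^\dagger$ gives $F=O_-^\dagger O_+$, and $F$ is unique because any $\tilde F$ with $O_-\tilde F=O_+$ satisfies $\tilde F=O_-^\dagger O_-\tilde F$.

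No step assumes that $R_s$ has full row rank: $R_s$ enters only through the exposure condition, which produces the contrast columns, and through its first block $K$. The main obstacle is therefore bookkeeping rather than analysis. One point to check carefully is that the data-side identity $Y=O_T\Omega$ uses the same column ordering and sign and scale conventions as $\Omega$. The other is that repeated targets are collapsed to distinct columns of $E_J$, or kept as duplicated consistent columns, which leaves the identity $Y=O_T\Omega$ and the rank intact.
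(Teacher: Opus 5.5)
Your proposal is correct and takes the same approach as the paper: full row rank of $\Omega$ gives $\Omega\Omega^\dagger=I_q$, so $Y\Omega^\dagger=O_T$, and full column rank of $O_-$ gives $O_-^\dagger O_-=I_q$, which applied to the shift identity $O_+=O_-F$ yields $F=O_-^\dagger O_+$. Your explicit pseudoinverse formulas, uniqueness arguments and remarks on repeated targets are consistent elaborations of that two-line proof.
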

\begin{proof}
Condition \eqref{app:v6eq:coverage} gives \(\Omega\Omega^\dagger=I_q\), so
\(Y\Omega^\dagger=O_T\Omega\Omega^\dagger=O_T\). The block-shift identity
is \(O_+=O_-F\). Full column rank of \(O_-\) permits
left pseudoinversion and gives \eqref{app:v6eq:fshift}.
\end{proof}

Since each distinct coordinate target adds at most one rank to \(\Omega\), at least
\(q-\kappa\) targets are required for \eqref{app:v6eq:coverage}. A coordinate
set complementing \(\operatorname{im}K\) attains this count, but an arbitrary
set of that size need not. This is the count required by direct column
completion, not a universal intervention lower bound.
Appendix~\ref{app:v6injective-propagation} gives another route with fewer
targets even under unrestricted row locality when the known anatomy is
injective; additional symmetry, sparsity or a gain law can also change
the information available.

\paragraph{Exact noncontrollable example.}
Let \(q=7\), choose distinct
\(0<\lambda_1<\cdots<\lambda_7<1\), and set
\[
 F=\diag(\lambda_1,\ldots,\lambda_7),\qquad
 K=[e_1,e_2],\qquad
 L_{:,j}=(1,\lambda_j)^\top.
\]
Then \(\rank R_s=2\) for every \(s\), and the baseline Hankel matrix has rank
two. Nevertheless, for \(T\geq8\), the submatrix obtained by taking the first
sensor row from the first seven block rows of \(O_-\) is the Vandermonde
matrix \([\lambda_j^\tau]_{\tau=0,\ldots,6;\,j=1,\ldots,7}\); hence
\(\rank O_-=7\). The columns \(L_{:,j}\) are projectively distinct. For targets
\(J=\{3,\ldots,7\}\), choose arbitrary row changes
\(D_j=e_je_1^\top\). Each is exposed because \(e_1^\top R_s\neq0\), and
\([K,E_J]=I_7\), so Theorem~\ref{app:v6thm:completion} recovers \(F\) although
the baseline is noncontrollable.

\subsection{Complete calculation for the three-source example}
\label{app:v6worked-example}

Consider three sources, two sensor measurements and one initialization
pattern:
\begin{equation}
 L=\begin{bmatrix}1&0&1\\0&1&1\end{bmatrix},\qquad
 K=e_1,\qquad
 F=\begin{bmatrix}\tfrac12&1&0\\0&0&1\\0&0&0\end{bmatrix}.
 \label{eq:v6toy-system}
\end{equation}
The reconstruction receives known inputs $(L,K)$ and measured conditional
means $(H_0,H_+,H^{[1]},H^{[2]})$ and recovers
$(j_1,j_2,O_3,F)$. The generating transition $F$ and intervention coefficients
are displayed only to check the calculation and are not supplied to the
estimator. This example uses exact conditional means. The leadfield has rank two,
but its three columns are nonzero and pairwise nonproportional.

\paragraph{Baseline ambiguity.}
Baseline initialization gives $F^\tau K=2^{-\tau}e_1$: source 1 decays
while sources 2 and 3 remain unexcited in the mean. Thus $\rank R_s=1$
at every horizon. Changing the influence of source 3 on source 2 gives
the distinct transition $\widetilde F=F+\tfrac12e_2e_3^\top$.
It also satisfies $\widetilde F^\tau K=2^{-\tau}e_1$, so
\begin{equation}
 LF^\tau K=L\widetilde F^\tau K=2^{-\tau}L_{:,1}
 \qquad(\tau\ge0).
 \label{eq:v6toy-ambiguity}
\end{equation}
Longer baseline recordings or more repetitions cannot distinguish these
dynamics through their conditional means.

\paragraph{Measured contrasts and source histories.}
Set $r=1$, $u=1$, $T=3$, and $s=2$, with insertion times
$\tau_{\mathrm{int}}=0,1$. Let the two unknown modes have row changes
\begin{equation}
 D_1=e_2(2,b,c),\qquad D_2=e_3(-1,d,f),
 \label{eq:v6toy-changes}
\end{equation}
where $b,c,d,f$ are arbitrary. Mode indices 1 and 2 label experiments;
their source targets are $j_1=2$ and $j_2=3$. Both targets and all changed
coefficients are hidden from the estimator. Direct multiplication gives
\[
 O_3=\left[\begin{array}{ccc}
 1&0&1\\0&1&1\\\hline
 \tfrac12&1&0\\0&0&1\\\hline
 \tfrac14&\tfrac12&1\\0&0&0
 \end{array}\right]=[o_1,o_2,o_3],\qquad
 R_2=\begin{bmatrix}1&\tfrac12\\0&0\\0&0\end{bmatrix}.
\]
Columns of $R_2$ index insertion times, whereas columns of $O_3$ index
sources. Horizontal rules separate the two-sensor blocks at output lags
0, 1 and 2. The measured response matrices are
\begin{align*}
 H_0&=o_1[1,\tfrac12], & H_+&=o_1[\tfrac12,\tfrac14],\\
 H^{[1]}&=H_++o_2[2,1], &
 H^{[2]}&=H_++o_3[-1,-\tfrac12].
\end{align*}
These are sensor means obtained by applying the changed transition once
and then resuming $F$. The histories $o_2,o_3$ are not directly observed;
the equalities describe how the measured means factor.
Subtracting the matched baseline gives
\begin{equation}
 \Delta H_1=o_2[2,1],\qquad
 \Delta H_2=o_3[-1,-\tfrac12].
 \label{eq:v6toy-contrasts}
\end{equation}
Waiting one baseline step halves the exposed state and contrast amplitude,
but preserves the subsequent response shape. Because $R_2$ has zero
second and third rows, the arbitrary coefficients $b,c,d,f$ disappear
from $v_e^\top R_2$ and all these measured means. The histories can
therefore be identified even though these coefficients remain undetermined.

\paragraph{Labeling and calibration.}
For exact contrasts, one nonzero column suffices. The first sensor blocks
of the first contrast columns are $(0,2)^\top=2L_{:,2}$ for mode 1 and
$(-1,-1)^\top=-L_{:,3}$ for mode 2. Matching their directions to $L$
identifies sources 2 and 3. Dividing each \emph{entire} contrast column
by its signed factor, 2 or $-1$, recovers $o_2$ or $o_3$.
The same calculation with Algorithm~\ref{alg:v6direct} uses normalized
left singular vectors: $\|o_2\|_2=3/2$, $\|o_3\|_2=2$, so the vectors
are $\widehat u_1=\pm(2/3)o_2$ and $\widehat u_2=\pm o_3/2$.
Equation~\eqref{eq:v6calibration} gives $\gamma_1=\pm3/2$ and
$\gamma_2=\pm2$, with matching signs. Each product $\gamma_e\widehat u_e$
therefore recovers the correctly scaled history, regardless of the SVD sign.
Combining the histories with the first column of $H_0$, which equals $o_1$,
gives $\Omega=I_3$ and $Y=O_3$.

\paragraph{Recovering the transition.}
The first four rows of $O_3$ form $O_-$ and the last four form $O_+$;
the middle sensor block belongs to both, and $O_+=O_-F$.
The submatrix of $O_-$ on rows 1, 2 and 4 is
$\left[\begin{smallmatrix}1&0&1\\0&1&1\\0&0&1\end{smallmatrix}\right]$,
whose determinant is one. Thus $\rank O_-=3$ although $\rank L=2$:
time distinguishes source states that a single measurement cannot.
The shift equation has the unique solution $F$ in
Eq.~\eqref{eq:v6toy-system}, while $\rank R_s$ remains one.
The baseline-invisible alternative $\widetilde F$ has a different
source-3 history and is distinguished by the second intervention.
The singular values of $O_-$ are approximately
$2.01491804,1.18164239,0.89096944$, while all singular values of $\Omega$
are one. In the notation of Proposition~\ref{prop:v6stability},
$\sigma_{\min}(\Omega)=1$ and $\beta\approx0.891$ in the chosen units.
Exact means give $\delta=0$, so Eq.~\eqref{eq:v6stability} gives zero
reconstruction error. These values illustrate the bound; they are not
a universal threshold or a noisy-data guarantee.

Both the baseline ambiguity and the unresolved coefficients concern the
specified conditional means, not equality of full trial distributions or
what additional trial covariances could reveal. The example establishes
population identification without adding a finite-noise performance result.

\subsection{Coordinate ambiguity}
\label{app:v6gauge}

The constructive identification result gives a sufficient route to unique
recovery. What can remain unresolved when the available directions do not
fully cover the source space? We examine source-coordinate changes that
preserve the known anatomy, initialization patterns, and intervention
targets, and therefore leave the corresponding mean responses unchanged.

For an invertible coordinate change $S$, define the coordinate correction
$\Delta S:=S-I_q$. Preserving anatomy and the anchored directions requires
\begin{equation}
 L(\Delta S)=0,\qquad (\Delta S)\Omega=0.
 \label{eq:v6gauge}
\end{equation}
\begin{proposition}[An indistinguishable coordinate family]
\label{prop:v6gauge}
Let $J$ include the targets of every intervention mode under comparison.
Each invertible $S=I_q+\Delta S$ satisfying Eq.~\eqref{eq:v6gauge} gives
parameters
\begin{equation}
 F_S=SFS^{-1},\qquad D_{e,S}=e_{j_e}(v_e^\top S^{-1})
 \label{eq:v6alternatives}
\end{equation}
with the same $L,K$ and all the same mean responses to these modes. The
linear space of admissible corrections $\Delta S$ has dimension
\begin{equation}
 g_{\rm row}=(q-\rank L)(q-\rank\Omega).
 \label{eq:v6dimension}
\end{equation}
If $(F,\Omega)$ is controllable, i.e.,
$\rank[\Omega,F\Omega,\ldots,F^{q-1}\Omega]=q$, every nonidentity member
changes $F$.
\end{proposition}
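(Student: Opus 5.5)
We prove the three claims in order: invariance of all mean responses, the dimension count, and the controllability statement.

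\emph{Invariance.} Fix an invertible $S=I_q+\Delta S$ satisfying Eq.~\eqref{eq:v6gauge}. Then $LS=L$, so $LS^{-1}=L$. Also $S\Omega=\Omega$, hence $SK=K$ and $Se_j=e_j$ for every $j\in J$; inverting gives $S^{-1}K=K$ and $S^{-1}e_j=e_j$.

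For the primed system, powers conjugate: $F_S^\tau=SF^\tau S^{-1}$. Hence $LF_S^\tau=LF^\tau S^{-1}$, so $O_T^{(S)}=O_TS^{-1}$. Likewise $F_S^\tau K=SF^\tau K$, so $R_s^{(S)}=SR_s$. Therefore
\[
H_0^{(S)}=O_TS^{-1}SR_s=H_0,\qquad H_+^{(S)}=O_TS^{-1}SFS^{-1}SR_s=H_+.
\]
For an intervention mode we have $D_{e,S}=e_{j_e}v_e^\top S^{-1}$ and $Se_{j_e}=e_{j_e}$, so $D_{e,S}=S D_e S^{-1}$. It follows that $H^{[e]}_{(S)}=O_TS^{-1}S(F+D_e)S^{-1}SR_s=H^{[e]}$. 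The same computation holds blockwise, which also covers arbitrary horizons and insertion times. Since $j_e$ is unchanged, $D_{e,S}$ is again a single-row change at the same target, so the alternative respects the model.

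\emph{Dimension.} The constraints $L\Delta S=0$ and $\Delta S\Omega=0$ are linear in $\Delta S$. The first says each column of $\Delta S$ lies in $\ker L$. The second says each row of $\Delta S$ annihilates $\operatorname{im}\Omega$. Together they identify the admissible space with $\{\Delta S=NMP^\top\}$, where $N$ is a basis matrix of $\ker L$ (with $q-\rank L$ columns), $P$ is a basis matrix of $(\operatorname{im}\Omega)^\perp$ (with $q-\rank\Omega$ columns), and $M$ is arbitrary. The map $M\mapsto NMP^\top$ is injective because $N$ and $P$ have full column rank, so the space has dimension $(q-\rank L)(q-\rank\Omega)$. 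This is the tensor product $\ker L\otimes(\operatorname{im}\Omega)^\perp$.

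The statement counts the linear space of corrections, not only those for which $S$ is invertible. The invertible ones form an open dense subset containing $0$, so no further argument is needed here.

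\emph{Controllability.} This is the only step that needs an idea. Suppose $F_S=F$, i.e.\ $SF=FS$. Since $S\Omega=\Omega$, induction gives $SF^k\Omega=F^kS\Omega=F^k\Omega$ for every $k$. Thus $S$ fixes every column of $[\Omega,F\Omega,\ldots,F^{q-1}\Omega]$. If this matrix has rank $q$, its columns span $\R^q$, so $S=I_q$. Contrapositively, every nonidentity admissible $S$ changes $F$.

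The main care point is ensuring that the intervention coefficients transform consistently, i.e.\ that $D_{e,S}$ still equals $SD_eS^{-1}$. This relies on $J$ containing every compared target, so that $Se_{j_e}=e_{j_e}$; this is why the proposition requires it. Note also that $L(\Delta S)=0$ is used only through $LS^{-1}=L$, which follows from $LS=L$ by right-multiplying by $S^{-1}$.
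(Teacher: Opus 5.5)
Your proof is correct and follows the paper's argument. The identities $LS=L$, $SK=K$, $Se_j=e_j$ give invariance by conjugation, your $NMP^\top$ parametrization is the matrix form of the paper's ``linear maps from $\R^q/\operatorname{im}\Omega$ into $\ker L$'', and the controllability step is identical. The one difference is minor: the paper checks invariance for an arbitrary finite word of transitions, $L(SM_{a_n}S^{-1})\cdots(SM_{a_1}S^{-1})K=LM_{a_n}\cdots M_{a_1}K$, which covers switched schedules beyond the single-insertion blocks you verify through $O_TS^{-1}$ and $SR_s$.
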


$\Delta S$ maps unanchored directions into the sensor-invisible space.
Full coverage removes this family, explaining how interventions fix physical
coordinates. The result describes a coordinate family, not all models
compatible with finite responses; the argument below also treats special
dynamics. The coverage condition is sufficient, not a universal minimum on
intervention count.

\begin{proof}
Let $J$ contain the targets of every included intervention and let
$S=I_q+\Delta S$ be invertible with $L(\Delta S)=0$ and $(\Delta S)[K,E_J]=0$.
Then $LS=L$, $SK=K$, and $Se_j=e_j$ for $j\in J$.
Consequently $SD_eS^{-1}=e_{j_e}(v_e^\top S^{-1})$ remains in the same
row-local model. If $M_a$ is any included baseline or intervention transition,
then for any finite word of these transitions,
\[
 L(SM_{a_n}S^{-1})\cdots(SM_{a_1}S^{-1})K
 =LSM_{a_n}\cdots M_{a_1}S^{-1}K
 =LM_{a_n}\cdots M_{a_1}K.
\]
Thus all such mean responses agree, including the measured single-insertion
episodes. The admissible corrections $\Delta S$ are precisely linear maps from
$\R^q/\operatorname{im}[K,E_J]$ into $\ker L$, giving the dimension in
Eq.~\eqref{eq:v6dimension}. Invertibility holds in a neighborhood of $\Delta S=0$.
If $SFS^{-1}=F$, $S$ commutes with $F$ and fixes every column of the
controllability matrix of $(F,[K,E_J])$; if that matrix has rank $q$, $S=I_q$.
This proves the explicit family statement independently of a full-rank
baseline Hankel matrix. It does not exhaust all finite-window fits.
\end{proof}

\paragraph{Exhaustive coordinate maps within a shared realization.}
Under \eqref{app:v6eq:shared}, factor a nonzero
\(\Delta_e=h_{j_e}w_e^\top\). Lemma~\ref{app:v6lem:contrast}'s labeling and
scaling argument has the realization-space counterpart
\(h_j=\Phi_0^{-1}e_j\), hence \(\Phi_0h_j=e_j\). Let
\(H_J=[h_j]_{j\in J}\) and \(G=[B,H_J]\). Candidate anatomical maps solve
\begin{equation}
 L\Phi=C,\qquad \Phi B=K,\qquad \Phi H_J=E_J.
 \label{app:v6eq:anchor-system}
\end{equation}

\begin{theorem}[Affine completion and residual gauge]
\label{app:v6thm:gauge}
Assume an invertible true solution \(\Phi_0\) of
\eqref{app:v6eq:anchor-system}. Then
\begin{equation}
 \rank G=\rank[K,E_J]=:r_{\Omega},
 \label{app:v6eq:g-rank}
\end{equation}
and the affine space of all, not necessarily invertible, solutions has
dimension
\begin{equation}
 \boxed{(q-\rho)(q-r_{\Omega})}.
 \label{app:v6eq:gauge-dim}
\end{equation}
Every invertible solution near \(\Phi_0\) is \(\Phi=S\Phi_0\), where
\begin{equation}
 LS=L,\qquad SK=K,\qquad Se_j=e_j\quad(j\in J).
 \label{app:v6eq:row-gauge}
\end{equation}
Conversely, every invertible \(S\) satisfying
\eqref{app:v6eq:row-gauge} gives another solution. Thus the anatomical map is
unique exactly when \(\rho=q\) or \(r_{\Omega}=q\).

Uniqueness here concerns the coordinate map within the assumed common
realization. Injectivity does not by itself supply missing dynamical
excitation; Appendix~\ref{app:v6injective-baseline} characterizes the
baseline ambiguity that can remain even when $\rho=q$.

The associated transition is \(F_S=SFS^{-1}\). If the pair
\((F,[K,E_J])\) is controllable, every nonidentity admissible \(S\) changes
\(F\); under that additional condition, positive dimension in
\eqref{app:v6eq:gauge-dim} is also a necessity result for identifying \(F\).
Without it, coordinates may be ambiguous while a special \(F\) happens to be
invariant.
\end{theorem}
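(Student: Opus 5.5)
The plan is to reduce everything to the linear system \eqref{app:v6eq:anchor-system} and its homogeneous part.

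\textbf{Rank identity.} Since $\Phi_0$ is invertible and solves the system, $\Phi_0B=K$ and $\Phi_0H_J=E_J$. Hence $\Phi_0G=[K,E_J]$. Left multiplication by an invertible matrix preserves rank, which gives \eqref{app:v6eq:g-rank} immediately.

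\textbf{Dimension.} The solution set is the affine space $\Phi_0+\mathcal N$, where
\[
\mathcal N=\{\Delta\Phi:\ L\Delta\Phi=0,\ \Delta\Phi\,G=0\}.
\]
The two conditions say that the columns of $\Delta\Phi$ lie in $\ker L$ and that $\Delta\Phi$ vanishes on $\operatorname{im}G$. So $\mathcal N$ is isomorphic to the space of linear maps from $\R^q/\operatorname{im}G$ into $\ker L$. Concretely, choose a basis of $\R^q$ extending a basis of $\operatorname{im}G$; then $\Delta\Phi$ is free exactly on the $q-r_\Omega$ complementary basis vectors, each sent into the $(q-\rho)$-dimensional space $\ker L$. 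This gives dimension $(q-\rho)(q-r_\Omega)$, using the rank identity above to replace $\rank G$ by $r_\Omega$.

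\textbf{Gauge parametrization.} For an invertible solution $\Phi$, set $S=\Phi\Phi_0^{-1}$. Then:
\begin{itemize}
\item $LS=L\Phi\Phi_0^{-1}=C\Phi_0^{-1}=L$;
\item $SK=\Phi\Phi_0^{-1}\Phi_0B=\Phi B=K$;
\item $Se_j=\Phi h_j=e_j$ for $j\in J$.
\end{itemize}
Conversely, if $S$ is invertible and satisfies \eqref{app:v6eq:row-gauge}, then $S\Phi_0$ satisfies all three equations by the same computation run backwards, and it is invertible. In fact every invertible solution has this form, not only those near $\Phi_0$; the nearness clause just guarantees existence of invertible members, since $\Phi_0+\Delta\Phi$ is invertible for small $\Delta\Phi$.

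\textbf{Uniqueness.} The map is unique iff $\mathcal N=0$, i.e.\ iff $(q-\rho)(q-r_\Omega)=0$. The "if" direction is immediate. For "only if", when $\mathcal N\neq0$ the affine family contains invertible members arbitrarily close to $\Phi_0$ other than $\Phi_0$ (scale a nonzero $\Delta\Phi$ small). So uniqueness holds exactly when $\rho=q$ or $r_\Omega=q$.

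\textbf{Transition.} Under $\Phi=S\Phi_0$, the anatomical transition is $\Phi A\Phi^{-1}=SFS^{-1}$.
\begin{itemize}
\item If $SFS^{-1}=F$, then $S$ commutes with $F$. Since $S$ fixes every column of $[K,E_J]$, it fixes every column of $F^k[K,E_J]$. Under controllability these columns span $\R^q$, so $S=I_q$. Thus every nonidentity admissible $S$ changes $F$.
\item Under controllability, positive dimension in \eqref{app:v6eq:gauge-dim} yields a nonidentity invertible $S$ near $I_q$, hence a distinct $F_S$. By Proposition~\ref{prop:v6gauge}, $F_S$ has identical mean responses, which gives the necessity statement.
\end{itemize}

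\textbf{Main obstacle.} The only delicate point is keeping "solution of the linear system" separate from "invertible solution" in the dimension count and in the uniqueness equivalence. This is handled by the openness argument above.
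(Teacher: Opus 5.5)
Your proof is correct and follows the paper's argument essentially step for step. It uses rank preservation under the invertible $\Phi_0$, the homogeneous system viewed as linear maps $\R^q/\operatorname{im}G\to\ker L$ together with openness of invertibility, the substitution $S=\Phi\Phi_0^{-1}$, and the commutation argument that fixes the controllability matrix. Two of your remarks go slightly beyond what the paper states explicitly, and both are valid: the parametrization holds for every invertible solution, not only nearby ones, and the necessity clause follows by combining a nearby nonidentity $S$ with Proposition~\ref{prop:v6gauge}.
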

\begin{proof}
Multiplication by \(\Phi_0\) maps the columns of \(G\) bijectively to those of
\([K,E_J]\), proving \eqref{app:v6eq:g-rank}. If
\(\Delta\Phi=\Phi-\Phi_0\), the homogeneous equations are \(L\Delta\Phi=0\) and \((\Delta\Phi)G=0\).
They describe arbitrary linear maps from
\(\R^q/\operatorname{im}G\), of dimension \(q-r_{\Omega}\), into
\(\ker L\), of dimension \(q-\rho\), proving
\eqref{app:v6eq:gauge-dim}. Invertibility is open, so every nonzero
homogeneous direction supplies nearby invertible alternatives.

For an invertible solution set \(S=\Phi\Phi_0^{-1}\). Substitution in
\eqref{app:v6eq:anchor-system} gives \eqref{app:v6eq:row-gauge}; the reverse
substitution proves the converse. The transition represented by the same
latent \(A\) is \(\Phi A\Phi^{-1}=SFS^{-1}\). If this equals \(F\), then \(S\)
commutes with \(F\). Together with \(S[K,E_J]=[K,E_J]\), it fixes every column
of the controllability matrix of \((F,[K,E_J])\). Full row rank of that matrix
forces \(S=I_q\).
\end{proof}

The alternative intervention remains row local at the same labeled target:
\begin{equation}
 SD_eS^{-1}=e_{j_e}(v_e^\top S^{-1}).
 \label{app:v6eq:gauged-row}
\end{equation}
For noninjective $L$ and incomplete coverage, this family contains nonidentity
coordinate changes preserving every switched response word generated by the
same known inputs; they change $F$ under the controllability condition above.
This is a population mean-response equivalence. It is not
automatically a full-law equivalence when innovation covariances, state priors,
or other coordinate-specific noise structure are fixed rather than transformed
as nuisance parameters. Equation \eqref{app:v6eq:gauge-dim} is also
conditional on the target labels; discrete permutations can remain when
leadfield columns are projectively indistinguishable.

\subsection{Conditional deterministic stability}
\label{app:v6stability}

For an exposed intervention $e$ with target $j=j_e$, write
\(\Delta H_e=o_j\psi_e^\top\), with \(o_j=O_Te_j\) and
$\psi_e^\top=v_e^\top R_s$, and let
\(\widehat{\Delta H}_e=\Delta H_e+E_e\). Put
\(\sigma_e=\|o_j\|_2\|\psi_e\|_2\) and
\(\varepsilon_e=\|E_e\|_2<\sigma_e\). For nonzero vectors $w_1,w_2$, define
the projective angle, which ignores their arbitrary signs, by
\[
 \angle_{\rm proj}(w_1,w_2)
 =\arccos\frac{|w_1^\top w_2|}{\|w_1\|_2\|w_2\|_2}
 \in[0,\pi/2].
\]
A standard rank-one singular-subspace
perturbation bound \citep{stewart1990} gives, for the leading unit left
singular vector \(\widehat u_e\),
\begin{equation}
 \sin\angle_{\rm proj}(\widehat u_e,o_j)
 \leq \frac{\varepsilon_e}{\sigma_e-\varepsilon_e}.
 \label{app:v6eq:wedin}
\end{equation}
Let
\begin{equation}
 \alpha_j=\frac{\|L_{:,j}\|_2}{\|o_j\|_2},
 \qquad
 \mu_j=\min_{i\neq j}\angle_{\rm proj}(L_{:,j},L_{:,i}),
 \qquad
 d_e=\frac{\sqrt2\,\varepsilon_e}{\sigma_e-\varepsilon_e}.
 \label{app:v6eq:label-quantities}
\end{equation}
After aligning the arbitrary sign of \(\widehat u_e\), its Euclidean distance
from \(o_j/\|o_j\|_2\) is at most \(d_e\). Therefore, if
\begin{equation}
 d_e<\frac{\alpha_j\sin(\mu_j/2)}{1+\sin(\mu_j/2)},
 \label{app:v6eq:label-condition}
\end{equation}
the first block of \(\widehat u_e\) is nonzero and is strictly closer
projectively to \(L_{:,j}\) than to any other leadfield column.

\paragraph{Anatomical scale-calibration error.}
Write $b=\|o_j\|_2$, $u=o_j/b$, and align the sign of $\widehat u_e$
so that $\|\widehat u_e-u\|_2\leq d_e$.
Let $w$ and $\widehat w=w+\Delta w$ be their first sensor blocks,
so $\|w\|_2=\alpha_j$ and $\|\Delta w\|_2\leq d_e$.
At the correct label, $L_{:,j}=bw$ and the fitted scale obeys
\[
 \frac{\gamma_e}{b}-1
 =-\frac{(w+\Delta w)^\top\Delta w}{\|w+\Delta w\|_2^2}.
\]
Hence, for $d_e<\alpha_j$,
\begin{equation}
 \left|\frac{\gamma_e}{b}-1\right|
 \leq\frac{d_e}{\alpha_j-d_e},\qquad
 \frac{\|\widehat o_j-o_j\|_2}{\|o_j\|_2}
 \leq d_e+\frac{d_e}{\alpha_j-d_e}.
 \label{app:v6eq:scale-bound}
\end{equation}
The last inequality follows by writing
$\gamma_e\widehat u_e-bu=(\gamma_e-b)\widehat u_e+b(\widehat u_e-u)$.
The calibrated-column error is unchanged by flipping the singular-vector sign
and its fitted scale together.
The label condition in Eq.~\eqref{app:v6eq:label-condition} implies
$d_e<\alpha_j$, so it also ensures this calibration bound is finite.

\paragraph{Completion and temporal shift.}
Conditional on correct labels, let
\(\widehat Y=Y+E_Y\) and use the exact known \(\Omega\). Then
\begin{equation}
 \|\widehat{O}_T-O_T\|_2
 \leq \frac{\|E_Y\|_2}{\sigma_{\min}(\Omega)}.
 \label{app:v6eq:completion-bound}
\end{equation}
If
\(\widehat{O}_-=O_-+E_-\),
\(\widehat{O}_+=O_++E_+\),
\(\beta=\sigma_{\min}(O_-)\), and
\(\|E_-\|_2<\beta\), the shifted least-squares estimate satisfies
\begin{equation}
 \|\widehat F-F\|_2
 \leq
 \frac{\|E_+\|_2+\|E_-\|_2\|F\|_2}
 {\beta-\|E_-\|_2}.
 \label{app:v6eq:shift-bound}
\end{equation}

For completeness, \eqref{app:v6eq:label-condition} follows because restriction
to the first block cannot increase the vector perturbation. If $w$ is the
true first block and $\delta w$ its perturbation, then
$\sin\angle_{\rm proj}(w,w+\delta w)\leq
\|\delta w\|_2/(\|w\|_2-\|\delta w\|_2)$; the displayed condition makes
this smaller than \(\sin(\mu_j/2)\). Equation
\eqref{app:v6eq:completion-bound} follows from
\((\widehat Y-Y)\Omega^\dagger\). For \eqref{app:v6eq:shift-bound}, use
\[
 \widehat F-F
 =\widehat{O}_-^\dagger(E_+-E_-F),
 \qquad
 \|\widehat{O}_-^\dagger\|_2
 \leq(\beta-\|E_-\|_2)^{-1}.
\]

\paragraph{Proof of Proposition~\ref{prop:v6stability}.}
Set $\delta=\|E_Y\|_2/\sigma_{\min}(\Omega)$.
Equation~\eqref{app:v6eq:completion-bound} gives total observation-map error
at most $\delta$. If $E_O=\widehat O_T-O_T$ before enforcing the known
top block $L$, the resulting error after replacement is
$PE_O$, where $P=\diag(0_{m\times m},I_{m(T-1)})$ is an orthogonal
row projection. Thus its spectral norm does not increase.
Selecting the shifted row stacks is also contractive, so both
$\|E_-\|_2$ and $\|E_+\|_2$ are at most $\delta$.
If $\delta<\beta$, the estimated $O_-$ retains full column rank. Substituting
these inequalities into Eq.~\eqref{app:v6eq:shift-bound} proves
\[
 \|\widehat F-F\|_2\leq
 \frac{(1+\|F\|_2)\delta}{\beta-\delta}.
\]
The bounds separate four bottlenecks: contrast exposure \(\sigma_e\), scalp
visibility and projective label separation \((\alpha_j,\mu_j)\), coordinate
coverage \(\sigma_{\min}(\Omega)\), and dynamic inversion \(\beta\).
They are deterministic: finite-sample probabilities for target and rank
decisions require a response-noise model. In particular, an anatomy-only
score cannot control the unknown exposure factor \(v_e^\top R_s\).

\subsection{Scope of the identification claims}
\label{app:v6scope}

The results identify population-mean dynamics in a finite-dimensional linear
time-invariant model with known \(L,K\) and row-local, state-preserving changes
for one transition. The identified interactions are lagged source dynamics,
not an instantaneous causal graph or a particular innovation realization.
A rank-one contrast alone does not establish a mechanism change: a local
additive impulse can have the same source-specific left factor.

Timing and locality determine the contrast structure. Persistent changes
produce higher-order terms such as \(FD_e+D_eF+D_e^2\), rather than the single
outer product in \eqref{app:v6eq:contrast}. A diffuse rank-one change can also
share a local immediate scalp signature: if $h\in\ker L$, then
$(e_j+h)v^\top$ has immediate direction proportional to \(L_{:,j}\).
Within the stated protocol, exposure, anatomical calibration, coordinate
coverage, and temporal observability provide a constructive route to
identifying the source dynamics.

\section{Experimental protocols and complete results}
\label{app:v6experiments}

This appendix documents the full-reachability estimator comparison, its
general-row control, and a separate fresh-cohort/source-count study. Each study
is reported under its own cohort and protocol; results are not pooled across
studies. Responses and source activity are simulated using anatomical
observation operators from four Localize-MI geometries. The incomplete-
reachability experiment has its own dynamics generator and acquisition grid
in Appendix~\ref{app:v6partial}. Unless stated otherwise, the defaults below
concern the full-reachability comparison.

\subsection{Anatomy, dynamics, and replication}
\label{app:v6data}

The observation operators come from Localize-MI \citep{mikulan2020}, geometries
01, 03, 05, and 07. Each reduced operator $L\in\R^{8\times12}$ maps twelve
fixed cortical-normal candidate sources to eight independent sensor contrasts
formed from nine electrodes. Columns are normalized once before experiments,
fixing dimensionless source units. The biological input is anatomy, not
recorded scalp EEG or stereoelectroencephalography. Source number, locations,
orientations, and $L$ are known; source amplitudes are not calibrated as
physical currents.

To distinguish electrodes from independent contrasts, let
$L_{\rm elec}\in\R^{9\times12}$ denote the average-referenced electrode
operator in the same normalized source units. The experiment uses
$L=Q^\top L_{\rm elec}$, where $Q\in\R^{9\times8}$ satisfies
$Q^\top Q=I_8$ and $Q^\top\mathbf1_9=0$, and $\mathbf1_9$ has nine ones.
The nine-row electrode view and eight-row fitting operator therefore describe
the same centered sensor subspace without changing source units.

For the full-reachability comparison, $F\in\R^{12\times12}$ is generated by
independently masking uniform $[-0.6,0.6]$ entries with Bernoulli probability
$0.3$, replacing the diagonal with uniform $[0.05,0.45]$ entries, adding $0.3$
along a directed cycle, and rescaling its spectral radius to $0.85$. No system
is rejected on conditioning or recovery outcomes. The known initialization
map is $K=[e_1,e_2,e_3,e_4]$, using one-based source indices throughout the
manuscript (indices 0--3 in the implementation). The eight remaining
coordinates are the intervention targets; these labels are hidden from the
blind estimators. The primary noisy experiment uses
$D_e=-\eta_e e_{j_e}e_{j_e}^{\top}F$, with independent
$\eta_e\sim\operatorname{Unif}[0.2,0.6]$. This gain-suppression family is a
subclass of the general receiving-row model. A separate general-row control
is described in Appendix~\ref{app:v6arbitrary}.

Development used seed indices 0--3. Before inspecting these outcomes, the
internal protocol specified the four anatomy subsets and source normalization,
initialization and target directions, dynamics and gain-change generators,
$T=s=6$, the noise law and equal baseline/active allocation, and
$n\in\{16,64,256,1024\}$ with primary budget $n=256$.
It also specified the held-out reconstruction task, error metrics and joint
10\% success criterion. Development led to adding the equally informed
Spectral-local comparator, evaluated on the same development seeds before
confirmation. This addition did not change the Direct estimator, generators,
noise, budget grid or primary outcomes.

Main confirmation uses fresh seed indices
1000--1039, crossed with four fixed geometries, four budgets, and five methods:
640 anatomy--seed--budget cases and 3,200 method rows. A seed reuses dynamics,
intervention strengths, held-out states, and matched noise across geometries;
the 160 anatomy--seed pairs at one budget are not 160 independent systems or
patients. Dynamics, intervention, acquisition, and evaluation have separate
random-stream namespaces. Bootstrap comparisons resample the 40 complete
seed clusters, preserving anatomy and method pairing. The protocol
and implementation were fixed before confirmation, without external
preregistration.

\subsection{Acquisition and trial accounting}
\label{app:v6acquisition}

Each episode starts with controlled mean $\mathbb E[z_0]=K_{:,i}$ for input
direction $i\in\{1,\ldots,4\}$. An active episode uses $F+D_e$ at one insertion
transition $\tau_{\mathrm{int}}\in\{0,\ldots,5\}$ and $F$ at every other transition. Baseline
and active episodes all record thirteen samples, $\tau=0,\ldots,12$. For an
insertion at $\tau_{\mathrm{int}}$, relative output time $t=0,\ldots,T-1$ selects absolute
time $\tau=\tau_{\mathrm{int}}+1+t$. With six output lags and six insertion times, the
estimator's $48\times24$ baseline, shifted-baseline, and active blocks have
expectations $O_TR_s$, $O_TFR_s$, and $O_T(F+D_e)R_s$, where $T=s=6$.
Repeated baseline Hankel entries reuse the same recorded sample and are
not independent observations.

Figure~\ref{fig:v6state-change}B illustrates an intervention at
$\tau_{\mathrm{int}}=0$.

Single-trial process innovations have covariance $0.002^2I$. Sensor noise is
stationary Gaussian, with channel covariance $0.01^2\,0.3^{|i-j|}$ and temporal
autoregressive coefficient $0.4$. Episodes are independent within an
acquisition. Dividing every sensor and process innovation by the square root
of the condition's repetition count samples the exact Gaussian law of its
trial mean, including propagated process noise and temporal/channel
correlations. It is not independent matrix-entry noise. Across designs and
budgets, matched standard Gaussian draws are scaled by repetition count;
these are paired counterfactual acquisitions, not nested collections of trials.

For $n$ repeats per input, active mode, and insertion time, each of the four
baseline input conditions has $48n$ repeats. Thus
\[
 N_{\rm tot}=4(48n+8\cdot6n)=384n,
 \qquad n\in\{16,64,256,1024\}.
\]
Half the acquisition budget is baseline and half active. The resulting
budgets are 6,144, 24,576, 98,304, and 393,216 trial equivalents. Gaussian
averaging avoids materializing individual trials while preserving the
represented acquisition cost.

\subsection{Estimators and information access}
\label{app:v6comparators}

Every estimator receives the same measured arrays and known $L,K$ within a
case; additional information supplied to controls is specified below.

\paragraph{Direct contrast reconstruction.}
\label{app:v6algorithm}
Algorithm~\ref{alg:v6direct} in Section~\ref{sec:v6direct-estimation} gives the
procedure. This appendix specifies its numerical guards, repeated-target
handling, least-squares solves and information access.

Every column of $L$ must be nonzero. A leading contrast singular value at most
$\epsilon_\Delta=10^{-14}$ or first-block norm at most $\epsilon_u=10^{-12}$
produces an invalid estimate. These are numerical degeneracy guards, not
statistical detection thresholds. Each intervention supplies its own paired
column in $\Omega$ and $\widehat Y$, even if inferred labels repeat.
No deduplication, pooling, or one-to-one assignment is imposed. In the
correct-label analysis, the corresponding noiseless matrix is $Y=O_T\Omega$
with the same column ordering and multiplicities.

The implementation computes $\widehat O_T$ by solving
$\Omega^\top\widehat O_T^\top\simeq\widehat Y^\top$ and then solves the temporal
shift by ordinary least squares, rather than explicitly forming pseudoinverses.
Rank checks use \texttt{numpy.linalg.matrix\_rank} with its default tolerance,
and both solves use \texttt{scipy.linalg.lstsq} with its default singular-value
cutoff. Missing coordinate coverage or insufficient numerical observability
produces an invalid outcome.

\paragraph{Spectral-local reconstruction.}
This equally informed comparator first forms a shared Ho--Kalman realization
from the same noisy baseline Hankel matrix. It estimates each intervention
change in those common coordinates, extracts its leading left direction $h$,
and labels/scales it through $Ch=L_{:,j}$. It then solves the anatomical equations
$L\Phi=C$ and $\Phi[B,h_1,\ldots,h_8]=[K,e_{j_1},\ldots,e_{j_8}]$ by unweighted
least squares. The changes supplied to this procedure are estimated
realization-coordinate matrices, not the true $D_e$. The comparison isolates
direct contrast extraction versus extraction after full baseline realization;
it does not optimize over all estimators using the same information.

\paragraph{Information and misspecification controls.}
Target-oracle direct receives the true labels but otherwise uses the direct
estimator. The calibrated commutator receives the true full $D_e$, providing
strictly more intervention information; it is a particular unweighted spectral
estimator, not a statistically optimal oracle. The diagonal-only approximation
receives labels but replaces a receiving-row change with a diagonal trace
approximation. It tests the effect of misspecifying the intervention, rather
than serving as a competitive baseline. Performance of these controls therefore
does not rank information sets or establish that calibration is harmful.

Neither blind method receives true $F,D_e,\eta_e$, source states, or held-out
outcomes. No source-truth rotation, rescaling, or permutation is used to repair
the estimates. The evaluation concerns physical-coordinate recovery under
this shared information contract.

\subsection{Held-out reconstruction and confirmation results}
\label{app:v6recovery}

Figure~\ref{fig:v6recovery-overview}C--D presents the budget curves and paired source errors for this confirmation study.

For each system, 32 independent $z_{\rm init}\sim\mathcal N(0,I_{12})$ produce
eight held-out observations $y_\tau=LF^\tau z_{\rm init}+\xi_\tau$,
$\tau=0,\ldots,7$, with sensor-noise standard deviation $0.001$. There are no
new process innovations in these held-out mean trajectories. The estimator
infers $\widehat z_{\rm init}=\widehat O_8^{\dagger}y$ from the entire sequence
$y=[y_0^\top,\ldots,y_7^\top]^\top\in\R^{64}$ and propagates
$\widehat z_\tau=\widehat F^\tau\widehat z_{\rm init}$. True $z_{\rm init}$ is
not supplied. Let $Z_{\rm init}\in\R^{12\times32}$ collect the initial states
and $Z\in\R^{96\times32}$ stack eight true source states per episode column;
hats denote their reconstructed counterparts. We report
\[
 e_F=\frac{\Frob{\widehat F-F}}{\Frob F},\qquad
 e_z=\frac{\Frob{\widehat Z-Z}}{\Frob Z},\qquad
 e_{\rm init}=\frac{\Frob{\widehat Z_{\rm init}-Z_{\rm init}}}{\Frob{Z_{\rm init}}}.
\]
Here normalized root-mean-square error (NRMSE) equals the aggregate norm ratio.
Joint success requires $e_F\leq0.1$ and $e_z\leq0.1$. Invalid estimates retain
infinite raw errors and unsuccessful status in every denominator and quantile.
This is offline anatomical mean-trajectory reconstruction, not forecasting
or localization error in millimeters.

\begin{table}[ht]
\centering\small
\caption{Main confirmation at 98,304 trial equivalents. All rows contain 160
anatomy--seed cases. Errors are percentages; p90 uses the nearest observed
order statistic and includes invalid outcomes.}
\label{tab:v6fullcomparison}
\begin{tabular}{lrrrrr}
\toprule
Method & Valid & Success & Median $e_F$ & Median $e_z$ & p90 $e_z$\\
\midrule
Direct &160&155&2.925&2.203&4.279\\
Spectral-local &153&141&3.252&2.361&9.204\\
Target-oracle direct &160&155&2.925&2.203&4.279\\
Calibrated full-$D$ commutator &160&124&3.771&3.116&15.754\\
Diagonal-only approximation &160&0&250.988&47.935&54.743\\
\bottomrule
\end{tabular}
\end{table}

All 1,280 direct target labels are correct at the primary budget. Direct
source error is smaller in 123/160 paired cases, including the seven spectral-local
failures. Its success counts by geometry are 39/40, 40/40, 39/40, and 37/40.
The success difference between direct and spectral-local is 8.75 percentage points, with
a 10,000-resample seed-cluster percentile interval $[3.125,15]$
percentage points. This uncertainty is conditional on the simulation design,
not a biological-population interval. At the four increasing budgets direct
succeeds in 69, 126, 155, and 160 cases out of 160; spectral-local succeeds
in 39, 107, 141, and 156. Across all 3,200 rows, three direct and 61 spectral-local
estimates are invalid. The first tested budget meeting 90\% success on every
geometry differs by a factor of four; the grid does not estimate a continuous
sample-complexity ratio. At the primary development budget, the corresponding
counts were 12/16 for direct and 13/16 for spectral-local; development cases
are kept separate from confirmation.

\paragraph{Sensitivity to model assumptions.}
In separately fixed stress tests with 20 paired seeds per geometry, a 1\%
fitting-leadfield perturbation yields 79/80 direct successes at 98,304 trial equivalents
and 80/80 at 393,216. Adding a neighboring receiving-row change at 0.1 of the
primary suppression coefficient yields 65/80 and 69/80, respectively:
additional repeats do not remove the bias from nonlocal changes. The 0.1
coefficient ratio is not a measured response-energy ratio. The separate noisy
test in Section~\ref{sec:v6partial} and Appendix~\ref{app:v6partial} examines
incomplete baseline reachability using general row changes, since proportional
gain suppression cannot expose a strictly unreachable receiving row.

\subsection{General-row control}
\label{app:v6arbitrary}

A separate study, specified before outcomes, uses seed indices 5000--5039 per geometry, with
geometry included in the dynamics/noise namespace. Its eight-target
arbitrary-row arm is a \emph{prespecified secondary control}: the broader
study compared target counts under an additional exact-gain assumption.
This arm tests the general-row model as a secondary control, separately
from the main cohort. It contains 160
independent simulated systems across four fixed geometries, each measured
at 98,304 trial equivalents.

Each incoming-row direction is independently isotropic and norm-matched to
$\eta_j\|F_{j,:}\|_2$, rather than proportional to $F_{j,:}$. Direct recovers
157/160 systems, with source median/p90 2.106\%/4.157\%; spectral-local
recovers 139/160, with 2.915\%/9.989\%. Valid counts are 160/160 and 153/160,
respectively. These results support general receiving-row changes without
requiring the estimator to know the row coefficients or assume proportional
suppression.

\subsection{Fresh-seed check and source-count scaling}
\label{app:v6fresh}
\label{app:v6scaling}
An exploratory study, with its protocol fixed before outcomes,
tests the same direct estimator on fresh systems and larger dictionaries.
The configurations $(q,m,r,E)=(12,8,4,8),(24,16,8,16),(48,32,16,32)$ use
9, 17, and 33 electrodes, respectively. Thus sources, measurements,
initialization directions, and intervention targets co-scale; sensor count
is not held fixed. For each anatomy, nested electrode/source subsets are
chosen by deterministic farthest-point selection starting at maximum height,
without using activity or recovery outcomes. After removing the reference
component and normalizing columns, the smallest operator reproduces the
main $12$-source/$8$-measurement operator within $10^{-14}$.

Twenty fresh seed blocks 6000--6019 are shared across four anatomies and
budget regimes, giving 80 cases per configuration. Different sizes use
different dynamics, not nested subgraphs. The dynamics recipe, unknown
gain-suppression interventions, noise law, horizons $T=s=6$, and held-out
evaluation follow the full-reachability experiment. Both blind methods
receive identical arrays. Spectral-local uses an algebraically equivalent
singular-value decomposition for its final anatomical least-squares solve,
avoiding a large Kronecker matrix. Equivalence to the original implementation
was checked before the comparison.

Each active input/target/insertion condition has $n$ repeats; each baseline
input has $Esn$ repeats. Hence $N_{\rm tot}=2rEsn$. A fixed total of 98,304
trial equivalents gives $n=256,64,16$ at the three sizes. Holding $n=256$
instead raises total cost by factors of four and sixteen. These regimes
share systems and scaled noise streams. The $q=12$ cell belongs to both
regimes and is generated and counted only once.

\begin{table}[htbp]
\centering\small
\setlength{\tabcolsep}{3.5pt}
\caption{Full fresh-cohort grid. Each entry compares direct/spectral-local
on 80 cases. $N_{\rm tot}$ counts trial equivalents. Errors are percentages;
medians and p90 are inverse empirical
cumulative-distribution quantiles including infinite invalid outcomes.
The shared $q=12$ row appears once.}
\label{tab:v6scaling}
\begin{tabular}{@{}lrrcccc@{}}
\toprule
$q/m$ & $n$ & $N_{\rm tot}$ & Success & Valid & Median $e_z$ & p90 $e_z$\\
\midrule
12/8 &256&98,304&63/57&73/69&2.69/3.48&32.85/$\infty$\\
24/16&64&98,304&42/14&80/73&7.35/13.08&12.60/34.16\\
48/32&16&98,304&0/0&77/0&26.02/$\infty$&38.02/$\infty$\\
\midrule
24/16&256&393,216&77/65&80/80&3.62/5.33&6.42/10.77\\
48/32&256&1,572,864&60/19&80/75&6.32/8.94&10.80/39.60\\
\bottomrule
\end{tabular}
\end{table}

The fresh $q=12$ cohort gives 63/80 direct and 57/80 spectral-local successes,
compared with 155/160 and 141/160 in the main confirmation at the same
size and budget. Cohorts are reported separately. Post-hoc diagnostics find
three seed blocks with no direct successes across their four anatomies and
small minimum exposure, approximately $0.00128$, $0.00353$, and $0.00366$
versus cohort median $0.06691$. This association is consistent with exposure-
dependent conditioning; it does not isolate exposure as the sole cause.

At fixed total cost, neither method succeeds at 48 sources. Increasing
repeats yields 77/80 direct successes at 24 sources and 60/80 at 48, compared
with 65/80 and 19/80 for spectral-local. At 48 sources and the larger budget,
all direct target labels are correct, but only 60/80 systems meet the joint
error criterion. Correct labels therefore do not alone ensure accurate
dynamics. All 48 noiseless method checks pass on the first two prespecified
seeds per size and anatomy, with maximum relative $F$ error below
$1.97\times10^{-13}$. The median true observability singular value
$\beta=\sigma_{\min}(O_-)$ falls from 0.270 to 0.185 to 0.109. These
truth-based diagnostics are not supplied to either fitter.

Exploratory 10,000-resample intervals preserve the 20 complete seed
clusters and paired methods. Success differences (direct minus spectral-local) are
7.5 percentage points ($[0,17.5]$) at $q=12$, with the interval including zero;
at $q=24$ they are 35
($[18.75,51.25]$) at fixed budget and 15 ($[5,27.5]$) at fixed repeats; at
$q=48$ and fixed repeats they are 51.25 ($[36.25,65]$). Both methods have
zero successes in the fixed-budget $q=48$ cell. The study demonstrates
reconstruction beyond twelve sources while exposing the cost and conditioning
of this co-scaling protocol; it does not identify a minimum required budget
or isolate source count from the accompanying changes in sensors and inputs.

\subsection{Computational provenance}
\label{app:v6verification}

All 640 measured datasets and 3,200 fitted outcomes in the main study were
independently checked, including an independent NumPy direct
completion and separate metric calculation. The general-row study has its
own numerical verification. Figure-data validation checks complete
metric-table grids, source checksums, validity/success semantics, and
summary algebra for Figure~\ref{fig:v6recovery-overview}C--D.
The scaling study is verified separately. Infinite spectral-local failures in
panel D are marked in a
separate, nonmetric upper strip, not assigned finite measured errors.

The scaling study's validation covers 800 metric rows, 400 saved model
archives, source/geometry hashes, and equivalence of the two implementations
of the spectral solve. Its 12 summary rows were checked against all 800 metric rows,
including success, validity, and unconditional error quantiles. The shared
$q=12$ rows are deduplicated only in presentation, not treated as a new
replication. Appendix~\ref{app:v6partial} documents the incomplete-reachability
study and its independent numerical verification.

\section{Recovery under incomplete baseline reachability}
\label{app:v6partial}

This separate simulation tests the distinction between baseline
reachability and exposed coordinate coverage in Theorem~\ref{thm:v6identification}.
It uses the same four anatomy-derived operators as Appendix~\ref{app:v6data},
but a different dynamics generator, general receiving-row changes, and a new
noise/budget grid. All responses and dynamics are simulated. The protocol
was fixed before outcomes.

\subsection{Exact baseline reachability restriction}

There are twelve sources, four initialization directions, eight independent
sensor contrasts, and horizons $T=s=6$. Draw $G\in\R^{12\times12}$ with
independent standard Gaussian entries. Starting from
$0.5I_{12}+0.25G/\sqrt{12}$, set the lower-left $8\times4$ block to zero and
rescale the spectral radius to $0.85$. This produces the block form in
Eq.~\eqref{eq:v6partial-block}. The known $K=[I_4;0]$ has rank four, and
$F^\tau K=[A^\tau;0]$ for every nonnegative $\tau$. Baseline reachability
therefore has dimension exactly four even at infinite horizon. Process noise
can excite other states in individual trials without changing this
\emph{conditional-mean} restriction. The estimator is not told the zero-block
pattern.

For each of eight intervention modes, draw independent unit vectors
$a_e\in\R^4$ and $b_e\in\R^8$. Assign the modes a random permutation of
targets $5,\ldots,12$, hidden from fitting. The exposed and unexposed
row-change vectors are respectively
\begin{equation}
 v_e^{\rm exp}=\begin{bmatrix}0.35a_e\\0.20b_e\end{bmatrix},
 \qquad
 v_e^{\rm unexp}=\begin{bmatrix}0\\\sqrt{0.35^2+0.20^2}\,b_e\end{bmatrix}.
 \label{eq:v6partial-rows}
\end{equation}
Both produce nonzero local changes $D_e=e_{j_e}v_e^\top$ of the same
Frobenius norm, approximately $0.4031$. The first satisfies
$(v_e^{\rm exp})^\top K=0.35a_e^\top\ne0$; the second satisfies
$(v_e^{\rm unexp})^\top R_s=0$ for every horizon. The design uses general row
changes, not proportional suppression of an unreachable row. The three
fixed conditions are all eight exposed, only the final mode unexposed, and
all eight unexposed. They share $F,L,K$, targets, and row directions, with
no additive offset or further drive.

Eight fresh seed indices, 6100--6107, are used per anatomy. Anatomy is part
of the dynamics, intervention, acquisition, and evaluation random-stream
namespaces: the 32 cases are independent simulated draws conditional on four
fixed geometries, not independent participants. Each system is drawn once,
without conditioning-based selection or outcome-based replacement. Exact
audits of all 96 system/condition combinations give
\[
 \rank L=8,\quad \rank R_6=\rank H_0=4,\quad \rank O_-=12.
\]
Observability is checked, not implied by the block construction. Its smallest
singular value ranges from $0.0755$ to $0.2508$. Leadfield columns are nonzero
and pairwise nonproportional; maximum pairwise absolute line cosines range
from $0.8209$ to $0.9904$ across geometries. In the all-exposed condition,
$\Omega=[K,E_J]$ is a column permutation of $I_{12}$. All 32 noise-free blind fits
recover the targets, with maximum relative dynamics error
$3.36\times10^{-15}$.

\subsection{Acquisition and evaluation}

The trial law and shared response lags follow Appendix~\ref{app:v6acquisition}.
The two single-trial sensor/process standard-deviation pairs are
$(0.01,0.002)$ and $(0.04,0.008)$ in normalized units. Sensor covariance
retains channel correlation $0.3^{|i-j|}$ and temporal autoregressive
coefficient $0.4$; process innovations are independent isotropic Gaussian.
We draw exact distributions of Gaussian trial means, including covariance
propagated through each intervention, rather than adding independent noise
to response-matrix entries.

Each acquisition has twelve independent equal batches, with $n/12$ repeats
per input/mode/insertion condition in each batch; the estimators use their
overall mean. For $n=12,48,192$, half the total cost is assigned to shared
baseline episodes:
\[
 N_{\rm tot}=2\cdot4\cdot8\cdot6\,n=384n
 \in\{4{,}608,18{,}432,73{,}728\}.
\]
For example, $n=48$ repeats per active initialization/target/insertion
condition give $N_{\rm tot}=384\cdot48=18{,}432$ trial equivalents, including
the matched baseline allocation.
The same standard Gaussian streams are used across budgets, noise levels,
and exposure conditions. These are paired counterfactual acquisitions, not
nested recordings or independent replications. In particular, the higher-
noise $n=192$ acquisition has the same realized mean errors as lower-noise
$n=12$, because $4/\sqrt{192}=1/\sqrt{12}$. Those repeated cells are not
pooled as independent evidence.

The direct and spectral-local estimators defined in Appendix~\ref{app:v6comparators}
receive only $L,K,q$
and measured response blocks. Neither receives true targets, row coefficients,
states, reachability rank, or a true-parameter initialization. Direct matching
is independent across modes without a supplied one-to-one assignment.

An additional gated-direct variant accepts the same direct estimate only if
all eight modes pass an observed-data exposure test. Write the twelve batch
contrasts as $\widehat{\Delta H}_e^{(b)}$, $b=1,\ldots,12$, and define
\[
 T_e(\omega)=\left\|\frac1{12}\sum_{b=1}^{12}
 \omega_b\widehat{\Delta H}_e^{(b)}\right\|_{\rm F}^2,
 \qquad \omega_b\in\{-1,+1\}.
\]
The raw $p$-value is the fraction of sign patterns with statistic at least
$T_e(\mathbf1_{12})$, counting ties conservatively with a floating-point
tolerance. Exact enumeration of $2^{11}$ patterns suffices because a global
sign leaves $T_e$ unchanged. Holm adjustment at level $0.05$ is applied to
the eight nulls $\Delta H_e=0$; all eight must be rejected to pass.
Under an unexposed mode, independent centered-Gaussian batch contrasts are
jointly invariant to batchwise sign changes. Whole matrices, not their
correlated entries, are the sign units; a zero mean alone would not ensure
this symmetry. Baselines are independent across batches and shared across
modes within a batch, a dependence allowed by Holm adjustment. Batching
adds no trials. The gate tests observable exposure, not correct labels,
rank, anatomical calibration, or biological locality.

Evaluation uses 32 independent standard-Gaussian initial states covering all
twelve coordinates. Their length-eight mean trajectories are observed through
$L$ with independent sensor noise of standard deviation $0.001$. These test
states and observations are separate from fitting and shared across conditions.
As in Section~\ref{sec:v6experiments}, each fitted model estimates the initial
state from all eight observations. The endpoint is offline source reconstruction.

\subsection{Recovery under full exposure}

Table~\ref{tab:v6partial-errors} contains every all-exposed case. Direct and
gated-direct results coincide: all 32 estimates pass the gate, are finite,
and have correct targets at each cell. Success requires
both errors in Eq.~\eqref{eq:v6metrics} to be at most $10\%$. Reported 90th
percentiles use the nearest-order-statistic convention (index
$\operatorname{round}[0.9(32-1)]$ in a zero-based sorted list), without
interpolation or success-only filtering.

\begin{table}[ht]
 \centering
 \small
 \setlength{\tabcolsep}{4pt}
 \caption{All-exposed direct contrast reconstruction. Error columns are percentages;
 $N_{\rm tot}$ counts trial equivalents.
 Each row contains 32 cases; noise/budget rows are paired. SD denotes standard
 deviation; med. and p90 denote median and the specified 90th percentile.}
 \label{tab:v6partial-errors}
 \begin{tabular}{@{}lrrcrrrr@{}}
 \toprule
 Sensor/process SD & $n$ & $N_{\rm tot}$ & Success &
 \multicolumn{2}{c}{$e_F$ (\%)} & \multicolumn{2}{c}{$e_z$ (\%)}\\
 \cmidrule(lr){5-6}\cmidrule(l){7-8}
 & & & & med. & p90 & med. & p90\\
 \midrule
 0.01/0.002 & 12  & 4,608  & 24/32 & 6.33 & 10.29 & 8.08 & 11.85\\
 0.01/0.002 & 48  & 18,432 & 32/32 & 3.24 & 5.24  & 3.97 & 6.02\\
 0.01/0.002 & 192 & 73,728 & 32/32 & 1.66 & 2.64  & 1.99 & 3.04\\
 0.04/0.008 & 12  & 4,608  & 0/32  & 26.50 & 37.15 & 29.52 & 40.88\\
 0.04/0.008 & 48  & 18,432 & 3/32  & 12.79 & 19.70 & 15.34 & 22.50\\
 0.04/0.008 & 192 & 73,728 & 24/32 & 6.33 & 10.29 & 8.08 & 11.85\\
 \bottomrule
 \end{tabular}
\end{table}

Repetition improves reconstruction without changing baseline reachability.
Fourfold larger noise produces a substantial failure regime despite complete
coverage, correct labels, and unchanged exact ranks. The final higher-noise
row repeats the first lower-noise row by design. This distinguishes exact
identification from the precision needed for successful finite-budget recovery.

The spectral-local method returns zero valid or successful
reconstructions in every all-exposed cell. Its full-order baseline
factorization attempts to infer twelve coordinates from a population Hankel
matrix of rank four; incorrect or repeated target labels fail coverage. It
therefore serves here as a boundary comparator outside its full-baseline-rank
assumption, not as a test of every possible joint or latent-system estimator.

\subsection{Exposure controls and false confidence}

Table~\ref{tab:v6partial-controls} retains both unexposed controls and applies
at every paired noise/budget cell. Let $J_{\rm exp}$ denote targets with
nonzero population contrasts. Then $[K,E_{J_{\rm exp}}]$ has ranks 12, 11,
and 4 in the three conditions. Nonzero intervention strength does not itself
supply the missing response column.

\begin{table}[ht]
 \centering
 \small
 \setlength{\tabcolsep}{4pt}
 \caption{Exposure controls, with 32 cases per paired cell. ``Gate'' means
 all eight modes pass; ``Finite'' counts valid numerical estimates, not
 successful recovery. Success counts are the same for direct and gated-direct.
 The three false passes are the same systems across grid cells.}
 \label{tab:v6partial-controls}
 \begin{tabular}{@{}lccccc@{}}
 \toprule
 Condition & Coverage rank & Gate & Finite direct & Finite gated & Success\\
 \midrule
 All exposed & 12 & 32/32 & 32/32 & 32/32 & Table~\ref{tab:v6partial-errors}\\
 One unexposed & 11 & 3/32 & 1/32 & 1/32 & 0/32\\
 All unexposed & 4 & 0/32 & 0/32 & 0/32 & 0/32\\
 \bottomrule
 \end{tabular}
\end{table}

In the one-unexposed condition, the gate falsely accepts the missing mode
in 3/32 systems (9.375\%). This observed fraction does not establish empirical
error at or below 5\%, nor does it contradict the conditional level-5\% test.
In the all-unexposed condition, one system has one falsely rejected mode,
but none passes all eight. These events recur on the paired noise streams;
they are not independent confirmations at each budget or noise level.

One one-unexposed system (geometry 07, seed 6107) passes the gate and gives
a finite direct and gated estimate because a noise direction happens to
match the missing target.
At the largest lower-noise budget, its dynamics and source errors are
74.79\% and 54.65\%. Correct target labels and a passed exposure gate
therefore do not establish calibration of the temporal response. Neither
unexposed control yields a successful recovery: increasing repeats cannot
make its zero population contrast nonzero.

To distinguish information loss from estimator failure, an independent
validator constructs a different $F_S=SFS^{-1}$ preserving $L,K$ and all
included \emph{mean response blocks} for each unexposed control system.
Exposed changes remain receiving-row local; unexposed changes retain their
original nonzero rows and remain invisible on the baseline reachable space.
Across 64 witnesses, maximum mean-response disagreement is
$1.12\times10^{-16}$ and minimum relative dynamics difference is 2.13\%.
These witnesses concern the specified mean-response protocol, not equality
of full noisy distributions or arbitrary repeated-switch schedules. They
are separate constructions, not applications of the coordinate-gauge
proposition that drop unexposed nonzero modes from its preservation conditions.

\subsection{Independent numerical audit}

The study contains 576 acquisitions and 1,728 fitted records for direct,
gated-direct, and spectral-local; all outcomes are retained. An independent
validator recomputed all fitted metrics, all 4,608 mode $p$-values
using the full $2^{12}$ sign orbit, 72 acquisitions, 96 exact-system audits,
and 64 ambiguity witnesses. Every $p$-value and Holm decision matched,
and every check passed. Maximum metric disagreements were
$9.72\times10^{-17}$ for dynamics and
$6.11\times10^{-16}$ for source error; source, geometry, and output hashes
are preserved. Together, the exposed and unexposed conditions test the information
requirement of the theorem under the same anatomy and acquisition law.

\section{Injective leadfields and dynamical excitation}
\label{app:v6injective}

An injective leadfield separates instantaneous state readout from
identification of the state-transition rule. We derive the consequences for
identification and stability, then test excitation and sensor count on the
same four anatomical geometries using new simulated systems and matched
physical-channel noise.

\subsection{Identification with full-column-rank leadfields}
\label{app:v6injective-theory}

Assume the known $L\in\R^{m\times q}$ has full column rank $q$; thus $m\ge q$.
When $m=q$, this assumption requires invertibility, not just equal dimensions.

\begin{corollary}[Identification with an injective leadfield]
\label{cor:v6injective}
If $\rank L=q$, its columns are nonzero and pairwise nonproportional, and
$\rank O_-=q$ for every $T\ge2$. Hence exposed changes with $\rank\Omega=q$
satisfy Theorem~\ref{thm:v6identification} without an additional
observability condition. For square $L$ and $T=2$, reconstruction reduces to
$F=L^{-1}O_{\rm bottom}$, where $O_{\rm bottom}=LF$ is the second sensor block
of $O_2$.
\end{corollary}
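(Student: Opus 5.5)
The proof is a direct verification of four elementary facts, followed by an appeal to Theorem~\ref{thm:v6identification}.

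\emph{Columns of $L$.} If $\rank L=q$, the columns $L_{:,1},\ldots,L_{:,q}$ are linearly independent. So no column can vanish, since a zero column gives a nontrivial relation $e_j$ in $\ker L$. Likewise no two columns can be proportional, since $L_{:,i}=cL_{:,j}$ with $i\neq j$ puts $e_i-ce_j\neq0$ in $\ker L$. This gives the anatomical distinguishability hypothesis of Theorem~\ref{thm:v6identification}, and by Lemma~\ref{app:v6lem:contrast} exposed contrasts are labelled and calibrated.

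\emph{Observability.} For $T\ge2$, the stack $O_-$ contains $L$ as its first block. Hence $O_-h=0$ forces $Lh=0$, and therefore $h=0$. So $\ker O_-=\{0\}$ and $\rank O_-=q$ for every $T\ge2$, with no hypothesis on $F$. Combined with $\rank\Omega=q$, both conditions in Eq.~\eqref{eq:v6conditions} hold. Theorem~\ref{thm:v6identification} (equivalently Theorem~\ref{app:v6thm:completion}) then yields $O_T=Y\Omega^\dagger$ and $F=O_-^\dagger O_+$, with no separate observability assumption.

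\emph{Square case.} For square $L$ with $\rank L=q$, $L$ is invertible. For $T=2$ we have $O_-=L$ and $O_+=LF=O_{\rm bottom}$, the second block of $O_2=Y\Omega^\dagger$. Since $O_-^\dagger=L^\dagger=L^{-1}$ for an invertible matrix, the formula becomes $F=L^{-1}O_{\rm bottom}$.

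No step is a real obstacle. The one point to handle carefully is the claim ``for every $T\ge2$'': it rests only on the first block of $O_-$ being $L$, and not on any Vandermonde or observability-matrix argument.
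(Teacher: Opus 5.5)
Your proposal is correct and takes essentially the same route as the paper. It uses the kernel argument for nonzero, pairwise nonproportional columns, the fact that $O_-$ contains $L$ as its first block, and $O_-=L$, $O_+=LF$ with $L^\dagger=L^{-1}$ for $T=2$. The one difference is that the paper writes the observability step as $\|O_-a\|_2\ge\|La\|_2\ge\sigma_{\min}(L)\|a\|_2$, which also gives $\sigma_{\min}(O_-)\ge\sigma_{\min}(L)$ for the later injective stability bound, whereas your kernel argument gives only the rank statement the corollary needs.
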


Every column is nonzero and no two columns are proportional, because either
defect would create a nonzero vector in $\ker L$. For $T\ge2$, the stack
$O_-$ contains $L$, and therefore
\begin{equation}
 \|O_-a\|_2^2\ge\|La\|_2^2
 \ge\sigma_{\min}(L)^2\|a\|_2^2\qquad(a\in\R^q).
 \label{eq:v6injective-observability}
\end{equation}
Consequently $\rank O_-=q$ and
$\sigma_{\min}(O_-)\ge\sigma_{\min}(L)>0$ without a dynamical observability
condition. The contrast factorization in Eq.~\eqref{eq:v6contrast} is
unchanged: it follows from receiving-row locality, not from noninjectivity.
Theorem~\ref{thm:v6identification} thus applies with exposure and full
coordinate coverage, and with its anatomical separation and temporal-rank
requirements automatically satisfied. These are exact-rank statements;
near-collinear columns can still make noisy labeling difficult.

For $T=2$, the observation stack is simply $O_2=[L;LF]$. Let
$\widehat O_{\rm bottom}\in\R^{m\times q}$ denote the lower block of the
estimated stack after replacing its first block by the known $L$. The shift
solve is then $\widehat F=L^\dagger\widehat O_{\rm bottom}$, or
$L^{-1}\widehat O_{\rm bottom}$ when $m=q$. A sharper conditional bound
follows because this regressor is exact.

\begin{proposition}[Conditional stability with an injective known readout]
\label{prop:v6injective-stability}
Assume correct target labels, $\rank\Omega=q$ and $\rank L=q$. Construct
$\widehat O_T=\widehat Y\Omega^\dagger$, overwrite its first block by $L$, and
solve the temporal shift by full-rank least squares. With $\delta$ defined in
Eq.~\eqref{eq:v6errorlevel}, for every $T\ge2$,
\begin{equation}
 \|\widehat F-F\|_2\le
 \frac{(1+\|F\|_2)\delta}{\sigma_{\min}(L)}.
 \label{eq:v6injective-stability}
\end{equation}
For $T=2$, the sharper bound is
\begin{equation}
 \|\widehat F-F\|_2\le
 \frac{\|\widehat Y-Y\|_2}
 {\sigma_{\min}(L)\sigma_{\min}(\Omega)}.
 \label{eq:v6injective-two-block}
\end{equation}
Neither bound needs $\delta<\sigma_{\min}(O_-)$.
\end{proposition}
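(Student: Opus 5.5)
The plan is to rerun the argument behind Proposition~\ref{prop:v6stability}, with one change. The perturbation lower bound $\sigma_{\min}(\widehat O_-)\ge\beta-\|E_-\|_2$ is replaced by a bound that holds no matter how large the error is. This is possible because the overwritten estimate $\widehat O_-$ contains the \emph{exact} block $L$.

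First, the error bookkeeping. Write $E_O=\widehat Y\Omega^\dagger-O_T=(\widehat Y-Y)\Omega^\dagger$. Eq.~\eqref{app:v6eq:completion-bound} then gives $\|E_O\|_2\le\delta$. Overwriting the first block by $L$ replaces $E_O$ by $PE_O$, where $P=\diag(0_{m\times m},I_{m(T-1)})$ is an orthogonal row projection, exactly as in the proof of Proposition~\ref{prop:v6stability}. Selecting row stacks is also contractive. Hence $\widehat O_\pm=O_\pm+E_\pm$ with $\|E_\pm\|_2\le\delta$. Moreover, the first $m$ rows of $E_-$ vanish, so the first block of $\widehat O_-$ equals $L$.

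Second, the key step: a uniform conditioning bound. For any $a\in\R^q$,
\[
 \|\widehat O_-a\|_2\ge\|La\|_2\ge\sigma_{\min}(L)\|a\|_2,
\]
as in Eq.~\eqref{eq:v6injective-observability}. This inequality holds for the perturbed stack, not just the true one. Therefore $\widehat O_-$ has full column rank, so the least-squares solve is full-rank and $\widehat O_-^\dagger\widehat O_-=I_q$. It also gives $\|\widehat O_-^\dagger\|_2=1/\sigma_{\min}(\widehat O_-)\le1/\sigma_{\min}(L)$, with no smallness condition on $\delta$.

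Next, using $O_+=O_-F$ and $\widehat O_-^\dagger\widehat O_-=I_q$, I would write
\[
 \widehat F-F=\widehat O_-^\dagger(\widehat O_+-\widehat O_-F)=\widehat O_-^\dagger(E_+-E_-F).
\]
Taking norms gives $\|\widehat F-F\|_2\le(\delta+\delta\|F\|_2)/\sigma_{\min}(L)$, which is Eq.~\eqref{eq:v6injective-stability}.

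For $T=2$, $\widehat O_-=L$ exactly, so $E_-=0$. The second block of $\widehat O_2$ is the only surviving block of $PE_O$, so $\|E_+\|_2\le\|E_O\|_2\le\|\widehat Y-Y\|_2/\sigma_{\min}(\Omega)$. Then $\widehat F-F=L^\dagger E_+$ and $\|L^\dagger\|_2=1/\sigma_{\min}(L)$ give Eq.~\eqref{eq:v6injective-two-block}.

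The only delicate point is the uniform conditioning bound in the second step. One must check that the overwrite really leaves an exact, unperturbed $L$ inside $\widehat O_-$ for every $T\ge2$; it does, since $O_-$ always begins with the first block. Once that is in place, the hypothesis $\delta<\sigma_{\min}(O_-)$ is unnecessary, because $\sigma_{\min}(\widehat O_-)\ge\sigma_{\min}(L)$ holds outright.
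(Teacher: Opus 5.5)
Your proposal is correct and follows the paper's proof essentially step for step. Both bound the completed-stack error by $\delta$, treat the overwrite as a contractive row projection, and use the exact $L$ block inside $\widehat O_-$ to get $\sigma_{\min}(\widehat O_-)\ge\sigma_{\min}(L)$ without any smallness condition. Both then apply $\widehat F-F=\widehat O_-^\dagger(E_+-E_-F)$, with $E_-=0$ and $\widehat O_-=L$ giving the sharper $T=2$ bound.
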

\begin{proof}
Before overwriting, the stack error is
$(\widehat Y-Y)\Omega^\dagger$, of norm at most $\delta$. Overwriting multiplies
this error on the left by the orthogonal projection that zeros the first
block, so it cannot increase its norm. Write $E_-$ and $E_+$ for the two
shifted error stacks; both have norm at most $\delta$. Since
$\widehat O_-$ contains the exact $L$, the argument in
Eq.~\eqref{eq:v6injective-observability} gives
$\sigma_{\min}(\widehat O_-)\ge\sigma_{\min}(L)$. The least-squares identity
\[
 \widehat F-F=\widehat O_-^\dagger(E_+-E_-F)
\]
then proves Eq.~\eqref{eq:v6injective-stability}. For $T=2$, $E_-=0$ and
$\widehat O_-=L$, proving Eq.~\eqref{eq:v6injective-two-block}.
\end{proof}

With exact $L$, the shift regressor stays full rank regardless of the
column-estimation error. Correct labeling and calibration remain required,
and a small $\sigma_{\min}(L)$ still amplifies errors. The numerical study
below uses $T=s=6$ and Algorithm~\ref{alg:v6direct}; the special $T=2$ formula is
an analytical consequence.

\subsection{Coordinate ambiguity and dynamical excitation}
\label{app:v6injective-baseline}

Full column rank makes $L(\Delta S)=0$ imply $\Delta S=0$. Thus the displayed coordinate
family in Proposition~\ref{prop:v6gauge} has $g_{\rm row}=0$ even when $\Omega$
is rank deficient. This does not imply that all dynamics matching the
available responses are identical. To characterize what baseline means
alone leave undetermined, define the baseline reachable subspace
\begin{equation}
 \mathcal R=\operatorname{span}\{K,FK,F^2K,\ldots\},
 \qquad d=\dim\mathcal R.
 \label{eq:v6baseline-subspace}
\end{equation}
Here the span is of all columns of the displayed matrices.

\begin{proposition}[Exact ambiguity of baseline mean dynamics]
\label{prop:v6baseline-affine}
With known full-column-rank $L$ and known $K$, a matrix $F'$ has the same
baseline mean responses $LF'^\tau K=LF^\tau K$ for every $\tau\ge0$ if and only if
\begin{equation}
 F'=F+\Delta F,\qquad \Delta F\,\mathcal R=\{0\}.
 \label{eq:v6baseline-affine}
\end{equation}
Without additional restrictions on $F'$, this affine family has dimension
$q(q-d)$.
\end{proposition}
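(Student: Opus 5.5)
Because $L$ has full column rank, it is injective and can be cancelled from the left. The condition $LF'^\tau K=LF^\tau K$ for all $\tau\ge0$ is therefore equivalent to $F'^\tau K=F^\tau K$ for all $\tau\ge0$. The plan is to prove that this equality of reached states holds exactly when $F'$ agrees with $F$ on $\mathcal R$. We then count the dimension of the resulting affine family.

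For the ``if'' direction, assume $\Delta F\,\mathcal R=\{0\}$ and argue by induction on $\tau$. The case $\tau=0$ is trivial. Suppose $F'^\tau K=F^\tau K$. Then
\[
F'^{\tau+1}K=F'(F^\tau K)=F^{\tau+1}K+\Delta F\,F^\tau K=F^{\tau+1}K,
\]
because every column of $F^\tau K$ lies in $\mathcal R$.

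For the ``only if'' direction, assume $F'^\tau K=F^\tau K$ for all $\tau$. Subtracting the identities at $\tau+1$ gives
\[
0=F'F'^\tau K-FF^\tau K=(F'-F)F^\tau K,
\]
where we used $F'^\tau K=F^\tau K$. So $\Delta F$ annihilates every column of every $F^\tau K$, and hence annihilates their span $\mathcal R$.

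For the dimension, the admissible $\Delta F$ form the linear space of maps $\R^q\to\R^q$ that vanish on a $d$-dimensional subspace. These are exactly the linear maps from the quotient $\R^q/\mathcal R$, of dimension $q-d$, into $\R^q$, so the space has dimension $q(q-d)$. Concretely, choose an invertible basis matrix $[U_{\mathcal R},U_\perp]$ whose first $d$ columns span $\mathcal R$. Then $\Delta F\,[U_{\mathcal R},U_\perp]=[0,W]$, where $W\in\R^{q\times(q-d)}$ is arbitrary. This parametrizes the family $F+\Delta F$ bijectively.

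No step is a real obstacle. The only point needing care is the ``only if'' direction: it needs the equality for all $\tau$, not just finitely many. By Cayley--Hamilton, $\tau\le q$ already suffices, since $\mathcal R$ is spanned by $F^\tau K$ for $\tau<q$. One should also note that no invertibility or other restriction on $F'$ is imposed, so the whole affine family is admissible.
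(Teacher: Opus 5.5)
Your proof is correct and follows the paper's argument step for step: cancel the injective $L$, obtain $(F'-F)F^\tau K=0$ from consecutive equalities for necessity, use induction with the $F$-invariance of $\mathcal R$ for sufficiency, and count the admissible $\Delta F$ as linear maps from $\R^q/\mathcal R$ into $\R^q$. Your Cayley--Hamilton remark, that equality for $\tau\le q$ already suffices, is a valid refinement the paper does not state, although it contradicts your preceding sentence saying equality is needed for all $\tau$.
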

\begin{proof}
Left inversion of $L$ makes equality of baseline outputs equivalent to
$F'^\tau K=F^\tau K$. If these states agree, then for every $\tau$,
$(F'-F)F^\tau K=F'F'^\tau K-FF^\tau K=0$, proving necessity. Conversely, $K$ has
columns in $\mathcal R$, this subspace is invariant under $F$, and
$\Delta F$ vanishes on it. Induction gives $F'^\tau K=F^\tau K$ for all $\tau$.
Finally, $\Delta F$ is an arbitrary linear map from
$\R^q/\mathcal R$ to $\R^q$, giving $q(q-d)$ free dimensions.
\end{proof}

For $q=12,d=4$, there are 96 such directions before additional model
constraints. If $F$ is strictly stable, sufficiently small perturbations in
this family remain stable. The statement concerns the controlled conditional
means used in this paper. Full stochastic recordings and noise covariances
can carry additional information under additional assumptions; those are not
identified or used here. If instead $d=q$, baseline state transitions with
a spanning set of predecessors determine $F$ without any interventions.

\subsection{Identification through response propagation}
\label{app:v6injective-propagation}

Injectivity also permits an alternative construction that does not assemble
$O_T$ column by column. Left-invert each measured contrast block and define
\begin{equation}
 C_{e,t,\tau_{\mathrm{int}}}:=L^\dagger\Delta H_e[t,\tau_{\mathrm{int}}]
 =F^tD_eF^{\tau_{\mathrm{int}}} K\in\R^{q\times r},\qquad t=0,\ldots,T-1.
 \label{eq:v6injective-contrast-state}
\end{equation}
All consecutive post-insertion blocks obey
$C_{e,t+1,\tau_{\mathrm{int}}}=FC_{e,t,\tau_{\mathrm{int}}}$. The unknown modified transition has already
ended at relative output time $t=0$. Stack baseline predecessor states and selected
$C_{e,t,\tau_{\mathrm{int}}}$ for $0\le t<T-1$ into a matrix
$W_-$ with $q$ rows; stack their corresponding successors into an
equally sized matrix $W_+$. Then
\begin{equation}
 W_+=FW_-,\qquad
 F=W_+W_-^\dagger\quad\text{if }\rank W_-=q.
 \label{eq:v6injective-propagation}
\end{equation}
This state-readout regression uses only transitions under baseline $F$ and
requires neither target labels nor changed coefficients. Its left-inversion
step is specific to injective $L$: when $m<q$, $L^\dagger L\ne I_q$, and
Eq.~\eqref{eq:v6injective-contrast-state} need not recover the true source
states.

The conditions $\rank\Omega=q$ and $\rank W_-=q$ are distinct. The first is
sufficient for the paper's direct column-completion construction, but is
not necessary for the second route: propagation may turn a few exposed
directions into many linearly independent states. For example, take
\begin{equation}
 L=I_3,\quad K=e_1,\quad
 F=\begin{bmatrix}0.6&0.2&0.1\\0&0.4&0.3\\0&0.25&0.5\end{bmatrix},
 \quad D=e_2e_1^\top.
 \label{eq:v6injective-three-source}
\end{equation}
Baseline means reach only $e_1$. One intervention gives $DK=e_2$; after one
baseline transition it gives $Fe_2=0.2e_1+0.4e_2+0.25e_3$. The three
predecessors $e_1,e_2,Fe_2$ span $\R^3$, and their measured successors
determine $F$ for $T\ge3$, although $\rank[K,e_2]=2$. This example is also
an exact unit test: the direct-coverage target count is sufficient for
column completion but is not a universal intervention lower bound.

\subsection{Matched sensor-count experiment}
\label{app:v6injective-experiment}

\paragraph{Geometry and source units.}
The four Localize-MI geometries correspond to participants 01, 03,
05 and 07 \citep{mikulan2020}. We retain each geometry's twelve fixed
cortical-normal sources and its geometry-only electrode ordering.
Nested sets of 9, 13 and 17 physical electrodes give $m=8,12,16$
independent average-reference contrasts. Subtracting a montage's average
makes the sum of its channels zero, so the square case requires
\emph{13 physical electrodes for 12 independent measurements}.
A geometry-only 12-electrode control has $m=11$. All four geometries have
the expected ranks $8,12,12$ for the three fitted sensor arms and rank 11
for that control. Increasing sensor count does not change source positions,
orientations or units: all column normalizations are inherited from the
original nine-electrode model, rather than recomputed for each montage.

\paragraph{Systems and input regimes.}
There are eight fresh simulation seeds, 7200--7207, per anatomy, giving 32
anatomy--seed cases per cell, not 32 participants. The stable block dynamics
are generated as in Appendix~\ref{app:v6partial}, with spectral radius
$0.85$. Each system is tested with the same $F$ but two known initialization maps:
\begin{equation}
 K_{\rm partial}=\begin{bmatrix}I_4\\0\end{bmatrix},\qquad
 K_{\rm full}=\frac1{\sqrt2}\begin{bmatrix}I_4\\K_{\rm aux}\end{bmatrix},
 \qquad K_{\rm aux}\in\R^{8\times4},\quad K_{\rm aux}^\top K_{\rm aux}=I_4.
 \label{eq:v6injective-inputs}
\end{equation}
The seeded auxiliary input block $K_{\rm aux}$ is fixed before evaluation.
It is distinct from the electrode-reference basis $Q$ in Appendix~\ref{app:v6experiments}.
Both maps have four orthonormal
columns. Partial-input reachability is exactly four at every horizon; full
inputs give $\rank R_6=12$ in all cases, without replacing failed draws.
The eight receiving-row changes, hidden from every estimator, are identical
across input and sensor arms: targets are a seeded permutation of $5$--$12$,
with coefficients distributed as $v_e^{\rm exp}$ in
Eq.~\eqref{eq:v6partial-rows}. These are general row changes, not an assumed
gain law. Input regimes also differ in conditioning: the ratio of largest
to smallest singular value of $\Omega$ is 1 versus 2.414. Under full inputs,
$\sigma_{\min}(R_6)$ ranges from
0.00202 to 0.02614 (median 0.00818), and its condition number ranges from
70.52 to 779.83 (median 211.74). The two input designs therefore differ in both reachability and
conditioning.

\paragraph{Acquisition and equal-budget controls.}
Use $T=s=6$, $n\in\{12,48,192\}$ repetitions per input, intervention and
insertion time, and $48n$ shared baseline repetitions per input. Total cost
is $4(48n+8\cdot6n)=384n$, or 4,608, 18,432 and 73,728 trial equivalents,
each representing an episode of 13 time points. A separately acquired
no-intervention control
allocates the entire same budget to baseline: $96n$ repetitions per input.
It is not the shared baseline half with its cost relabeled.

The Gaussian trial-mean sampler retains process propagation,
shared baseline lags and within-episode correlations. Single-trial
sensor/process standard deviations are $(0.01,0.002)$ and $(0.04,0.008)$.
Noise is generated once in 17 physical channels, then subset and projected
to each montage's orthonormal average-reference basis. Sensor covariance
is $\sigma_\nu^2 0.3^{|i-j|}$ in the fixed electrode-selection order, with
temporal autoregressive coefficient $0.4$; this is an assumed covariance,
not measured physiology. Sensor samples and process noise are genuinely
shared across sensor arms. Common random numbers also pair the input,
noise and repetition arms. These are not independent replications. Equal
episode counts and durations do not equalize hardware cost or the number
of recorded scalar measurements.

\paragraph{Estimators and information access.}
Direct contrast reconstruction uses Algorithm~\ref{alg:v6direct}. Six controls first solve the
known-$L$ source readout by least squares, then estimate $F$: baseline-only,
baseline plus post-insertion contrasts, and separately acquired all-budget
baseline, each using ordinary least squares (OLS) or ridge selected by
generalized cross-validation (GCV). The interventional pair uses exactly the
same response blocks, $L$ and $K$ as direct reconstruction, but no target
labels or row coefficients. It implements Eq.~\eqref{eq:v6injective-propagation}.
The shared-baseline pair uses only half the acquisition budget and is an
information-restriction diagnostic; the all-baseline pair tests an equal
total budget with a different allocation. Baseline response lags are
deduplicated and the known initial state is replaced by $K$; later states
are not supplied by an oracle.

For noisy predecessor/successor estimates $\widehat W_-,\widehat W_+$,
ridge fits minimize
$\|\widehat W_+-F\widehat W_-\|_{\rm F}^2+\lambda\|F\|_{\rm F}^2$.
GCV selects among $\lambda=\alpha\sigma_{\max}(\widehat W_-)^2$,
$\alpha\in\{0,10^{-8},10^{-7},\ldots,10^1\}$, by minimizing the training
residual divided by the squared residual degrees of freedom. No truth or
held-out outcomes select $\alpha$. Only the dynamics regression is
regularized; the $L$ readout is unregularized least squares with a numerical
singular-value cutoff. Correlated columns and errors in the estimated
predecessors make GCV a heuristic sensitivity check, not an optimal
errors-in-variables or likelihood method. At $m=8$, all inverse-based
controls use minimum-norm proxies, not identifiable instantaneous source
states; the scientifically relevant injective comparison is at $m=12,16$.

\paragraph{Evaluation and validation.}
Each fit reconstructs 32 held-out distributed initial states from their
entire eight-step histories, as in Eq.~\eqref{eq:v6metrics}. These are mean
trajectories without hidden process innovations; test sensor noise has
physical-channel standard deviation $0.001$ and temporal coefficient $0.4$.
This is offline reconstruction, not forecasting. Joint success means
$e_F,e_z\le0.10$. All 32 cases remain in each denominator; invalid fits have
infinite errors. The sweep contains 384 physical-response archives and
8,064 estimates (seven methods, three sensor arms). A separate validator
recomputed all metrics and fits, checked archive and code hashes, replayed
96 acquisitions, and verified 11,520 sensor projections, 192 exact audits,
96 ambiguity witnesses and 512 fewer-intervention checks. The protocol was
fixed before the sweep; all outcomes are retained.

\subsection{Effects of invertibility, excitation, and noise}
\label{app:v6injective-results}

\paragraph{Geometry and exact controls.}
At $m=12$, the four condition numbers of $L$ are 135.08, 532.09, 37.27 and
58.34; at $m=16$, they are 25.15, 44.72, 25.03 and 27.43. Median
instantaneous readout errors are 56.08\%, 2.19\% and 0.51\% at $m=8,12,16$,
respectively. The first is a noninjective minimum-norm proxy; the other two
are noisy injective readouts. These are not dynamics errors and do not
depend on the training budget.

With exact means and full reachability, baseline OLS recovers $F$ without
interventions for all 32 systems at both $m=12$ and $16$; maximum $e_F$ is
$2.83\times10^{-13}$. In the partial regime, the stable alternative
$F_{\rm alt}=[A,-C;0,-B]$ differs from $F=[A,C;0,B]$ by 157--174\% in
relative Frobenius norm, yet has identical baseline means at every lag by
block structure (zero numerical difference at all 24 audited lags).
Adding the eight intervention modes allows exact inverse-regression
recovery at $m=12,16$. Direct contrast reconstruction is exact in all 192
system/input/sensor audits, including $m=8$, with maximum
$e_F=6.02\times10^{-15}$.

\begin{table}[ht]
 \centering\small
 \setlength{\tabcolsep}{4pt}
 \caption{Square leadfield ($m=q=12$), lower noise, 18,432 trial equivalents.
 Errors are medians in percent; success requires both errors at most 10\%.
 Each input regime contains the same 32 anatomy--seed cases. OLS and GCV
 have the meanings defined above. ``Half'' means shared baseline only;
 ``all'' denotes the separately acquired equal-total-budget control.}
 \label{tab:v6injective-primary}
 \begin{tabular}{@{}lrrrrrr@{}}
 \toprule
 &\multicolumn{3}{c}{Partial reachability}&\multicolumn{3}{c}{Full reachability}\\
 \cmidrule(lr){2-4}\cmidrule(l){5-7}
 Method & $e_F$ & $e_z$ & Success & $e_F$ & $e_z$ & Success\\
 \midrule
 Direct & 1.62 & 1.78 & 32/32 & 2.45 & 3.06 & 32/32\\
 Inverse + contrasts, OLS & 18.69 & 74.40 & 0/32 & 23.34 & 92.29 & 0/32\\
 Inverse + contrasts, GCV & 18.63 & 75.40 & 0/32 & 23.43 & 95.94 & 0/32\\
 Baseline half, OLS & 704.53 & 363.71 & 0/32 & 27.10 & 42.62 & 3/32\\
 Baseline half, GCV & 81.91 & 195.09 & 0/32 & 24.06 & 31.74 & 3/32\\
 Baseline all, OLS & 704.41 & 363.70 & 0/32 & 17.82 & 25.17 & 7/32\\
 Baseline all, GCV & 82.08 & 199.86 & 0/32 & 18.31 & 24.98 & 7/32\\
 \bottomrule
 \end{tabular}
\end{table}

\paragraph{Noisy square recovery.}
Table~\ref{tab:v6injective-primary} shows that invertibility alone does not
make estimation easy. Direct reconstruction succeeds in all 32 cases of the
primary square cell under each input design. With partial reachability,
OLS and ridge--GCV each give 0/32 successes for all three data allocations:
the same-response interventional control, shared half-budget baseline,
and separately acquired all-budget baseline. With full reachability,
the interventional controls still give 0/32; baseline controls give 3/32
at half budget and 7/32 at full budget, with either OLS or ridge--GCV.
In the partial regime, large baseline OLS errors reflect
fitting unexcited directions from noise after ill-conditioned readout;
ridge controls the growth but cannot supply absent mean-response
information.

The direct method uses the rank-one structure of the entire contrast
history before calibrating and shifting it, whereas the controls first
invert each sensor block. This distinction is consistent with the observed
advantage, although the comparison does not isolate the contributions of
denoising, weighting and regressor error. The inverse-readout controls do not
regularize the leadfield inversion or jointly model correlated response
noise.

\begin{figure}[ht]
 \centering
 \includegraphics[width=\linewidth]{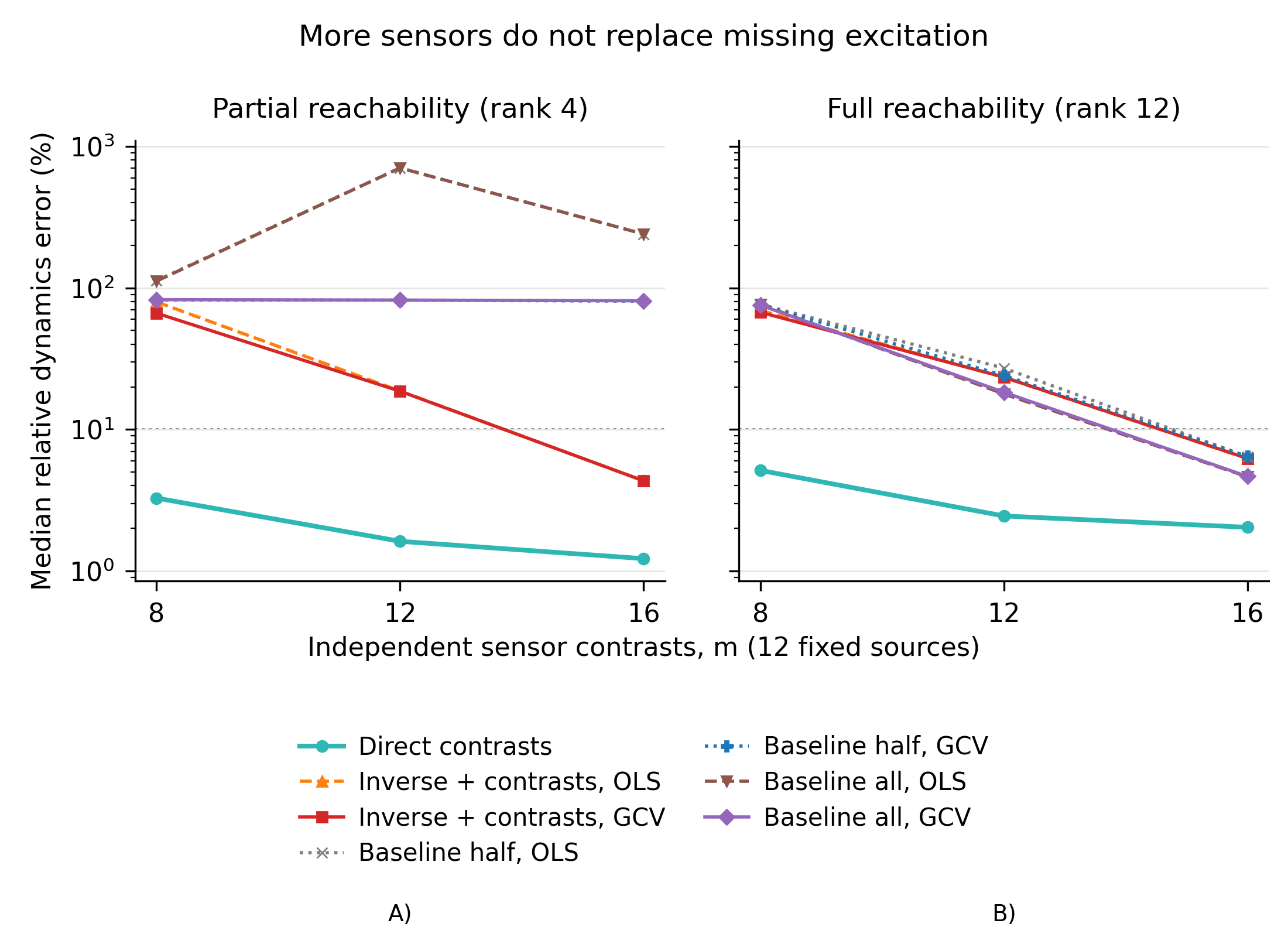}
 \caption{All seven methods across nested sensor counts at lower noise and
 18,432 trial equivalents. A)~Partial baseline reachability (rank four).
 B)~Full baseline reachability (rank twelve).
 Points are median dynamics errors over 32 cases; the
 vertical scale is logarithmic. Inverse-based values at $m=8$ are
 noninjective minimum-norm diagnostics. The horizontal 10\% threshold
 concerns $e_F$ only; joint success also requires $e_z\le10\%$.
 All underlying responses are simulated on MRI-derived anatomy.}
 \label{fig:v6injective-sensors}
\end{figure}

\begin{table}[ht]
 \centering\small
 \setlength{\tabcolsep}{5pt}
 \caption{Direct contrast reconstruction: sensor and noise sensitivities.
 All errors are median percentages over 32 cases. The upper block fixes
 lower noise and 18,432 trial equivalents; the lower block fixes $m=12$ and higher
 noise and lists total budgets in trial equivalents. Rows are paired acquisitions,
 not independent replications.}
 \label{tab:v6injective-sensitivity}
 \begin{tabular}{@{}lrrrrrr@{}}
 \toprule
 &\multicolumn{3}{c}{Partial reachability}&\multicolumn{3}{c}{Full reachability}\\
 \cmidrule(lr){2-4}\cmidrule(l){5-7}
 Condition & $e_F$ & $e_z$ & Success & $e_F$ & $e_z$ & Success\\
 \midrule
 $m=8$, lower noise & 3.27 & 3.90 & 31/32 & 5.14 & 6.68 & 30/32\\
 $m=12$, lower noise & 1.62 & 1.78 & 32/32 & 2.45 & 3.06 & 32/32\\
 $m=16$, lower noise & 1.22 & 1.33 & 32/32 & 2.03 & 2.22 & 32/32\\
 \midrule
 4,608, higher noise & 13.22 & 14.42 & 2/32 & 21.03 & 24.05 & 0/32\\
 18,432, higher noise & 6.56 & 7.16 & 28/32 & 9.82 & 12.48 & 9/32\\
 73,728, higher noise & 3.25 & 3.57 & 32/32 & 4.94 & 6.21 & 32/32\\
 \bottomrule
 \end{tabular}
\end{table}

\paragraph{Excitation and noise sensitivities.}
At $m=16$, full reachability, lower noise and an all-baseline budget of
73,728 trial equivalents,
ordinary OLS achieves 32/32 joint recoveries with median $e_F=2.30\%$ and
$e_z=2.30\%$; GCV achieves 31/32. Thus baseline observations can be
sufficient in finite noise, not merely in an exact algebraic check.
Conversely, partial-input all-budget baseline GCV achieves 0/32 with median
$e_F=80.28\%$ in that same sensor/noise/budget cell. Interventional OLS and
GCV both achieve 32/32 in both input regimes there. Higher-noise square
results in Table~\ref{tab:v6injective-sensitivity} demonstrate failure
despite injectivity and complete direct coverage. Of all 8,064 fits, two
are invalid: direct reconstruction in the full-input, $m=8$, higher-noise
cell with 4,608 trial equivalents. They remain in the reported error and success summaries.

\subsection{Exact propagation with fewer intervention modes}
\label{app:v6injective-fewer}

This separate \emph{noiseless} diagnostic applies the inverse-regression
route in Eq.~\eqref{eq:v6injective-propagation}, not the direct
column-completion estimator. For the same 32 partial-input systems, retain
the first 1, 2, 4 or 8 intervention modes in their fixed seeded order and
use $T=s=6$ or 12. No target or horizon is selected from recovery outcomes.
Table~\ref{tab:v6injective-fewer} gives the square results; the $m=16$ arm
has the same ranks and threshold counts.

\begin{table}[ht]
 \centering\small
 \caption{Exact-mean fewer-mode boundary, $m=q=12$. ``Recovered'' means
 $e_F<10^{-8}$, a numerical threshold distinct from noisy joint success.
 Ranks are common to all 32 systems in each row. There is no added noise.}
 \label{tab:v6injective-fewer}
 \begin{tabular}{@{}rrrrrr@{}}
 \toprule
 $T=s$ & Modes & $\rank\Omega$ & $\rank W_-$ & Recovered & Maximum $e_F$\\
 \midrule
 6 & 1 & 5 & 9 & 0/32 & $5.51\times10^{-1}$\\
 6 & 2 & 6 & 12 & 32/32 & $1.51\times10^{-11}$\\
 6 & 4 & 8 & 12 & 32/32 & $9.80\times10^{-14}$\\
 6 & 8 & 12 & 12 & 32/32 & $1.12\times10^{-14}$\\
 12 & 1 & 5 & 12 & 31/32 & $2.90\times10^{-7}$\\
 12 & 2 & 6 & 12 & 32/32 & $1.25\times10^{-12}$\\
 \bottomrule
 \end{tabular}
\end{table}

For one rank-one intervention and $T=6$, all new predecessor columns lie
in $\operatorname{span}\{e_j,Fe_j,\ldots,F^4e_j\}$. This adds at most five
directions to the four-dimensional baseline space, so
$\rank W_-\le9$, agreeing with the observed rank. Two modes give full
predecessor rank in all cases despite $\rank\Omega=6<12$, and maximum
$e_F=1.51\times10^{-11}$. With one mode and $T=12$, the predecessor matrix
has numerical rank 12 but may be extremely ill conditioned:
$\sigma_{\min}(W_-)$ reaches $4.75\times10^{-10}$. One square-case fit
exceeds the strict $10^{-8}$ error threshold even without added noise.
The $m=16$ maximum error in that row is $5.88\times10^{-8}$, also giving
31/32 below threshold. Both four- and eight-mode horizon-12 arms recover
32/32 at both sensor counts.

Response propagation can therefore identify dynamics with fewer targets
than column completion requires in the injective, exact-mean setting.
Practical design must also account for acquisition duration and conditioning:
longer horizons add recording cost, and nearly dependent propagated
directions amplify noise. The noisy sweep above uses all eight modes; the
fewer-mode diagnostic establishes the algebraic possibility, rather than a
finite-noise sample-efficiency result.

\section{Exploratory comparison with known-leadfield state-space fitting}
\label{app:v6em}

\subsection{Question and scope}

A known forward model also permits iterative estimation of hidden source
states and their dynamics. State-space approaches to cortical connectivity
include \citet{cheung2010statespace} and Network Localized Granger Causality
(NLGC; \citealp{soleimani2022nlgc}). We ask whether the direct response
construction remains useful alongside this estimation route, particularly as
an initialization when baseline controlled means excite only part of the
source space. Our comparator is a calibrated known-$L$, first-order Gaussian
state-space implementation, not a reproduction of either published method.
It does not include NLGC's sparsity or connectivity-testing procedure.
This is an exploratory optimization comparison, separate from the
160-case matched-information spectral-local study.

\subsection{Paired individual-trial acquisition}

We reuse two dynamics seeds (6100 and 6101) on each of the four Localize-MI
anatomies, giving eight system--anatomy combinations, with $q=12$, $m=8$,
$K=[e_1,e_2,e_3,e_4]$ and $T=s=6$. The generator and general receiving-row
changes follow Appendix~\ref{app:v6partial}: baseline controlled means reach
four coordinates, and eight exposed local changes target the remaining
coordinates. Estimators receive neither the block structure of $F$, true
targets nor changed coefficients. Two noise levels and two budgets give
32 paired acquisitions, not 32 independent systems or a fresh confirmation
cohort. The EEG and dynamics are simulated; only the forward anatomy comes
from the dataset.

Unlike the direct sampling of trial means in the principal studies, this
comparison generates fresh individual simulated episodes. The direct estimator forms
its usual response blocks from averages of those exact episodes. The
state-space comparator uses the baseline's unique times $0,\ldots,11$ and
each active episode's six post-transition observations
$\tau_{\mathrm{int}}+1,\ldots,\tau_{\mathrm{int}}+6$.
These are the temporal readouts underlying the response blocks; repeated
Hankel entries are counted only once in the likelihood. The changed
transition is excluded from the fit of baseline $F$. Unknown post-window
initial states absorb its effect. Neither fit uses the discarded
pre-intervention readouts of active episodes.

With $n=48$ or $192$ repeats per active condition and $48n$ per baseline
initialization, the total is $384n=18{,}432$ or $73{,}728$ episodes, half
baseline. Budgets use nested prefixes of raw trials, and noise levels reuse
standard random draws with different amplitudes. At lower noise, the
single-trial sensor and process standard deviations are $0.01$ and $0.002$;
higher noise multiplies both by four. Process innovations are independent
Gaussian with covariance $\sigma_p^2I_q$. Sensor noise is stationary Gaussian
with spatial covariance entries $\sigma_s^2 0.3^{|i-j|}$ and temporal
autoregressive coefficient $0.4$. Source and sensor noise streams are
independent.

\subsection{Comparator, information access and optimization}

\paragraph{What is fitted?}
The comparator alternates estimation of latent trajectories by Kalman
filtering and Rauch--Tung--Striebel smoothing with an unrestricted update
of $F$. The latent state is augmented with sensor noise, so the known
temporal correlation is represented explicitly. Baseline source initial
states are the known columns of $K$. Each intervention/insertion window
instead has unknown initial source means, one per input, and an unknown
full initial covariance shared across inputs. At every candidate $F$, we
maximize the Gaussian observed-data likelihood over these nuisance initial
distributions using the entire window before the smoothing and transition
update. Means are fitted by generalized least squares. For covariance fitting,
we whiten observations by their conditional noise covariance, project
onto the initial-state observation subspace, retain the positive-semidefinite
part of the projected covariance excess over identity noise, and map
back to source coordinates. This profiled expectation/conditional-maximization procedure
is abbreviated EM below. It avoids fixing unobserved initial coordinates
by applying a sensor pseudoinverse to only the first sample.

\paragraph{What information differs?}
Both methods use the same acquisition and known $L,K$, but their assumptions
are not identical. EM receives the exact process and sensor noise parameters
and uses within-condition trial covariances as well as means. It does not
use receiving-row locality, shared intervention coefficients or a sparsity
prior. Direct estimation uses locality and matched mean contrasts, without
calibrated noise parameters. Consequently, this is not a matched-prior
comparison or a test of whether all information in passive EEG is
insufficient: stochastic innovations can supply information absent from
controlled means.

\paragraph{Starts and stopping rules.}
Standalone EM uses three non-direct starts: source pseudoinversion followed
by first-order vector autoregression, and two fixed-seed perturbations
of $0.4I_q$ with entrywise standard deviation $0.15/\sqrt q$.
Their seeds are 202609200 and 202609201. Before fitting, spectral radii
above $0.95$ are rescaled to $0.95$; starts with condition number above
$10^6$ then receive fixed-seed Gaussian jitter with entrywise standard
deviation $0.02/\sqrt q$ (seed 202609202). Subsequent transition updates
have no stability clipping or sparsity penalty. Restart selection uses
only final training likelihood, never true $F$ or held-out outcomes.
A separate fit starts from the direct estimate under the same rules;
it is a refinement, not an independent competitor. Each run is capped
at 500 updates. Convergence requires relative likelihood improvement at
most $10^{-7}$ and relative Frobenius change of $F$ at most
$\sqrt{10^{-7}}$, allowing likelihood decreases only within a
$10^{-9}$ relative numerical tolerance.

\subsection{Recovery results and optimization diagnostics}

Evaluation follows Equation~\eqref{eq:v6metrics}: the source-history metric
uses all eight held-out observations to reconstruct their source states,
not to forecast eight unobserved steps. Table~\ref{tab:v6em} reports every
acquisition. Joint success requires both $e_F$ and $e_z$ to be at most
$0.10$. No failed or iteration-capped case is removed.

\begin{table}[htbp]
\centering\small
\setlength{\tabcolsep}{5pt}
\caption{\textbf{Exploratory known-leadfield fitting under partial excitation.}
Each method cell gives median $e_F$ (percent) and joint successes out of
eight systems. The same systems recur in all four rows. Standalone EM is
selected by training likelihood among three non-direct starts; direct + EM
starts from our estimate. Selected standalone fits converge in 0/8, 0/8,
1/8 and 2/8 cases in row order, versus 8/8 in every direct + EM row.
Capped fits remain included; these are not globally optimized likelihood
comparisons.}
\label{tab:v6em}
\begin{tabular}{lrccc}
\toprule
Noise & Episodes & Direct & Standalone EM & Direct + EM\\
\midrule
Lower &18,432&2.99\%; 8/8&53.78\%; 0/8&2.95\%; 8/8\\
Lower &73,728&1.57\%; 8/8&53.66\%; 0/8&1.57\%; 8/8\\
Higher&18,432&12.19\%; 1/8&46.22\%; 0/8&11.96\%; 1/8\\
Higher&73,728&6.40\%; 5/8&47.62\%; 0/8&6.36\%; 6/8\\
\bottomrule
\end{tabular}
\end{table}

Direct estimation identifies all eight targets in every acquisition.
Its higher-noise failures therefore concern recovery accuracy rather than
labeling. Across 32 acquisitions, direct and direct + EM achieve 22 and
23 joint successes, respectively; standalone EM achieves none.
All 32 direct-initialized fits converge in 3--71 updates (median 16),
with higher training likelihood than each of the three non-direct fits.
Only 7/96 non-direct restarts converge, including 3/32 selected fits.

Refinement lowers $e_F$ in all 32 cases, but its median absolute reduction
is only 0.025 percentage points, or a median relative reduction of 0.61\%.
It lowers $e_z$ in only 14/32 cases. In table order, median direct $e_z$
is 3.95\%, 1.73\%, 15.03\% and 7.03\%; after refinement it is
3.94\%, 1.75\%, 15.76\% and 7.15\%. Thus the result supports useful
initialization, not a substantial or uniform reconstruction improvement.
The additional joint success is one threshold crossing.

\paragraph{Longer optimization and a positive control.}
For the first seed on each anatomy at lower noise and 18,432 episodes,
we continue the likelihood-selected standalone fit from 500 to 2,000
updates. This subset is chosen by index, not recovery outcome. None of
the four continuations converges or succeeds; final $e_F$ ranges from
38.58\% to 74.20\%. Likelihood increases in all four, but anatomical
accuracy need not. Continuing only one selected start per system is a
bounded sensitivity check, not a global-optimality test.
In a separate positive control on the same four anatomies and first
dynamics seed, all twelve known initializations are available
($K=I_{12}$), with no interventions. At lower noise and 2,304 episodes
(192 per input), EM with the two generic starts specified above and a 1,000-update cap
achieves joint success in 4/4 cases, with $e_F$ from 0.95\% to 4.23\%.
This verifies recovery in a favorable regime, but changes the protocol
and is not a budget-matched comparison.

Ten implementation and acquisition tests pass. A separate dense-Gaussian
calculation reproduces all 128 final restart likelihoods, with maximum
absolute discrepancy $7.79\times10^{-7}$ on million-scale likelihoods;
it also checks response blocks, budgets, metrics and restart selection.
The positive controls and longer-run diagnostics pass the same likelihood
check. The evidence concerns this calibrated implementation and finite
optimization budget. It neither establishes a general failure of
state-space estimation nor validates recovery from recorded neural EEG.
Fresh-system confirmation and stronger or intervention-aware likelihood
optimization remain necessary before treating the accuracy gap as a
general comparative claim.

\section{Signed initialization under additive insertion effects}
\label{app:v6signed}

This separate simulation asks whether controlled initialization amplitudes
can preserve the local-row contrast in the presence of a simultaneous
additive effect. It uses the same direct estimator after an observed-data
transformation. Its additional initialization capability is not required
by the identification theorem or the incomplete-reachability experiment.

\subsection{Cancellation and its assumptions}

Let the known initial conditional mean be $aK_{:,i}$, where $a$ is a
controlled scalar. Suppose the insertion transition has conditional mean
\[
 \bar z_{\tau_{\mathrm{int}}+1}
 =(F+D_e)\bar z_{\tau_{\mathrm{int}}}+c_e,
 \qquad D_e=e_{j_e}v_e^\top,
\]
where the unknown additive increment $c_e\in\R^q$ is fixed across
initialization amplitudes, directions, and insertion times for that mode.
The same $F,L,K,D_e$ apply at both signs, and all other transitions use $F$.
Using the histories $O_T,R_s$ of Eq.~\eqref{eq:v6histories}, the response
blocks at amplitude $a$ satisfy
\begin{align}
 H_0(a)&=aO_TR_s,\qquad H_+(a)=aO_TFR_s,\nonumber\\
 H^{[e]}(a)&=aO_T(F+D_e)R_s+O_Tc_e\mathbf1_{rs}^{\top},\nonumber\\
 \Delta H_e(a)&=aO_TD_eR_s+O_Tc_e\mathbf1_{rs}^{\top},
 \label{eq:v6signed-affine}
\end{align}
where $\mathbf1_{rs}$ contains $rs$ ones and
$\Delta H_e(a)=H^{[e]}(a)-H_+(a)$. For $a\ne0$, the odd and even parts are
\begin{align}
 \frac{\Delta H_e(a)-\Delta H_e(-a)}{2a}
   &=(O_Te_{j_e})(v_e^\top R_s),\nonumber\\
 \frac{\Delta H_e(a)+\Delta H_e(-a)}2
   &=O_Tc_e\mathbf1_{rs}^{\top}.
 \label{eq:v6signed-parity}
\end{align}
These identities follow directly because the initialized mean state is
linear in $a$ while $c_e$ is constant. More generally, the slope
$\mathcal S(X)=[X(a_2)-X(a_1)]/(a_2-a_1)$, for $a_2\ne a_1$, removes
the additive term. Applying it to \emph{all} baseline, shifted-baseline,
and active blocks restores the original $K$ and contrast factorization.
The anatomical calibration and completion then use the same
$\Omega=[K,E_J]$ and require the original exposure, coverage, and temporal
observability assumptions.

This is classical affine cancellation under a causal premise. Reversing
stimulation polarity does not by itself reverse the known source-state
initialization. Additive effects that depend on amplitude, including odd
artifacts, can survive the slope; sign-dependent mechanisms, saturation,
or initial-state distributions that break the affine mean model invalidate
Eq.~\eqref{eq:v6signed-affine}. Rank one or a leadfield match alone therefore
does not certify a mechanism change.

\subsection{Fixed simulation and acquisition cost}

The study uses twelve sources, eight sensor contrasts, four known
initialization directions, eight hidden intervention targets, and $T=s=6$.
The dynamics, gain-suppression family, trial-noise law, and offline held-out
evaluation follow Appendix~\ref{app:v6experiments}. Each mode adds an
independent diffuse $c_e$ of norm $0.1$, a fixed stress condition in normalized
units rather than a measured biological effect size. Forty fresh seed indices
2000--2039 are shared across four geometries, three budgets, and three designs:
480 anatomy--seed--budget cases and 1,440 fitted-design rows. The protocol
was fixed before outcomes. Seeds reused across geometries do not supply
160 independent biological observations.

For $n\in\{64,256,1024\}$, each design uses $N_{\rm tot}=384n$ trial
equivalents, half baseline, and thirteen samples per episode. Positive-only
uses amplitude $+1$; the two slope designs use $0/+1$ or $-1/+1$, with
half the trial equivalents at each amplitude. They match represented trial
count, episode duration, and peak amplitude. Their squared-initialization-amplitude proxies are
$N_{\rm tot}$ for positive-only and balanced signed, but $N_{\rm tot}/2$
for zero/positive. The signed comparison is therefore not an equal-energy
advantage, and these proxies are not calibrated joules. With independent,
equal-covariance amplitude episodes, the $0/+1$ slope has four times the
noise variance of the $-1/+1$ slope. Matched Gaussian streams make signed
probing at $n$ and zero/positive probing at $4n$ numerically equivalent;
these repeated results are a design check, not independent evidence of a
fourfold biological efficiency gain.

Only measured transformed blocks and known $L,K$ enter the direct
estimator; it receives no true $F,D_e,c_e$, targets, or held-out sources.
The positive-only design is deliberately uncorrected for the additive effect.
Held-out evaluation estimates the initial states from eight sensor
observations for each of 32 independent trajectories. The errors $e_F,e_z$
and joint success criterion $e_F\leq0.1$, $e_z\leq0.1$ are those of
Eq.~\eqref{eq:v6metrics}.

\begin{table}[htbp]
\centering\small
\setlength{\tabcolsep}{4pt}
\caption{Complete signed-initialization confirmation. Each row contains
160 cases. $N_{\rm tot}$ counts trial equivalents. Errors are percentages;
medians and nearest-order-statistic p90
include invalid estimates as $\infty$. Designs and budgets are paired.}
\label{tab:v6signed}
\begin{tabular}{@{}rlrrrrr@{}}
\toprule
$N_{\rm tot}$ & Design & Valid & Success & Median $e_F$ & Median $e_z$ & p90 $e_z$\\
\midrule
24,576 & Positive-only &0&0&$\infty$&$\infty$&$\infty$\\
24,576 & $0/+1$ slope &155&77&10.371&8.265&22.731\\
24,576 & $-1/+1$ slope &158&120&5.165&4.144&10.695\\
\midrule
98,304 & Positive-only &0&0&$\infty$&$\infty$&$\infty$\\
98,304 & $0/+1$ slope &158&120&5.165&4.144&10.695\\
98,304 & $-1/+1$ slope &160&150&2.553&2.062&5.401\\
\midrule
393,216 & Positive-only &0&0&$\infty$&$\infty$&$\infty$\\
393,216 & $0/+1$ slope &160&150&2.553&2.062&5.401\\
393,216 & $-1/+1$ slope &160&158&1.276&1.031&2.657\\
\bottomrule
\end{tabular}
\end{table}

At the primary budget of 98,304 trial equivalents, balanced signed initialization gives
150/160 successful recoveries, versus 120/160 for zero/positive. All balanced
target labels are correct, but ten finite estimates still miss the joint
error threshold. Positive-only fails inferred coordinate coverage in all
480 budget-specific cases; the slope designs have nine additional invalid
fits. Increasing repeats cannot remove the fixed additive confound from
the uncorrected contrast. This experiment establishes neither a biological
signed-initialization protocol nor robustness to amplitude-dependent effects.

An independent audit checked all 480 saved model files and the
full 1,440-row grid, regenerated every amplitude and transformed block,
reproduced all fits bitwise, and verified source, geometry, and result hashes.
Separately computed metrics and summaries agreed within $1.78\times10^{-15}$.

\section{Boundary control: fewer targets under an exact-gain prior}
\label{app:v6gain-boundary}

The general-row control in Appendix~\ref{app:v6arbitrary} belongs to a
separate study, specified before outcomes, whose primary question was whether a stronger
intervention law permits fewer distinct targets at the same trial cost.
This comparison delineates the role of the additional intervention assumption.
The auxiliary gain hybrid assumes the exact structural law
\begin{equation}
 D_e=-\eta_e e_{j_e}e_{j_e}^{\top}F,\qquad \eta_e\neq0.
 \label{app:v6eq:gain-boundary-law}
\end{equation}
The numerical gains $\eta_e$ and realized target labels $j_e$ remain unknown
to fitting. This additional relation between an intervention and the baseline
row is not assumed by direct general-row completion.

\paragraph{Auxiliary completion and its assumptions.}
Suppose $K=E_{\mathcal I}$ consists of known coordinate inputs, and the
distinct exposed target set $J$ is disjoint from $\mathcal I$. The same
anatomical labeling and scale conditions as in Appendix~\ref{app:v6setup}
identify $(O_T)_{:,j_e}$ from
$\Delta H_e=(O_T)_{:,j_e}\psi_e^\top$. Under
Eq.~\eqref{app:v6eq:gain-boundary-law},
$\psi_e^\top=-\eta_e e_{j_e}^\top FR_s$.
Let $W\in\R^{rs\times d_W}$ span the joint right nullspace of these
profiles, put $U=\{1,\ldots,q\}\setminus J$ and
$M=U\setminus\mathcal I$, and define $X=FR_sW$. Then
\begin{equation}
 X_{J,:}=0,\qquad H_+W=(O_T)_{:,U}X_{U,:},\qquad
 X_{U,:}=L_{:,U}^{\dagger}(H_+W)_{\rm top},
 \label{app:v6eq:gain-boundary-nullspace}
\end{equation}
where ``top'' selects the first $m$ rows, and the last identity requires
$L_{:,U}$ to have full column rank. The input columns
$(O_T)_{:,\mathcal I}$ are already observed in the first $r$ columns of
$H_0=O_TR_s$. If $X_{M,:}$ has full row rank, the remaining columns are
\begin{equation}
 (O_T)_{:,M}
 =\bigl[H_+W-(O_T)_{:,\mathcal I}X_{\mathcal I,:}\bigr]X_{M,:}^{\dagger}.
 \label{app:v6eq:gain-boundary-completion}
\end{equation}
After assembly, the known top block is set to $L$, and temporal least squares
recovers $F$ when $O_-$ has full column rank. These are sufficient population
conditions; noisy fits use estimated profiles and columns. For an arbitrary
row, $\psi_e^\top=v_e^\top R_s$ only implies $v_e^\top R_sW=0$, not
$e_{j_e}^\top FR_sW=0$, so the completion identity need not hold.
If every non-input coordinate is targeted, $M$ is empty and the hybrid
reduces to direct completion with $\Omega=[K,E_J]$, regardless of the gain law.

\paragraph{Paired comparison and trial accounting.}
The study uses $q=12$, $m=8$, $r=4$, $T=s=6$, and forty fresh indices
5000--5039 independently namespaced within each of four fixed geometries:
160 simulated baseline systems, not 160 anatomies. Two row classes, two
target counts, and four estimators give 640 acquisitions and 2,560
method-specific results. The four-target set is chosen from known $L,K$ by maximizing
$\sigma_{\min}(L_{:,U})$ among non-input coordinate subsets; eight targets
cover all non-input coordinates. Every acquisition pays 49,152 baseline and
49,152 active trial equivalents, totaling 98,304. Four targets receive 512
repeats per input/target/insertion condition; eight receive 256. Fewer targets
therefore do not mean fewer trial equivalents. Conditions and designs are paired within
each system using matched noise draws; equal row norms across the gain and
isotropic arbitrary-row conditions do not imply equal response exposure.

\begin{table}[ht]
\centering\small
\caption{Selected comparisons from the boundary study, at 98,304 trial
equivalents each. Joint success requires both $e_F\leq0.1$ and $e_z\leq0.1$;
source errors are percentages. The arbitrary-row direct and spectral-local rows
are the same secondary control reported in Appendix~\ref{app:v6arbitrary}.}
\label{tab:v6gain-boundary}
\begin{tabular}{@{}llrrrr@{}}
\toprule
Row class & Estimator, targets & Valid & Success & Median $e_z$ & p90 $e_z$\\
\midrule
Exact gains & Hybrid, 4 &160/160&144/160&1.965&6.704\\
Exact gains & Direct, 8 &160/160&142/160&2.383&7.016\\
Arbitrary rows & Hybrid, 4 &160/160&0/160&310.997&1267.994\\
Arbitrary rows & Direct, 8 &160/160&157/160&2.106&4.157\\
Arbitrary rows & Spectral-local, 8 &153/160&139/160&2.915&9.989\\
\bottomrule
\end{tabular}
\end{table}

Under exact gains, the primary paired success difference, hybrid minus
direct, is $+1.25$ percentage points. Its 10,000-replicate,
40-index-block bootstrap 95\% interval is $[-3.125,5.625]$ percentage points,
retaining all four fixed geometries per block. This approximate simulation
interval establishes neither superiority nor, without a specified margin,
noninferiority or equivalence. The hybrid's maximum source error is
140.332\%, versus 22.381\% for direct: its favorable median does not remove
the adverse tail.

Under arbitrary-row changes, all 160 four-target hybrid fits are finite and
all target labels are correct, yet none succeeds; median source error is
310.997\% and the maximum is 4377.539\%. Numerical validity is therefore
not a calibrated model-adequacy test. Direct with four targets instead
abstains in all 160 cases in each row class because coordinate coverage
fails. At eight targets, the hybrid agrees with direct, including its
157/160 arbitrary-row successes; this is a limiting-case check, not an
independent method comparison. The study supports a conditional tradeoff
between target diversity and structural assumptions, not a universal
reduction in the targets required by the unrestricted-row model.

\section{Recorded-current EEG: spatial transfer and anatomical calibration}
\label{app:v6recorded}

These complementary Localize-MI analyses \citep{mikulan2020} examine whether
repeatable sensor responses support anatomical calibration. The measured
signal is predominantly the electrical artifact of injected current. Its
contact midpoint is a localization reference for that current, not ground
truth for subsequent neural generators. No simulated sources or noise are
added. Neither analysis estimates neural $F$, supplies a known neural
initialization map $K$, or implements a receiving-row mechanism change.
They reuse the four participants supplying the simulation anatomies; the
overlapping site sets are not independent biological replications.

\subsection{Data and anatomical diagnostic}
\label{app:v6recorded-data}

The released epochs contain 256 scalp channels sampled at 8 kHz from
$-250$ to $+10$ ms, preprocessed and artifact-aligned by the dataset authors.
The response window is $-2$ through $+2$ ms (33 samples); background
covariance is estimated from $-250$ through $-50$ ms. Each participant's
MRI-derived forward model contains 8,196 cortical positions with three
orientation components per position. The fits search the complete bilateral,
free-orientation dictionary, rather than the twelve fixed sources and
orientations used in the simulations. Data and gains are average-referenced
within each analysis's common good-channel set. Contact coordinates enter
only evaluation after fitting; they do not select candidates, orientations,
hemisphere, response windows or controls.

For a raw channel-by-time mean $\bar X_A$ over trial indices $A$, let $W$
be the background whitener and $C_j$ an orthonormal basis of the whitened
three-orientation gain at cortical position $j$. Conventional dipole fitting
selects
\begin{equation}
 \widehat j_A=\arg\max_{j=1,\ldots,8196}
 \frac{\Frob{C_j^\top W\bar X_A}^2}{\Frob{W\bar X_A}^2}.
 \label{eq:v6recorded-dipole}
\end{equation}
This standard diagnostic permits time-varying dipole moments in a subspace
of dimension at most three. It is neither a new localization algorithm nor
the proposed fixed-column anatomical-label estimator, and it does not force
the response matrix to have rank one.

\subsection{Cross-intensity spatial transfer}
\label{app:v6recorded-transfer}

Selection includes every site with two distinct current intensities, at
least 20 retained trials per run, and exactly one run in the previously
available data subset: 11 sites, 22 runs and 822 trials from participants 01,
03, 05 and 07 (Table~\ref{tab:v6recorded-sites}). Eleven companion runs were
downloaded after the analysis protocol was fixed, but previously available
runs and their localization results had already been examined. This is
developmental reuse with an outcome-independent new analysis plan, rather
than an untouched cohort or external preregistration.

Alternating trial indices define two complementary training/test folds.
Both low-to-high and high-to-low transfers give four tests per site, or
44 total. Disjoint indices do not guarantee temporal independence. Current
intensity was not randomized trial by trial, so differences between runs
cannot be identified as causal effects of amplitude. The different-site
control uses the lowest-numbered previously available run at another site
in the same participant. The intersection of published good-channel lists
from this run and both intensity runs gives 153--235 electrodes per pair.
All compared predictions share target trials, channels and reference.

For each pair and fold, only training prestimulus samples from the two
intensity runs estimate $\widehat\Sigma$. With $c$ common electrodes, use
\[
 \widehat\Sigma_{\rm reg}
 =0.9\widehat\Sigma+0.1\frac{\operatorname{tr}(\widehat\Sigma)}{c}I_c.
\]
Remove the average-reference null direction and construct $W$ from the
positive covariance eigenvalues. Test trials and the different-site control
do not contribute to this covariance. Write the whitened training and test
mean channel-by-time responses as $X_{\iota}^{\rm tr}$ and
$X_{\iota}^{\rm te}$. For the first $d$ left singular vectors
$B_{\iota,d}$ of the source-intensity training mean, predict at the other
intensity $\iota'$ using
\begin{equation}
 \widehat X_{\iota'}^{\rm te}
 =B_{\iota,d}B_{\iota,d}^{\top}X_{\iota'}^{\rm tr},
 \qquad
 \mathcal Q=1-
 \frac{\Frob{X_{\iota'}^{\rm te}-\widehat X_{\iota'}^{\rm te}}^2}
      {\Frob{X_{\iota'}^{\rm te}}^2}.
 \label{eq:v6recorded-transfer}
\end{equation}
The basis transfers from the source intensity; target training responses
fit the $d\times33$ temporal coefficients. This is spatial transfer with
target-condition calibration, not zero-shot waveform prediction. No test
waveform is fitted. The uncentered score $\mathcal Q$ compares with a zero
predictor: one is perfect prediction, zero is the zero predictor, and
negative scores are allowed. It is neither centered regression $R^2$ nor
localization accuracy.

Transferred ranks $d=1,2,3$ are compared with four controls: a rank-one basis
learned from target training trials; a dipole location selected from source
training data by Eq.~\eqref{eq:v6recorded-dipole}, then frozen while target
training data fit its orientation-subspace coefficients; a rank-one basis
from the prespecified other site, also calibrated on target training data;
and the full target training mean. The last is an unrestricted repeatability
reference, not a known upper bound. The other-site control is not matched
for intensity, signal-to-noise ratio, trial count or acquisition time, and
is not a nearest-site or randomized intervention control.

\begin{table}[ht]
\centering\small
\caption{All 11 paired sites in the recorded-current transfer study. Current
and trial counts are low/high intensity. Scores average both folds and
directions; ``Target'' is target-trained rank one. A prime belongs to the
dataset contact identifier. Sites reuse four participants.}
\label{tab:v6recorded-sites}
\begin{tabular}{llrrrrr}
\toprule
Participant & Site & Current (mA) & Trials & Transfer rank 1 & Target & Other site\\
\midrule
01 & K13--14 & 1/5 & 38/36 & .974 & .977 & .293\\
01 & N2--3 & 1/5 & 38/36 & .894 & .966 & .200\\
01 & S1--2 & 1/5 & 27/34 & .949 & .953 & .242\\
01 & S5--6 & 1/5 & 38/32 & .974 & .976 & .125\\
03 & R$'$2--3 & .3/.5 & 35/42 & .893 & .896 & .179\\
05 & G$'$8--9 & .1/.3 & 44/43 & .978 & .980 & .556\\
05 & H$'$2--3 & .1/.3 & 42/44 & .723 & .745 & .460\\
05 & X$'$2--3 & .1/.3 & 31/40 & .545 & .627 & .121\\
07 & B11--12 & .1/.3 & 32/38 & .925 & .934 & .005\\
07 & B$'$13--14 & .1/.3 & 46/39 & .969 & .971 & .002\\
07 & Q16--17 & .1/.3 & 25/42 & .976 & .978 & .003\\
\bottomrule
\end{tabular}
\end{table}

All 308 model predictions (44 tests times seven models) are finite, with
no excluded site or failed result. After averaging folds and directions
within site, median $\mathcal Q$ across sites is 0.949 for transferred rank
one, 0.966 for target-trained rank one, 0.975 and 0.981 for transferred ranks
two and three, 0.639 for the frozen dipole, 0.179 for the other-site basis,
and 0.983 for the full target training mean. Transferred rank one beats the
other-site and dipole controls at all 11 sites, but is worse than target
rank one and transferred ranks two and three at every site. Its median
paired advantage over the other-site basis is 0.707; its median paired
disadvantage to target rank one is 0.00295. These are paired differences,
not differences of marginal medians or population significance tests.

The median site-averaged projective angle between intensity-specific
training directions is $3.76^\circ$. Nevertheless, participant 05's
H$'$2--3 and X$'$2--3 have transfer scores of 0.723 and 0.545; at X$'$2--3,
high-to-low transfer scores 0.366 versus 0.724 in reverse. Participant 01's
N2--3 loses 0.0714 relative to the target-trained direction. Higher-rank
improvements concern these raw response matrices $X$, not the intervention
contrast $\Delta H_e$. Their rank therefore neither verifies nor refutes
the theorem's rank-one contrast factorization.

Conventional localization has median error 13.49 mm over the 44 training
fits. Locations agree between intensities in both folds at seven of eleven
sites, yet some stable assignments remain 13--15 mm from the contact
midpoint. The lower frozen-dipole score does not isolate its cause:
approximate tissue or registration models, distributed bipolar current,
mislocalized dipoles, additional response components and between-run changes
can all contribute. Empirical direction portability and anatomical
calibration are distinct requirements.

\subsection{Averaging, repeatability and anatomical accuracy}
\label{app:v6recorded-repeats}

A separate finite-pool analysis uses 16 previously studied sites/runs,
552 trials and 171--236 good channels per run from the same four participants.
The 11 transfer sites are a subset of these sites. All records were already
available. The response/background windows, reference and free-orientation
dictionary are as above. Reserve each run's last five epochs in file order
as a held-out reference; the remaining $M=16$--41 epochs form the acquisition
pool. File order does not establish independently verified acquisition time.
Twenty fixed random permutations per site generate nested averages of
$n=1,2,4,8,16$ distinct pool epochs; permutations overlap. Whitening uses
prestimulus data from the entire acquisition pool, excluding held-out epochs,
and is fixed across $n$. Thus $n$ is an averaging budget conditional on
additional noise calibration, not total recording cost. At participant 05's
H$'$8--9, the pool contains exactly 16 epochs, so all $n=16$ averages coincide.

Equation~\eqref{eq:v6recorded-dipole} gives 1,600 finite budget fits
($16$ sites $\times20$ permutations $\times5$ budgets); 80 full-pool,
held-out and order diagnostics are also finite. Three endpoints are
distinguished: distance from the contact midpoint, agreement with the
full-pool fitted peak, and the projective angle between the leading left
singular vector of each whitened mean and that of the five-epoch held-out
mean. The full-pool peak includes the sampled epochs and may itself be
wrong; the held-out direction is noisy, not ground truth.

\begin{table}[ht]
\centering\small
\caption{Averaging study across all 16 sites. Error and angle first take a
median over 20 overlapping draws within site, then a median or 90th
percentile across sites. Peak agreement is the mean within-site fraction
matching the full-pool peak, not correct localization. These descriptive
percentiles are not confidence intervals.}
\label{tab:v6recorded-repeats}
\begin{tabular}{rrrrr}
\toprule
$n$ & Median error (mm) & Error p90 (mm) & Held-out angle & Peak agreement\\
\midrule
1  & 13.68 & 33.24 & $4.72^\circ$ & 66.88\%\\
2  & 13.68 & 24.79 & $4.77^\circ$ & 75.63\%\\
4  & 13.68 & 24.28 & $3.76^\circ$ & 83.13\%\\
8  & 13.68 & 24.28 & $3.30^\circ$ & 89.69\%\\
16 & 13.68 & 24.28 & $2.93^\circ$ & 92.50\%\\
\bottomrule
\end{tabular}
\end{table}

From $n=1$ to $n=16$, held-out direction angles improve at 15 of 16 sites,
while the median localization error remains 13.68 mm. Twelve site error
medians are unchanged; four improve, one by only 0.009 mm. At $n=16$,
15 sites have zero median distance to their full-pool peak, but eleven
of those remain more than 10 mm from the contact midpoint. Participant 03's
H$'$1--2 stays approximately 31.56 mm away despite stable estimates.
Participant 05's X$'$2--3 improves from 62.67 to 21.76 mm: averaging can
remove a major error without producing accurate anatomy. The 10 mm
comparison is descriptive, not a clinical threshold.

Direction stability also has adverse cases. At $n=16$, participant 05's
H$'$2--3 and X$'$2--3 retain held-out angles of $77.83^\circ$ and
$85.48^\circ$. Participant 01's N2--3 worsens from $1.99^\circ$ to
$14.99^\circ$; its first-five and last-five fitted positions are 20.88 mm
apart. Background activity, drift, pulse variability and reference noise
are possible explanations, but this analysis cannot distinguish them.
A stable dipole-subspace maximum can coexist with an unstable leading
empirical direction. A maximum of 16 averaged responses does not establish
an asymptotic localization-error floor.

\subsection{Reproducibility and interpretation}
\label{app:v6recorded-audit}

Protocols, selection lists, source checksums, splits, predictions and adverse
outcomes are retained in the research package. Independent transfer
validation reproduced all 308 predictions, with maximum predictive-score
discrepancy $1.64\times10^{-15}$. The averaging implementation's projected-Gram
shortcut was independently checked against 393,408 conventional dictionary
scores, with maximum discrepancy $8.88\times10^{-16}$.

These descriptive studies separate transfer of a measured sensor direction,
repeatability under averaging, and anatomical agreement. They provide
neither population-level inference from four reused participants nor a test
of neural dynamics recovery or the physical intervention contract.
The diagnostic dipole model and current artifact also differ from the fixed
source dictionary assumed by the theorem. The findings motivate explicit
forward-model mismatch assessment before treating a repeatable measured
direction as an anatomically calibrated response history; they do not
identify which forward-model error is responsible or establish clinical
localization performance.

\section{Physical forward-model mismatch and anatomical calibration}
\label{app:v6physical-readout}

This appendix describes the physical sensitivity study summarized
in Section~\ref{sec:v6physical-readout}. Its question is whether correct target
labels suffice for source-coordinate recovery when the supplied leadfield is
inaccurate. Dynamics and responses are simulated; only anatomical geometries
come from participants. The study changes a physical head-model parameter,
not arbitrary matrix entries, and does not estimate any participant's actual
conductivity.

\subsection{Paired systems and physical forward models}

The study reuses Localize-MI geometries 01, 03, 05 and 07
\citep{mikulan2020}, with a fixed selection of twelve cortical-normal
sources, nine scalp electrodes and eight independent reference contrasts.
Four simulated systems per anatomy give sixteen anatomy--seed cases, using
seed indices 7100--7103. Each is evaluated in five assumed physical worlds
with skull conductivity
\[
 c\in\{0.003,0.0045,0.006,0.009,0.012\}\ \mathrm{S/m}.
\]
A three-layer boundary-element model (BEM), using ico3 surface discretization,
keeps brain and scalp conductivities at $0.3\ \mathrm{S/m}$. The nominal
world is $c_0=0.006\ \mathrm{S/m}$. This is the reference for this controlled
sweep, not a claim about actual conductivities or the original dataset's
forward-model parameter choice.

The anatomical positions, sensor order, reference basis and source
orientations are fixed across worlds. Free-orientation gains are contracted
with cortical normals in head coordinates before selecting sensors and
applying the reference contrasts. Reconstructing the original fixed gain
from the released Cartesian gain and saved normals gives maximum relative
discrepancy $2.54\times10^{-8}$; the selected/referenced discrepancy is at
most $2.90\times10^{-8}$, consistent with stored single-precision gains.

\paragraph{Fixed units across physical worlds.}
Let $G_c\in\R^{8\times12}$ be the orientation-contracted, referenced physical
gain at conductivity $c$. Define one diagonal source-unit normalization,
\begin{equation}
 \Lambda_{\rm unit}
 =\diag\bigl(\|G_{c_0,:,1}\|_2,\ldots,\|G_{c_0,:,12}\|_2\bigr),
 \qquad L_c=G_c\Lambda_{\rm unit}^{-1},
 \qquad L_{\rm nom}=L_{c_0}.
 \label{eq:v6physical-units}
\end{equation}
Every world uses the same $\Lambda_{\rm unit}$. Independently normalizing
each world's columns would change the source units and hide forward-amplitude
errors that matter for anatomical calibration. The relative Frobenius
discrepancy $\Frob{L_c-L_{\rm nom}}/\Frob{L_{\rm nom}}$ ranges across
anatomies from 26.64--30.99\% at $c=0.003$, 11.17--13.45\% at $0.0045$,
15.73--20.03\% at $0.009$, and 26.80--34.74\% at $0.012$.
These ranges describe this assumed BEM family, not a measured distribution
of real leadfield errors.

At $c=0.0045$, anatomy-wise median projective direction changes are
$1.84$--$2.14^\circ$, with median signed projected gain ratios
$0.869$--$0.900$. At $c=0.009$, the corresponding ranges are
$2.62$--$3.42^\circ$ and $1.124$--$1.179$. Every exact true column remains
closest to the correct nominal leadfield line. Thus the sensitivity study
does not rely on unrelated or permuted readout maps. Direction and amplitude
change together; it is not a scale-only ablation.

\subsection{Acquisition, fitting and evaluation}

The baseline generator is the twelve-source generator described in
Appendix~\ref{app:v6data}, with separate random streams for anatomy--seed
cases. The initialization map is $K=[e_1,e_2,e_3,e_4]$. Eight targets are
the remaining source coordinates, in a hidden random order, with
$D_e=-\eta_e e_{j_e}e_{j_e}^{\top}F$ and
$\eta_e\sim\operatorname{Unif}[0.2,0.6]$. These target labels and
coefficients are never provided to the blind estimator. Each system's
$F,K,D_e$, held-out initial states and noise streams are paired across the
five worlds.

Responses use $T=s=6$. The exact condition sets process and sensor noise
to zero. The noisy condition uses 256 repeats per input, intervention mode
and insertion time, sensor standard deviation 0.01, and process standard
deviation 0.002 under the correlated trial-mean acquisition model of
Appendix~\ref{app:v6acquisition}. The total is 98,304 trial equivalents,
half assigned to shared baseline responses. There are
$16\times5\times2=160$ acquisitions: eighty exact and eighty noisy.

The primary fit uses $L_{\rm nom}$ regardless of the true world. The
\emph{true-readout diagnostic} uses the same measured responses, $K$ and
unchanged estimator but supplies $L_c$. It is an oracle calibration check,
not a method for estimating $L_c$. These two readout conditions produce 320
fits. They coincide exactly in the nominal world.

For each system, 32 held-out distributed initial states generate eight-step
trajectories. Initial-state inference uses sensor histories and the fitted
dynamics, never the true initial states. Held-out histories are exact in the
exact condition and receive independent sensor noise of standard deviation
0.001 in the noisy condition. Errors are $e_F,e_z$ from
Eq.~\eqref{eq:v6metrics}; success requires both to be at most 0.10.
Invalid estimates keep infinite errors and remain in all denominators.
Conductivity worlds are paired repetitions of sixteen simulated systems,
not eighty independent systems or participants.

\subsection{Exact responses: labels can be correct while dynamics are wrong}

Table~\ref{tab:v6physical-exact} reports nominal-readout fits. Every exact
target assignment is correct, including all four nonnominal worlds.
Nevertheless, none of the nonnominal cases meets the joint recovery
criterion. The bias therefore remains without sampling noise or target-label
errors. The true-readout diagnostic recovers all eighty exact
system--conductivity cases; its maximum relative dynamics error is
$1.02\times10^{-14}$.

\begin{table}[htbp]
\centering\small
\caption{Exact conditional mean responses with the nominal leadfield supplied
to the estimator. Each row contains the same sixteen paired anatomy--seed
cases. Joint recovery requires $e_F,e_z\le10\%$. Error columns are medians
across the sixteen cases and are expressed in percent. Conductivity is an
assumed physical parameter, not an estimated participant property.}
\label{tab:v6physical-exact}
\begin{tabular}{lrrrr}
\toprule
Skull conductivity (S/m) & Correct labels & Recovery & $e_F$ (\%) & $e_z$ (\%)\\
\midrule
0.0030 & 100\% & 0/16 & 26.63 & 34.66\\
0.0045 & 100\% & 0/16 & 11.17 & 15.26\\
0.0060 (nominal) & 100\% & 16/16 & $<10^{-10}$ & $<10^{-10}$\\
0.0090 & 100\% & 0/16 & 15.99 & 23.38\\
0.0120 & 100\% & 0/16 & 27.52 & 40.05\\
\bottomrule
\end{tabular}
\end{table}

An evaluation-only response-scale diagnostic projects each estimated history
onto its true target's history. Its median absolute scale error across modes
within a system, then systems, is 34.58\%, 11.70\%, approximately zero,
11.81\% and 17.63\% across the five worlds. Since the physical sweep changes
direction as well as scale, these measurements do not identify a purely
multiplicative cause of the dynamics error. They show that correct labels
do not guarantee correctly calibrated histories.

\subsection{Noisy responses and retained failures}

Table~\ref{tab:v6physical-noisy} reports the fixed-budget noisy results.
Using the true readout does not eliminate finite-sample failures: success
is thirteen or fourteen of sixteen cases, depending on the world. Using the
nominal readout gives no joint success outside the nominal world.

\begin{table}[htbp]
\centering\small
\caption{Noisy physical-readout sensitivity at 98,304 trial equivalents.
The same sixteen systems are paired across all rows and both readout
conditions. Errors are all-case medians in percent. ``True'' supplies the
generating leadfield to the unchanged blind estimator; it does not supply
targets, changed coefficients or held-out source states.}
\label{tab:v6physical-noisy}
\begin{tabular}{rrrrrrr}
\toprule
 & \multicolumn{3}{c}{Nominal readout} & \multicolumn{3}{c}{True readout}\\
\cmidrule(lr){2-4}\cmidrule(lr){5-7}
$c$ (S/m) & Recovery & $e_F$ (\%) & $e_z$ (\%) & Recovery & $e_F$ (\%) & $e_z$ (\%)\\
\midrule
0.0030 & 0/16 & 27.13 & 34.70 & 13/16 & 4.47 & 3.15\\
0.0045 & 0/16 & 12.61 & 15.76 & 13/16 & 3.64 & 2.45\\
0.0060 & 13/16 & 3.25 & 2.11 & 13/16 & 3.25 & 2.11\\
0.0090 & 0/16 & 16.61 & 23.27 & 14/16 & 2.92 & 1.79\\
0.0120 & 0/16 & 27.41 & 39.76 & 14/16 & 2.77 & 1.64\\
\bottomrule
\end{tabular}
\end{table}

One noisy nominal-readout estimate is invalid, for participant geometry 03,
seed 7101 at $c=0.003$. A corrupted response direction is assigned to a
repeated target, leaving coverage rank eleven. It remains in all-case
summaries. Noisy nominal-readout target accuracy is 127/128 at this
conductivity and 128/128 at every other conductivity. The systematic
recovery error therefore persists mostly without mislabeling, although noise
can also cause a discrete target error.

\subsection{Numerical audit and interpretation}

Independent numerical verification passed twenty physical-gain contraction
checks, recomputed all 320 fits and their metrics from saved response blocks,
regenerated all 160 acquisitions, and checked 2,560 intervention-level
response-scale statistics. Maximum discrepancies were
$5.69\times10^{-14}$ for gain contraction,
$7.22\times10^{-16}$ for fit metrics and
$6.67\times10^{-16}$ for scale diagnostics. All eighty exact true-readout
cases recovered, and shared source units and nominal/true-readout identity
at $c_0$ were verified.

Geometry-consistency checks use tolerances consistent with the precision
of the saved coordinate transforms. The maximum source-position discrepancy
is 1.85 nm. These checks do not alter the experimental conditions, source
units, response arrays or recovery thresholds.

The study establishes a boundary of the known-$L$ construction. Correct
anatomical labels and repeatable spatial directions need not give accurate
source-coordinate dynamics when the readout calibration is wrong. Supplying
the true simulated-world readout removes that mismatch in exact responses,
but is not a deployable calibration procedure. The study uses four reused
anatomies, twelve preselected sources, controlled source initialization and
one assumed conductivity family. It does not demonstrate whole-brain
localization, estimate actual human conductivity errors or recover neural
$F$ from recorded EEG. Proposition~\ref{prop:v6stability} assumes the correct
readout and must not be read as a robustness guarantee for this mismatch.

\section{Experimental interpretation and additional related work}
\label{app:v6applicability}

This appendix explains the experimental meaning of the information assumed by the recovery result and its relation to existing work. The inference target is source-coordinate dynamics under a specified perturbational model. Neither the presence of a stimulation event nor accurate prediction of its measured response establishes all the assumptions of that model.

\subsection{What the result means for source localization}

Our inference target is $F$ among predefined anatomical sources, not their unrestricted positions or a whole-brain map of arbitrary generators. The known forward model fixes locations, orientations and source units. Local response contrasts assign dynamical histories to those coordinates, after which informative temporal observations can support source-trajectory reconstruction. This distinguishes our question from both single-snapshot localization and learning a latent state-space model up to an arbitrary coordinate transformation.

Accurate sensor prediction, repeatable spatial patterns and correct source-coordinate dynamics are different outcomes. The simulations evaluate recovery against known dynamics and source trajectories. The recorded-current EEG analyses instead test spatial-response transfer and anatomical agreement in measured signals; they establish neither the proposed neural mechanism nor recovery of true neural $F$. These distinct scopes are retained when interpreting the experimental evidence in the main text.

\subsection{Experimental meaning of the assumptions}

The method concerns calibrated perturbation experiments rather than arbitrary passive recordings. Its practical interpretation is a set of information and measurement requirements, each with a distinct role.

The leadfield $L$ must describe the measured channels in the same source coordinates and units used by the dynamics. Knowing anatomical labels alone is insufficient, as the conductivity study shows. Similarly, $K$ specifies mean source-state patterns, not the physical settings of a stimulator. A known stimulation location, current or coil position does not establish the resulting $Ku$. The present recovery guarantee assumes both maps; it does not estimate an unknown neural initialization map from recorded EEG.

The term $D_ez_\tau$ depends on the current state and changes one source's receiving rule. The distinction between driving inputs and modulation of coupling is established in dynamic causal modelling \citep{friston2003}. Our theorem adds much narrower restrictions: only one receiving row changes, only one transition differs, and the same baseline dynamics resume afterwards. A focal additive pulse can also produce a rank-one subsequent response, so rank one alone cannot establish that a mechanism changed. Moreover, a local change in a continuous-time update rule does not generally remain a single-row change after evolution over a finite sampling interval. A biological description as ``local stimulation'' therefore does not by itself establish Eq.~\eqref{eq:v6row}.

In our simulations, one transition is a step of the discrete dynamical model; we do not assign it a physiologically calibrated duration in milliseconds. A physical implementation would need to specify this interval separately from the recording sampling period and establish that the perturbation affects only one model transition, after which baseline dynamics resume. Increasing the recording rate alone does not establish this property, because activity can propagate through the network while the perturbation is still acting.

Matched response contrasts require the same calibrated initializations, insertion times and observation windows under a shared baseline $F$. Arbitrary differences between sessions do not automatically provide $H^{[e]}-H_+$. The first post-transition sensor block also has a special role: it anchors the recovered direction to $Le_j$. If observations begin $d\ge1$ baseline steps later, that first available block is $LF^de_j$, not $Le_j$; the stated anatomical calibration cannot simply be applied unchanged to the delayed history.

\subsection{Relation to other perturbational recordings}

A controlled neuronal preparation offers a candidate setting for examining the intervention assumptions. Dynamic clamp computes injected current from recorded neuronal activity and can introduce artificial conductances or synaptic connections \citep{sharp1993dynamic}. Optogenetic experiments have also demonstrated modulation of neuronal response gain \citep{wilson2012division}. A candidate protocol would compare matched initializations with and without a brief state-dependent perturbation at one recorded site, followed by an unperturbed observation period. Direct recordings could help assess the affected state directions and calibrate initialization responses. Neither technique alone guarantees a single receiving-row change in the sampled dynamics. Moreover, simply suppressing a source that receives no baseline excitation need not reveal a missing direction: the perturbation must produce a nonzero response contrast.

Human intracranial stimulation provides a complementary application context. Simultaneous stereo-EEG (SEEG) and scalp EEG recordings measure local and distributed responses to electrical pulses \citep{parmigiani2022simultaneous}. Such recordings motivate anatomical interpretation of perturbation responses, but a known stimulation contact does not establish a calibrated neural initialization or a change in one receiving mechanism. A focal additive pulse can also generate a source-specific response history; this does not by itself validate the mechanism-change model studied here.

Transcranial magnetic stimulation (TMS) is a perturbation modality that can be combined with EEG, rather than an alternative readout. Paired-pulse TMS--EEG demonstrates conditioning-dependent responses and the importance of correcting for the conditioning pulse's own response \citep{premoli2014paired}. It provides physiological motivation for controlled response comparisons, but does not establish our source-level $K$ or single-transition receiving-row model. Early TMS--EEG measurements can contain substantial muscle artifacts \citep{mutanen2013artifacts}; discarding those samples may remove the very block used for our anatomical calibration. The current results consequently do not establish a ready-to-apply estimator for these recordings.

The algebraic reconstruction is not specific to electrical potentials: it uses a known linear observation map. Magnetoencephalography (MEG) is a closely related observation setting, while source-resolved optical perturbation studies illustrate the value of informative excitation \citep{zhang2010,wagenmaker2024active}. Nevertheless, changing modality does not remove the initialization, timing and coverage requirements. In particular, calcium indicators introduce temporal response dynamics, and group photostimulation is not generally a single-source receiving-mechanism change. Applicability to another physiological signal is conditional on its observation and intervention models satisfying the stated assumptions; it is not established merely by the presence of hidden sources and observable measurements.

\subsection{Additional related work}
\label{app:v6related}

Observational time series can identify linear latent dynamics under appropriate assumptions on mixing and non-Gaussian process noise \citep{zhang2011}. Unknown-target interventions identify linear causal representations under assumptions on mixing and intervention laws \citep{squires2023,acarturk2024}; broader results consider nonlinear mixing and general environments \citep{zhang2024,ng2025}. Higher-order cumulants address more latent variables than observations \citep{leyes2025}, and temporal causal representation learning also treats noninvertible observation maps \citep{chen2024}. Our construction uses known anatomy and controlled mean responses to identify dynamics in predefined source labels and units, with unknown local changes supplying otherwise missing source directions.

Classical realization constructs dynamics from input--output responses \citep{ho1966}; structured and switched identification study physical constraints and shared state coordinates \citep{yu2015,petreczky2010}. \citet{rajendran2024} vary control distributions in Gaussian linear systems; their full-system recovery result requires baseline responses to span the state space and retains coordinate ambiguities. Our construction extracts anatomically calibrated response columns directly from receiving-mechanism contrasts, without full baseline reachability. Finite-sample realization analyses \citep{oymak2019,sun2020} address statistical estimation; our stability analysis instead bounds deterministic perturbations conditional on correct anatomical labels.

Causal EEG and MEG analysis connects source separation to directed mechanisms \citep{zhang2010}. However, source reconstruction alone does not establish valid interaction estimates: simulations have demonstrated spurious connectivity under volume conduction and noise for conventional sensor-space and source-space analyses \citep{haufe2013critical}. Dynamic causal modelling distinguishes direct driving inputs from input-dependent modulation of neuronal coupling \citep{friston2003}. Our receiving-row change belongs conceptually to the modulation class, but imposes a more restrictive locality and timing model. We establish when such a change reveals an anatomically identifiable response history and when those histories determine the baseline dynamics. Neither the idea of mechanism modulation nor source localization by itself is the contribution.

\citet{wagenmaker2024active} use two-photon holographic optogenetics and calcium imaging to study informative stimulation patterns for neural population dynamics. They evaluate known dynamics in a data-fitted simulator and predictive performance on real recordings, where the true dynamics are unavailable. Their stimulus enters as an additive input, rather than our unknown receiving-row change. Together with the conditioning-dependent response comparisons in paired-pulse TMS--EEG \citep{premoli2014paired}, this work motivates the importance of informative perturbations and matched comparisons. These studies do not establish the calibrated initialization map or single-row, single-transition assumptions used here. Accordingly, our simulations evaluate source-coordinate recovery, whereas our recorded-current analyses evaluate spatial-response transfer and anatomical calibration.

The practical lesson is to distinguish missing excitation, weak observation and calibration error. Exposed source directions address the first, informative temporal measurements address the second, and repetition reduces response noise without resolving systematic model mismatch. These distinctions remain relevant even for an invertible leadfield. Our evidence establishes recovery and its boundaries within the specified perturbational model, with a separate measurement-level assessment on recorded-current EEG. It does not establish neural connectivity recovery from arbitrary biological recordings, nor an unconditional statistical guarantee covering target selection and numerical-rank decisions.

\end{document}